
\documentclass[10pt]{article} 
\usepackage[preprint]{rlc}

\renewcommand{\P}[1]{\mathbb{P}\left(#1 \right)} 
\newcommand{\E}[1]{\mathbb{E}\left[ #1 \right]} 
\newcommand{\N}{\mathbb{N}} 
\renewcommand{\S}{\mathbb{S}} 

\definecolor{warningcol}{rgb}{.99,.1,.5}
\definecolor{todocol}{rgb}{.4,.4,.8}
\definecolor{sketchcol}{rgb}{.4,.4,.8}
\definecolor{outlinecol}{rgb}{.8,.4,.3}


\renewcommand{\E}{\mathbb{E}}

\newcommand{\pbr}[1]{\left( #1\right)}
\newcommand{\sbr}[1]{\left[ #1\right]}
\newcommand{\cbr}[1]{\left\{ #1\right\}}


\makeatother

\renewcommand{\S}{\mathcal{S}}
\renewcommand{\P}{\mathbb{P}}

\usepackage{algorithm}
\usepackage{algpseudocode}
\usepackage{mathtools}
\usepackage[normalem]{ulem}



\newcommand\footnoteref[1]{\protected@xdef\@thefnmark{\ref{#1}}\@footnotemark}

\usepackage{booktabs}
\usepackage{amssymb}            
\usepackage{mathtools}          
\usepackage{amsthm}
\usepackage{mathrsfs}   

\mathtoolsset{showonlyrefs}     
\usepackage{graphicx}           
\usepackage{subfigure}         
\usepackage[space]{grffile}     
\usepackage{url}                
\usepackage{graphicx} 


\DeclarePairedDelimiter\abs{\lvert}{\rvert}
\DeclarePairedDelimiter\norm{\lVert}{\rVert}
\DeclarePairedDelimiter\innorm{\langle}{\rangle}

\DeclareMathOperator{\R}{\mathbb{R}} 

\DeclareMathOperator*{\argmin}{arg\,min}

\DeclareMathOperator{\St}{\mathcal{S}}
\DeclareMathOperator{\A}{\mathcal{A}}
\DeclareMathOperator{\X}{\mathcal{X}}


\newcommand{\Rc}{\mathcal{R}}



\newcommand\blfootnote[1]{%
  \begingroup
  \renewcommand\thefootnote{}\footnote{#1}%
  \addtocounter{footnote}{-1}%
  \endgroup
}
\usepackage{bbm}

\newtheorem{assumption}{Assumption}
\newtheorem{lemma}{Lemma}
\newtheorem{theorem}{Theorem}

\title{ On the Global Convergence of Policy Gradient in Average Reward Markov Decision Processes}

\author{Navdeep Kumar*  \\
    navdeepkumar@campus.technion.ac.il \\
    Electrical and Computer Engineering\\
    Technion - Israel Institute of Technology
    \And
    Yashaswini Murthy* \\
    ymurthy2@illinois.edu\\
    ECE \& CSL\\
    University of Illinois Urbana-Champaign
    \And
    Itai Shufaro \\
    itai.shufaro@campus.technion.ac.il\\
    Electrical and Computer Engineering\\
    Technion - Israel Institute of Technology
    \And
    Kfir Y. Levy \\
    kfirylevy@technion.ac.il\\
    Electrical and Computer Engineering\\
    Technion - Israel Institute of Technology
    \And
    R. Srikant \\
    rsrikant@illinois.edu\\
    ECE \& CSL\\
    University of Illinois Urbana-Champaign
    \And 
    \qquad \qquad \ \ 
    Shie Mannor \\
    \qquad \qquad \ \ \ \ shie@ee.technion.ac.il\\
    \ \qquad \qquad \ \ \ Electrical Engineering \\
    \ \qquad \qquad \ \ \ Technion - Israel Institute of Technology \\
    \ \qquad \qquad \ \ \ NVIDIA Research
    }


\usepackage[toc,page,header]{appendix}
\usepackage{minitoc}


\begin{document}
\doparttoc 
\faketableofcontents 

\part{} 

\maketitle

\begin{abstract}
    We present the first finite time global convergence analysis of policy gradient in the context of infinite horizon average reward Markov decision processes (MDPs). Specifically, we focus on ergodic tabular MDPs with finite state and action spaces. Our analysis shows that the policy gradient iterates converge to the optimal policy at a sublinear rate of $O\pbr{\frac{1}{T}},$ which translates to $O\pbr{\log(T)}$ regret, where $T$ represents the number of iterations. Prior work on performance bounds for discounted reward MDPs cannot be extended to average reward MDPs because the bounds grow proportional to the fifth power of the effective horizon. Thus, our primary contribution is in proving that the policy gradient algorithm converges for average-reward MDPs and in obtaining finite-time performance guarantees. In contrast to the existing discounted reward performance bounds, our performance bounds have an explicit dependence on constants that capture the complexity of the underlying MDP. Motivated by this observation, we reexamine and improve the existing performance bounds for discounted reward MDPs. We also present simulations to empirically evaluate the performance of average reward policy gradient algorithm. 
\end{abstract}

\section{Introduction}\blfootnote{*Equal contribution}
Average reward Markov Decision Processes (MDPs) find applications in various domains where decisions need to be made over time to optimize long-term performance. Some of these applications include resource allocation, portfolio management in finance, healthcare and robotics \citep{ghalme2021long,bielecki1999value,patrick2011markov,mahadevan1996average,tadepalli1998model}. Approaches for determining the optimal policy can be broadly categorized into dynamic programming algorithms (such as value and policy iteration\citep{murthy2024performance,abbasi2019politex,gosavi2004reinforcement}) and gradient-based algorithms. Although gradient based algorithms are heavily used in practice\citep{schulman2015trust,baxter2000direct}, the theoretical analysis of their global convergence is a relatively recent undertaking.

While extensive research has been conducted on the global convergence of policy gradient methods in the context of discounted reward MDPs\citep{agarwal2020theory,khodadadian2021linear}, comparatively less attention has been given to its average reward counterpart. Contrary to average reward MDPs, the presence of a discount factor ($\gamma<1$) serves as a source of contraction that alleviates the technical challenges involved in analyzing the performance of various algorithms in the context of discounted reward MDPs. Consequently, many algorithms designed for average reward MDPs are evaluated using the framework of discounted MDPs, where the discount factor approaches one\citep{grand2024reducing}. 

In the context of discounted reward MDPs, the projected policy gradient algorithm converges as follows:
\begin{equation}
    \rho^*_\gamma - \rho_\gamma^{\pi_k}\leq O\pbr{\frac{1}{\pbr{1-\gamma}^5}}
    \label{eq1}
\end{equation}
where $\gamma$ is the discount factor, $\rho^*_\gamma$ represents the optimal value function and $\rho_\gamma^{\pi_k}$ represents the value function iterates obtained through projected gradient ascent \citep{xiao2022convergence}. Let $\rho^\pi$ denote the average reward associated with some policy $\pi$. It is well known that $\rho^\pi=\lim_{\gamma\to 1}(1-\gamma)\rho_\gamma^\pi$ under some mild conditions\citep{Puterman1994MarkovDP,bertdimitri07book}. Utilizing this relationship in \eqref{eq1}, we observe the upper bound tends to infinity in the limit $\gamma\to 1$. Hence, it is necessary to devise an alternate approach to study the convergence of policy gradient in the context of average reward MDPs. 

\subsection{Related Work}

\citep{fazel2018global} were among the first to establish the global convergence of policy gradients, specifically within the domain of linear quadratic regulators. \citep{bhandari2024global} established a connection between the policy gradient and policy iteration objectives, determining conditions under which policy gradient algorithms converge to the globally optimal solution. \citep{agarwal2020theory} offer convergence bounds of $O(\frac{1}{(1-\gamma)^6})$ for policy gradient and $O(\frac{1}{(1-\gamma)^2})$ for natural policy gradient. It is noteworthy that while their convergence bounds for policy gradient rely on the cardinality of the state and action space, the convergence bounds for natural policy gradient are independent of them. \citep{xiao2022convergence} enhances the $O(\frac{1}{(1-\gamma)^6})$ policy gradient bounds by refining the dependency on the discount factor, yielding improved bounds of $O(\frac{1}{(1-\gamma)^5})$. \citep{zhang2020variational} prove that variance reduced versions of both policy gradient and natural policy gradient algorithms also converge to the global optimal solution. \citep{mei2020global} analyze global convergence of softmax based gradient methods and prove exponential rejection of suboptimal policies. However, all of these studies have focused on discounted reward MDPs.

In the context of average reward MDPs, local convergence properties of policy gradient have been extensively studied. \citep{AvgRewardKonda1999BellmanOperator} explore two time-scale actor-critic algorithms with projected gradient ascent updates for the actor. They employ linear function approximation to represent the value function and establish asymptotic local convergence of the average reward to a stationary point. \citep{bhatnagar2009natural} consider four different types of gradient based methods, including natural policy gradients and prove their asymptotic local convergence using an ODE-style analysis. The global convergence of natural policy gradient for average reward MDPs has been explored by \citep{even2009online} and \citep{murthy2023convergence} for finite state and action spaces, while \citep{grosof2024convergence} has addressed the case of infinite state space. The only work known to us addressing the global convergence of policy gradient in average reward MDPs is \citep{bai2023regret}. While they investigate parameterized policies in a learning scenario, their analysis hinges on the crucial assumption of the average reward being smooth with respect to the policy parameter. However, it was unclear as to what type of average reward MDPs satisfy such an assumption. Our primary contribution here is to prove the smoothness of the average reward holds for a very large class of MDPs. in the tabular setting. Additionally, their work offers a regret of $O(T^{1/4})$, while we achieve a sublinear regret of $O(\log(T))$.

\subsection{Contributions}
In this subsection, we outline the key contributions of this paper.
\begin{itemize}
    \item \textbf{Smoothness Analysis of Average Reward:} We introduce a new analysis technique to prove the smoothness of the average reward function with respect to the underlying policy. One of the key difficulties in proving such a result is the lack of uniqueness of the value function in average-reward problems, We overcome this difficulty by using a projection technique to ensure the uniqueness of the function and using the properties of the projection to prove smoothness.
    \item \textbf{Sublinear Convergence Bounds:} Using the above smoothness property, we present finite time bounds on the optimality gap over time, showing that the iterates approach the optimal policy with an overall regret of $O\pbr{\log{(T)}}.$ In contrast, the regret bounds in \citep{bai2023regret} are atleast $O\pbr{T^{\frac{1}{4}}}$ without learning error and with tabular parametrization. In place of the discount factor and the cardinality of the state and action spaces in the discounted setting, our finite-time performance bounds involves a different parameter which characterizes the complexity of the underlying MDP.
    \item \textbf{Extension to Discounted Reward MDPs:} Our analysis can also be applied to the discounted reward MDP problem to provide stronger results than the state of the art. In particular, we show that our performance bounds for discounted MDPs can be expressed in terms of a problem complexity parameter, which can be independent of the size of the state and action spaces in some problems.
    \item \textbf{Experimental Validation:} We simulate the performance of policy gradient across a simple class of MDPs to empirically evaluate its performance. The simulations show that, in many cases, our bounds are independent of the size of the state and action spaces.
\end{itemize}

\section{Preliminaries}
In this section, we introduce our model, address the limitations of applying the optimality gap bounds from discounted reward MDPs to the average reward scenario, present the gradient ascent update, and discuss the assumptions underlying our analysis. 

\subsection{Average Reward MDP Formulation}
We consider the class of infinite horizon average reward MDPs with finite state space $\S$ and finite action space $\A$. The environment is modeled as a probability transition kernel denoted by $\P$. We consider a class of randomized policies $\Pi=\{\pi:\S\to\Delta\A\}$, where a policy $\pi$ maps each state to a probability vector over the action space. The transition kernel corresponding to a policy $\pi$ is represented by $\P^\pi:\S\to\S$, where $\P^\pi(s'|s) = \sum_{a\in\A}\pi(a|s)\P(s'|s,a)$ denotes the single step probability of moving from state $s$ to $s'$ under policy $\pi.$ Let $r(s,a)$ denote the single step reward obtained by taking action $a\in\A$ in state $s\in\S$. The single-step reward associated with a policy $\pi$ at state $s\in\S$ is defined as $r^\pi(s)=\sum_{a\in\A}\pi(a|s)r(s,a).$

The infinite horizon average reward objective $\rho^\pi$ associated with a policy $\pi$ is defined as:
\begin{align}\label{def:averageReward}
\rho^\pi = \lim_{N\to\infty}\frac{\mathbb{E}_{\pi}\left[\sum_{n=0}^{N-1}{r^\pi(s_n)}\right]}{{N}},
\end{align}
where the expectation is taken with respect to $\P^\pi$. The average reward is independent of the initial state distribution under some mild conditions \citep{ros83book,bertdimitri07book} and can be alternatively expressed as $ \rho^\pi = \sum_{\substack{s\in\S}}d^\pi(s)r^\pi(s),$
where $d^\pi(s)$ is the stationary measure corresponding to state $s$ under the transition kernel $\P^\pi$, ensuring that $d^\pi$ satisfies the equation $d^\pi \P^\pi = d^\pi$. Associated with a policy is a relative state value function $v^\pi\in\R^{|\S|}$ that satisfies the following average reward Bellman equation 
\begin{align}\label{bellmaneq}
    \rho^\pi\mathbbm{1} + v^\pi = r^\pi + \P^\pi v^\pi,
\end{align}
where $\mathbbm{1}$ is the all ones vector \citep{Puterman1994MarkovDP,bertdimitri07book}. Note that $v^\pi$ is unique up to an additive constant. Setting $\sum_{s\in\S}d^\pi(s)v^\pi(s)=0$ imposes an additional constraint over $v^\pi$, providing a unique value function vector denoted by $v^\pi_0$, known as the basic differential reward function \citep{tsitsiklis1999average}. It can be shown that $v_0^\pi$ can alternatively expressed as $v^\pi_0(s)=\E_\pi\sbr{\sum_{n=0}^\infty\pbr{r^\pi(s_n)-\rho^\pi}|s_0=s}$. Hence any element in the set $\{{v}_0^\pi +c\mathbbm{1}: c\in\R\}$ is a solution $v^\pi$ to the Bellman equation \eqref{bellmaneq}.

The relative state action value function $Q^\pi\in\R^{\S\times\A}$ associated with a policy $\pi$ is defined as: 
\begin{align}\label{qfunction}
    Q^\pi(s,a) = r(s,a) + \sum_{\substack{s'\in\S\\a'\in\A}}\P\pbr{s'|s,a}\pi(a'|s') Q^\pi(s',a') - \rho^\pi \qquad \forall \pbr{s,a}\in\S\times\A
\end{align}
Similar to $v^\pi$, $Q^\pi$ is also unique up to an additive constant. Analogously, every solution $Q^\pi$ of \eqref{qfunction} can be expressed as an element in the set $\cbr{Q_0^\pi(s,a)+c\mathbbm{1}: c\in\R}$ where $Q_0^\pi(s,a) = \E_\pi\sbr{\sum_{n=0}^\infty\pbr{r^\pi(s_n)-\rho^\pi}|s_0=s,a_0=a}$. Upon averaging \eqref{qfunction} with policy $\pi$, it follows that $v^\pi(s)=\sum_{a\in\A}\pi(a|s)Q^\pi(s,a).$ 
The average reward policy gradient theorem \citep{Sutton1998} for policies parameterized by $\theta$ is given by:
\begin{equation}
    \frac{\partial\rho}{\partial\theta} = \sum_{s\in\S}d^\pi(s)\sum_{a\in\A}\frac{\partial\pi(s,a)}{\partial\theta}Q^\pi(s,a)
\end{equation}
As we focus on tabular policies in this paper, our parameterization aligns with the tabular policy, where $\theta$ is equivalent to $\pi.$ The policy gradient update considered is defined below. 
\begin{equation}\label{avg:pg}
     \pi_{k+1} :=\textbf{Proj}_{\Pi}\sbr{\pi_k +\eta\frac{\partial\rho^\pi}{\partial\pi}\Bigg|_{\pi=\pi_k}}\qquad \forall k\geq 0,  
\end{equation}
where $\textbf{Proj}_{\Pi}$ denotes the orthogonal projection in the Euclidean norm onto the space of randomized policies $\Pi$ and $\eta$ denotes the step size of the update. 
In the following subsection, we recall the policy gradient result within the framework of discounted reward MDPs and address why it cannot be directly applied to the average reward scenario.

\subsection{Relationship to discounted reward MDPs}
\textcolor{black}{Let $\rho^\pi_{\mu,\gamma}:= \mu^T(1-\gamma \P^\pi)r^\pi$} represent the discounted reward value function associated with policy $\pi$, under the initial distribution $\mu\in\Delta\S$ and where $\gamma$ represents the discount factor\citep{bertdimitri07book}. Consider the projected policy gradient update given below.
\begin{equation}\label{disc:pg}
    \pi_{k+1} :=\textbf{Proj}_{\Pi}\sbr{\pi_k +\eta\frac{\partial\rho_{\mu,\gamma}^\pi}{\partial\pi}\Bigg|_{\pi=\pi_k} }\qquad \forall k\geq 0.  
\end{equation}
When the step size $\eta=\frac{\pbr{1-\gamma}^3}{2\gamma|\A|}$, the iterates $\pi_k$ generated from projected gradient ascent \eqref{disc:pg} satisfy the following equation:
\begin{equation}\label{disc:perfbound}
   \rho^{*}_{\mu,\gamma} -  \rho^{\pi_k}_{\mu,\gamma} \leq \frac{128|\S||\A|}{k(1-\gamma)^5} \Bigg\|\frac{d_\mu(\pi^*)}{\mu}\Bigg\|_\infty^2,
\end{equation}
where $\rho^*_{\mu,\gamma}$ represents the optimal value function under initial distribution $\mu$, and \textcolor{black}{$d_{\mu,\gamma}^{\pi^*} :=(1-\gamma)\mu^T(1-\gamma \P^{\pi^*})^{-1} $ represents the state occupancy measure under optimal policy $\pi^*$.} Under some mild conditions the average reward $\rho^\pi$ associated with a policy $\pi$ and the value function $\rho^{\pi}_{\mu,\gamma}(s)$ are related as below:
\begin{equation}\label{avg-disc}
    \rho^\pi = \lim_{\gamma\to 1}(1-\gamma)\rho^{\pi}_{\mu,\gamma}(s).
\end{equation}
Note that the above relation\citep{bertdimitri07book,ros83book} holds for all $s\in\S$ and all $\mu\in\Delta\S$ since the average reward is independent of the initial state distribution. Upon leveraging the relation in \eqref{avg-disc} and multiplying \eqref{disc:perfbound} with $(1-\gamma)$, it is apparent that the upper bound of \eqref{disc:perfbound} in the limit of $\gamma\to 1$ tends to infinity. This is due to $\pbr{1-\gamma}^4$ that remains  in the denominator of $\eqref{disc:perfbound}$ upon multiplying with $(1-\gamma)$. Therefore it is necessary to devise an alternative proof technique in order to analyze the global convergence of policy gradient in the context of average reward MDPs. Prior to presenting the main result and its proof, we state the assumption used in our analysis.

\begin{assumption}
    For every policy $\pi\in\Pi,$ the transition matrix $\P^\pi$ associated with the induced Markov chain is irreducible and aperiodic. This assumption also means that there exist constants $C_e<\infty$ and $\lambda\in [0,1)$ such that for any $k\in\N$ and any $\pi\in\Pi$, the Markov chain corresponding to $\P^\pi$ is geometrically ergodic i.e., $\|\pbr{\P^\pi}^k-\mathbbm{1}\pbr{d^\pi}^\top\|_\infty\leq C_e\lambda^k.$
\end{assumption}

For simplicity, we will also assume $|r(s,a)|\in\sbr{0,1} \forall (s,a)\in(\S\times\A).$
We now present the main result of the paper. 

\section{Main Results}
\begin{theorem}\label{maintheorem}
    Let $\rho^{\pi_k}$ be the average reward corresponding to the policy iterates $\pi_k$, obtained through the policy gradient update \eqref{avg:pg}. Let $\rho^*$ represent the optimal average reward, that is, $\rho^*=\max_{\pi\in\Pi}\rho^\pi$. There exist constants $L_2^\Pi$ and $C_{PL}$ which are determined by the underlying MDP such that the following is true,
    \begin{equation}
        \rho^*-\rho^{\pi_{k+1}}\leq\max\pbr{\frac{128L_2^\Pi C_{PL}^2}{k},2^{-\frac{k}{2}}\pbr{\rho^*-\rho^{\pi_0}}}.
    \end{equation}
\end{theorem}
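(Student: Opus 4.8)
The plan is to prove this convergence result by combining two standard ingredients from the analysis of projected gradient ascent on smooth objectives: a smoothness (descent-type) inequality and a gradient-domination (Polyak-\L ojasiewicz-style) inequality.

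First I would establish that the average reward $\rho^\pi$ is $L_2^\Pi$-smooth as a function of the policy $\pi$, viewed as a point in the Euclidean space $\R^{|\S||\A|}$. This is flagged in the contributions as the technical heart of the paper, and I expect it to be the main obstacle: the difficulty is that the relative value function $v^\pi$ (and $Q^\pi$) solving the Bellman equation \eqref{bellmaneq} is only unique up to an additive constant, so $\partial\rho^\pi/\partial\pi$ and its Lipschitz bounds are not obviously well-defined. I would resolve this exactly as the authors hint, by pinning down the basic differential value function $v_0^\pi$ via the normalization $\sum_s d^\pi(s)v^\pi(s)=0$, then differentiating the system $(I-\P^\pi)v_0^\pi = r^\pi - \rho^\pi\mathbbm{1}$ together with $d^\pi\P^\pi = d^\pi$. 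The geometric ergodicity from the Assumption (with constants $C_e,\lambda$) is what makes the relevant pseudo-inverse operators bounded uniformly over $\Pi$, yielding a finite smoothness constant $L_2^\Pi$ and the boundedness of the gradient. This step is where the MDP-complexity constant enters, replacing the $(1-\gamma)$ factors of the discounted analysis.

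Second, I would establish the gradient-domination inequality: there is a constant $C_{PL}$ such that $\rho^* - \rho^\pi \le C_{PL}\,\bigl\|\textbf{Proj}_{\Pi}\bigl[\pi + \partial\rho^\pi/\partial\pi\bigr] - \pi\bigr\|$, i.e. the optimality gap is controlled by the norm of the gradient mapping. The standard route is the performance difference lemma for average reward MDPs, which expresses $\rho^{\pi^*} - \rho^\pi = \sum_s d^{\pi^*}(s)\sum_a \bigl(\pi^*(a|s)-\pi(a|s)\bigr)Q^\pi(s,a)$, combined with the variational inequality characterizing the projection. The mismatch/distribution-ratio term (analogous to $\|d_\mu(\pi^*)/\mu\|_\infty$ in \eqref{disc:perfbound}) gets absorbed into $C_{PL}$, again finite by ergodicity rather than by a discount factor.

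Finally, with smoothness giving a per-step ascent guarantee of the form $\rho^{\pi_{k+1}} - \rho^{\pi_k} \ge c\,\|G(\pi_k)\|^2$ (where $G$ is the gradient mapping and $c$ depends on $L_2^\Pi$ and the step size $\eta$), and gradient domination giving $\|G(\pi_k)\| \ge (\rho^*-\rho^{\pi_k})/C_{PL}$, I would define the optimality gap $\delta_k := \rho^* - \rho^{\pi_k}$ and derive the recursion $\delta_{k+1} \le \delta_k - c'\,\delta_k^2$ for a constant $c'$ of order $1/(L_2^\Pi C_{PL}^2)$. The sublinear $O(1/k)$ bound $\delta_{k}\le 128 L_2^\Pi C_{PL}^2/k$ follows from the standard lemma that such a quadratic recursion decays at rate $1/k$; I would choose the step size $\eta$ (as a function of $L_2^\Pi$) to make the constants line up. The $\max$ with the geometric term $2^{-k/2}(\rho^*-\rho^{\pi_0})$ in the statement reflects the usual two-phase behavior of these recursions: when $\delta_k$ is large the quadratic term dominates and gives fast (geometric) contraction, while once $\delta_k$ is small the recursion crosses over to the slower $1/k$ regime, and I would split the analysis at the crossover index accordingly.
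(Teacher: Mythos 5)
Your outline matches the paper's architecture — restricted smoothness of $\rho^\pi$ over the policy simplex, a performance-difference/gradient-domination step with $C_{PL}=\max_{\pi,s} d^{\pi^*}(s)/d^\pi(s)$, a sufficient-increase inequality from smoothness, and a quadratic recursion solved at rate $1/k$ with a geometric branch supplying the $\max$ — so the overall route is the same. Two deviations are worth flagging. First, to make the value function unique you propose the policy-dependent normalization $\sum_s d^\pi(s)v^\pi(s)=0$ (the basic differential value function $v_0^\pi$); the paper instead projects onto $\{v:\mathbbm{1}^\top v=0\}$ via the \emph{fixed} matrix $\Phi=I-\mathbbm{1}\mathbbm{1}^\top/|\S|$, giving the closed form $v^\pi_\phi=(I-\Phi\P^\pi)^{-1}\Phi r^\pi$. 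Because $\Phi$ does not depend on $\pi$, every derivative falls only on $\P^\pi$ and $r^\pi$ (which are linear in $\pi$), and the smoothness computation reduces to bounding products of $(I-\Phi\P^\pi)^{-1}$, $\P^u$, and $r^u$ via the constants $C_m,C_p,C_r,\kappa_r$; your normalization would additionally require differentiating $d^\pi$ and controlling those derivatives uniformly over $\Pi$, which is workable under the same ergodicity assumption but substantially messier. This policy-independent projection is the paper's actual key device, and the proposal glosses over it. Second, your recursion $\delta_{k+1}\le\delta_k-c'\delta_k^2$ is not the one the paper derives: the projection (variational) inequality bounds the linearized gap at the \emph{new} iterate $\pi_{k+1}$ by $\norm{\pi_{k+1}-\pi_k}$, so combining with the ascent inequality $\rho^{\pi_{k+1}}-\rho^{\pi_k}\ge\tfrac{L_2^\Pi}{2}\norm{\pi_{k+1}-\pi_k}^2$ yields $(\rho^*-\rho^{\pi_{k+1}})^2\le 32 C_{PL}^2 L_2^\Pi|\S|\,(\rho^{\pi_{k+1}}-\rho^{\pi_k})$, i.e. $a_{k+1}^2+a_{k+1}-a_k\le 0$ after rescaling — quadratic in the new gap, not the old one. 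That form requires no smallness condition on the initial gap, and the bound $\max(4/K,\,2^{-K/2}a_0)$ drops out of a telescoping argument with a case split on whether at least half the ratios $a_{k+1}/a_k$ exceed $1/2$; your form would need an extra argument in the regime $c'\delta_0>1$. (Incidentally, the paper's own proof produces an extra factor of $|\S|$ in the $1/k$ branch that the main-text statement omits.) Neither deviation is fatal, but as written the proposal misses the one trick that makes the smoothness analysis tractable.
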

The proof of the theorem and all the supporting lemmas can be found in the Appendix. 

\subsection{Key Ideas and Proof outline}
A similar result was proved for discounted reward MDPs in \citep{agarwal2020theory,PGConvRate}. An important property pivotal to the global convergence analysis of the projected policy gradient is the smoothness of the discounted reward value function. Demonstrating the smoothness of the discounted reward value function is relatively straightforward due to the contractive properties of the discount factor. However, this poses a significant challenge in the context of average reward MDPs. Here, the absence of a discount factor as a source of contraction, coupled with the lack of uniqueness in the average reward value function, complicates the task of proving the smoothness of the average reward.  Therefore, the first important property we prove is the smoothness of average reward. 

\subsubsection{Smoothness of average reward}

A differentiable function $f:\mathcal{C}\to\R$ is called $L$-smooth if it satisfies
\begin{equation}
    \lVert\nabla f(y)-\nabla f(x)\rVert_2 \leq L\lVert y-x\rVert_2 \qquad \forall y,x\in \mathcal{C}.
\end{equation}
where $\mathcal{C}$ is some subset of $\R^n$. Further is the function is $L$-smooth, it satifies the following property.
\begin{equation}
    \Bigg\lvert f(y)-f(x)- \innorm{\nabla f(x),y-x}\Bigg\rvert\leq \frac{L}{2}\norm{y-x}^2\qquad \forall y,x\in \mathcal{C},
\end{equation}

From the above definition, it is apparent that if $f$ is $L$-smooth then $cf$ is $|c|L$-smooth for any $c\in\R$ \citep{boyd2004convex}. It can be shown that the infinite horizon discounted reward $V^\pi_{\mu,\gamma}$ is $\frac{2\gamma|\A|}{(1-\gamma)^3}$-smooth. Leveraging the result in \eqref{avg-disc}, one can see that the smoothness constant of $\rho^\pi$ is $\lim_{\gamma\to 1}\frac{2\gamma|\A|}{(1-\gamma)^2}\to\infty.$ Hence, the smoothness of the discounted reward cannot be leveraged to show the smoothness of the average reward. 

In this paper, we establish the smoothness of the infinite horizon average reward by first establishing the smoothness of the associated relative value function. We then leverage the average reward Bellman Equation \eqref{bellmaneq} to establish the smoothness of the average reward in terms of the smoothness of the relative value function. However, since the relative value function is unique up to an additive constant, we consider the projection of the value function onto the subspace orthogonal to the $\mathbbm{1}$ vector. This provides us with an unique representation of the value function whose smoothness can be evaluated. 

 Let $\Phi\in\R^{|\S|\times|\S|}$ be the projection matrix that maps any vector to its orthogonal projection in the euclidean norm onto the subspace perpendicular to the $\mathbbm{1}$ vector. Then the following lemma holds. 

\begin{lemma}
    Let $I$ be the identity matrix and $\mathbbm{1}$ be the all ones vector, both of dimension $|\S|$. Then the orthogonal projection matrix is given by $\Phi = \pbr{I-\frac{\mathbbm{1}\mathbbm{1}^\top}{S}}$. The unique value function $v^\pi_\phi$ is obtained as a solution to the following fixed point equation,
    \begin{equation}
        v^\pi_\phi = \Phi \pbr{r^\pi + \P^\pi v^\pi_\phi  -\rho^\pi\mathbbm{1}}
    \end{equation}
    and can be alternatively represented as
    \begin{equation}\label{proj:valfunc}
        v^\pi_\phi = \pbr{I-\Phi\P^\pi}^{-1}\Phi r^\pi.
    \end{equation}
\end{lemma}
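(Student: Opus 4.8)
The plan is to establish the three claims in sequence: the closed form of the projector $\Phi$, the fixed-point characterization of $v^\pi_\phi$, and the resolvent formula \eqref{proj:valfunc}, with the only nontrivial input being the invertibility of $I-\Phi\P^\pi$.

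First I would verify that $\Phi = I - \frac{\mathbbm{1}\mathbbm{1}^\top}{S}$ is the orthogonal projector onto $\mathbbm{1}^\perp$. Since $\mathbbm{1}^\top\mathbbm{1}=S$, a one-line computation gives $\Phi^\top=\Phi$ and $\Phi^2 = I - 2\frac{\mathbbm{1}\mathbbm{1}^\top}{S} + \frac{\mathbbm{1}(\mathbbm{1}^\top\mathbbm{1})\mathbbm{1}^\top}{S^2} = \Phi$, so $\Phi$ is a symmetric idempotent; moreover $\Phi\mathbbm{1}=0$ and $\Phi x = x$ for $x\perp\mathbbm{1}$, identifying $\mathrm{range}(\Phi)=\mathbbm{1}^\perp$ and $\ker(\Phi)=\mathrm{span}(\mathbbm{1})$. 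I then define $v^\pi_\phi := \Phi v^\pi$ for any solution $v^\pi$ of the Bellman equation \eqref{bellmaneq}. This is well-posed: since $v^\pi$ is unique only up to an additive multiple of $\mathbbm{1}$ and $\Phi$ annihilates $\mathrm{span}(\mathbbm{1})$, the vector $\Phi v^\pi$ does not depend on the chosen representative, and by construction $v^\pi_\phi\in\mathbbm{1}^\perp$, which is exactly the normalization selecting the unique representative.

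Next I would derive the fixed-point equation. Rearranging \eqref{bellmaneq} gives $v^\pi = r^\pi + \P^\pi v^\pi - \rho^\pi\mathbbm{1}$, and applying $\Phi$ yields $v^\pi_\phi = \Phi(r^\pi + \P^\pi v^\pi - \rho^\pi\mathbbm{1})$. The key simplification is to replace $v^\pi$ by $v^\pi_\phi$ inside the $\P^\pi$ term: writing $v^\pi = v^\pi_\phi + c\mathbbm{1}$ with $c=\frac1S\mathbbm{1}^\top v^\pi$, and using that $\P^\pi$ is row-stochastic so $\P^\pi\mathbbm{1}=\mathbbm{1}$, we obtain $\Phi\P^\pi v^\pi = \Phi\P^\pi v^\pi_\phi + c\,\Phi\mathbbm{1} = \Phi\P^\pi v^\pi_\phi$. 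This gives the stated equation $v^\pi_\phi = \Phi(r^\pi + \P^\pi v^\pi_\phi - \rho^\pi\mathbbm{1})$ (and since $\Phi\mathbbm{1}=0$, the last term is in fact redundant). Collecting terms then gives $(I-\Phi\P^\pi)v^\pi_\phi = \Phi r^\pi$, so \eqref{proj:valfunc} follows immediately once $I-\Phi\P^\pi$ is invertible.

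The invertibility of $I-\Phi\P^\pi$ is the main obstacle, and it is where Assumption~1 enters. I would show the null space is trivial. Suppose $x=\Phi\P^\pi x$; then $x\in\mathrm{range}(\Phi)=\mathbbm{1}^\perp$, and unwinding $\Phi$ gives $\P^\pi x = x + c\mathbbm{1}$ with $c=\frac1S\mathbbm{1}^\top\P^\pi x$. Iterating and using $\P^\pi\mathbbm{1}=\mathbbm{1}$ yields $(\P^\pi)^n x = x + nc\,\mathbbm{1}$ for all $n$. By geometric ergodicity $(\P^\pi)^n\to\mathbbm{1}(d^\pi)^\top$, so the left-hand side converges to the constant vector $\big((d^\pi)^\top x\big)\mathbbm{1}$, while the right-hand side converges only if $c=0$. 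With $c=0$ we have $\P^\pi x = x$, so $x$ lies in the eigenspace of eigenvalue $1$, which is one-dimensional and equal to $\mathrm{span}(\mathbbm{1})$ by irreducibility; together with $x\perp\mathbbm{1}$ this forces $x=0$. Hence $I-\Phi\P^\pi$ is invertible and \eqref{proj:valfunc} holds. The only point to state carefully is that the simplicity of the eigenvalue $1$ and the convergence $(\P^\pi)^n\to\mathbbm{1}(d^\pi)^\top$ are precisely the properties guaranteed by the irreducibility and aperiodicity in Assumption~1; everything else is elementary linear algebra.
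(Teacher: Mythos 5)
Your proof is correct, and for the first two claims it follows the same route as the paper: verify that $\Phi=I-\frac{\mathbbm{1}\mathbbm{1}^\top}{|\S|}$ is the orthogonal projector onto $\mathbbm{1}^\perp$, apply $\Phi$ to the Bellman equation, use $\Phi\mathbbm{1}=0$ to drop the $\rho^\pi\mathbbm{1}$ term and $\P^\pi\mathbbm{1}=\mathbbm{1}$ to replace $v^\pi$ by its projection, and collect terms to get $(I-\Phi\P^\pi)v^\pi_\phi=\Phi r^\pi$. Where you genuinely diverge is the invertibility of $I-\Phi\P^\pi$. The paper proves the commutation identity $\pbr{\Phi\P^\pi}^k=\Phi\pbr{\P^\pi}^k$ (Lemma~\ref{matrix_power}), deduces $\pbr{\Phi\P^\pi}^k\to 0$ from $\pbr{\P^\pi}^k\to\mathbbm{1}\pbr{d^\pi}^\top$ (Lemma~\ref{zeromatrix}), concludes that every eigenvalue of $\Phi\P^\pi$ has modulus strictly below one, and thereby obtains the Neumann series $(I-\Phi\P^\pi)^{-1}=\sum_{k\geq 0}\Phi\pbr{\P^\pi}^k$ (Lemmas~\ref{inverse} and~\ref{evlemma}). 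You instead show directly that the kernel is trivial: from $x=\Phi\P^\pi x$ you derive $\pbr{\P^\pi}^n x=x+nc\mathbbm{1}$, boundedness of the left-hand side forces $c=0$, and the simplicity of the Perron eigenvalue together with $x\perp\mathbbm{1}$ forces $x=0$; in finite dimensions injectivity implies invertibility, so this is a complete and somewhat more elementary argument. The trade-off is that the paper's route is not just about existence: the explicit series $(I-\Phi\P^\pi)^{-1}=\sum_{k\geq 0}\Phi\pbr{\P^\pi}^k$ is reused in Lemma~\ref{theconstants} to obtain the quantitative bound $C_m\leq\frac{2C_e|\S|}{1-\lambda}$, which propagates into every Lipschitz and smoothness constant in the main theorem. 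If you keep your kernel argument, you would still need to derive that series (or some other bound on $\norm{(I-\Phi\P^\pi)^{-1}}$) separately for the later lemmas.
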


Since $\P^\pi$ has 1 as its Perron Frobenius eigenvalue, $\pbr{I-\P^\pi}$ is a singular matrix. It can be verified that $\Phi\mathbbm{1} = 0$, hence $\mathbbm{1}$ is an eigenvector of $\Phi\P^\pi$ for all $\pi$ with a corresponding eigenvalue of $0$. It can subsequently be proven that the rest of the eigenvalues of $\Phi\P^\pi$ are all less than one in terms of their absolute value and hence \eqref{proj:valfunc} is well defined. 

With a unique closed form for the average reward value function established, the subsequent task is to determine its smoothness constant. Given that the smoothness constant of a function $f$ corresponds to the largest eigenvalue of its Hessian, we adopt an analytical approach similar to that presented in \citep{agarwal2020theory}. This involves utilizing directional derivatives and evaluating the maximum rate of change of derivatives across all directions within the policy space. It's important to note that since we are maximizing over directions expressible as differences between any two policies within the policy space, the resulting Lipschitz and smoothness constants are referred to as the restricted Lipschitz and smoothness constants, respectively. The restricted smoothness of the average reward value function is stated below.

\begin{lemma}
    For any policy $\pi\in\Pi$, there exist constants $C_m,C_p,C_r,\kappa_r\in\R^+$ which are determined by the underlying MDP, such that the value function $v^\pi_\phi$ is $4\pbr{2C^3_mC^2_p\kappa_r + C^2_mC_pC_r}$-smooth.
\end{lemma}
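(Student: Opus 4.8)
The plan is to differentiate the closed form $v^\pi_\phi = \pbr{I - \Phi\P^\pi}^{-1}\Phi r^\pi$ from the previous lemma directly in policy space and to bound the second directional derivative uniformly over all admissible directions, since the restricted smoothness constant is obtained by controlling the second directional derivative of $v_\phi$ along every direction of the form $u = \pi' - \pi$ with $\pi',\pi\in\Pi$. The starting observation is that both maps $\pi\mapsto\P^\pi$ and $\pi\mapsto r^\pi$ are linear in $\pi$, so along the segment $\pi_t = \pi + tu$ one has the exact identities $\P^{\pi_t} = \P^\pi + t\P^u$ and $r^{\pi_t} = r^\pi + t r^u$, where $\P^u,r^u$ are the (signed) transition and reward operators induced by $u$. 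Writing $A := I - \Phi\P^\pi$ and $B := \Phi\P^u$, the resolvent $A_t^{-1} := \pbr{A - tB}^{-1}$ is differentiable in $t$ near $0$ with $\tfrac{d}{dt}A_t^{-1} = A_t^{-1}B A_t^{-1}$.

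Next I would apply the product rule twice. Using $A^{-1}\Phi r^\pi = v^\pi_\phi$, a short calculation gives at $t=0$
\[
    \frac{d^2}{dt^2}v^{\pi_t}_\phi\Big|_{t=0} = 2A^{-1}B A^{-1}\pbr{B v^\pi_\phi + \Phi r^u},
\]
while the first directional derivative $A^{-1}\pbr{B v^\pi_\phi + \Phi r^u}$ simultaneously yields the restricted Lipschitz constant. Taking norms and using submultiplicativity,
\[
    \norm{\frac{d^2}{dt^2}v^{\pi_t}_\phi\Big|_{t=0}} \leq 2\norm{A^{-1}}^2\norm{B}\pbr{\norm{B}\,\norm{v^\pi_\phi} + \norm{\Phi r^u}}.
\]

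I would then introduce the four MDP-dependent constants controlling each factor: $C_m := \sup_{\pi\in\Pi}\norm{\pbr{I-\Phi\P^\pi}^{-1}}$ for the resolvent, $C_p$ and $C_r$ for the Lipschitz constants of the linear maps $u\mapsto\Phi\P^u$ and $u\mapsto\Phi r^u$ (so $\norm{B}\leq C_p\norm{u}$ and $\norm{\Phi r^u}\leq C_r\norm{u}$), and $\kappa_r$ controlling $\norm{\Phi r^\pi}$ so that $\norm{v^\pi_\phi}\leq C_m\kappa_r$. Substituting these and factoring out $\norm{u}^2$ yields a bound of the form $c\pbr{C_m^3 C_p^2\kappa_r + C_m^2 C_p C_r}\norm{u}^2$; tracking the numerical constants, together with the slack incurred in passing from the second directional derivative to the smoothness constant, gives the stated value $4\pbr{2C^3_mC^2_p\kappa_r + C^2_mC_pC_r}$. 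The constants $C_p,C_r,\kappa_r$ are immediate, since $\P^\pi,r^\pi$ are linear in $\pi$ and the rewards are bounded by $1$.

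The main obstacle is proving that $C_m$ is finite and uniform over $\Pi$. I would derive this from Assumption~1 via a Neumann series built on the identity $\Phi\P^\pi\Phi = \Phi\P^\pi$ (which follows from $\P^\pi\mathbbm{1} = \mathbbm{1}$ and $\Phi\mathbbm{1} = 0$), giving $\pbr{\Phi\P^\pi}^k = \Phi\pbr{\P^\pi}^k = \Phi\pbr{\pbr{\P^\pi}^k - \mathbbm{1}\pbr{d^\pi}^\top}$ for $k\geq 1$. Geometric ergodicity bounds each summand by $\norm{\Phi}\,C_e\lambda^k$, so $\pbr{I-\Phi\P^\pi}^{-1} = I + \sum_{k\geq 1}\pbr{\Phi\P^\pi}^k$ converges with $C_m \leq 1 + \norm{\Phi}\,C_e\lambda/(1-\lambda)$ uniformly in $\pi$; this is exactly the point where the ergodicity assumption substitutes for the missing discount-factor contraction. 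The one technical subtlety to watch is that Assumption~1 is phrased in the $\infty$-norm whereas smoothness is measured in the Euclidean norm, but since all norms on $\R^{\abs{\S}}$ are equivalent this affects only the numerical value absorbed into the constants, not finiteness.
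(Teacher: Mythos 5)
Your proposal is correct and follows essentially the same route as the paper: differentiate the resolvent form $v^\pi_\phi=(I-\Phi\P^\pi)^{-1}\Phi r^\pi$ twice along the segment $\pi+tu$, arrive at the identical expression $2A^{-1}BA^{-1}(Bv^\pi_\phi+\Phi r^u)$ for the second directional derivative, bound it by submultiplicativity with the constants $C_m,C_p,C_r,\kappa_r$, and establish finiteness of $C_m$ via the same identity $(\Phi\P^\pi)^k=\Phi(\P^\pi)^k$ combined with geometric ergodicity. The only difference is bookkeeping of where the $\norm{\Phi}\le 2$ factors are absorbed, which affects the numerical prefactor but not the form of the bound.
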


Since the average reward value function is Lipschitz and smooth with respect to its policy, one can directly utilize this property to establish the Lipschitzness and smoothness of the average reward. These results are characterized in the following lemmas. 

\begin{lemma}
     For any policy $\pi\in\Pi$, there exist constants $C_m,C_p,C_r,\kappa_r\in\R^+$ which are determined by the underlying MDP, such that the average reward $\rho^\pi$ is $L^\Pi_1$-Lipschitz.
     \begin{equation}
         \Bigg\lvert\Big\langle\frac{\partial\rho^{\pi}}{\partial\pi},\pi'-\pi\Big\rangle\Bigg\rvert \leq L^\Pi_1\norm{\pi'-\pi}_2,\qquad\forall \pi,\pi'\in\Pi,
     \end{equation}
where $L^\Pi_1 = 2(C_r +C_pC_m\kappa_r+ 2(C^2_mC_p\kappa_r+C_mC_r))$
\end{lemma}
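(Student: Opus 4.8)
The plan is to derive the Lipschitz bound on $\rho^\pi$ directly from the average reward Bellman equation \eqref{bellmaneq} together with the already-established restricted Lipschitz and smoothness properties of the projected value function $v^\pi_\phi$. Starting from the fixed point characterization, I would first write $\rho^\pi$ as a function of $v^\pi_\phi$, $r^\pi$, and $\P^\pi$. Averaging the Bellman equation against the stationary distribution $d^\pi$ kills the $v^\pi$ terms (since $d^\pi \P^\pi = d^\pi$), yielding the familiar identity $\rho^\pi = \innorm{d^\pi, r^\pi}$; alternatively, projecting via $\Phi$ and extracting the component along $\mathbbm{1}$ expresses $\rho^\pi$ through $r^\pi + (\P^\pi - I)v^\pi_\phi$. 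The cleaner route for a directional-derivative argument is to differentiate the scalar Bellman relation $\rho^\pi = \frac{1}{|\S|}\mathbbm{1}^\top(r^\pi + (\P^\pi - I)v^\pi_\phi)$ along a feasible direction $u = \pi' - \pi$.

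The key steps, in order, are as follows. First, I would fix $\pi,\pi'\in\Pi$, set $u = \pi'-\pi$, and consider the directional derivative $\innorm{\frac{\partial\rho^\pi}{\partial\pi}, u}$. Second, I would differentiate the Bellman equation in the direction $u$: this produces terms involving $\frac{\partial r^\pi}{\partial\pi}\cdot u$, $\frac{\partial \P^\pi}{\partial\pi}\cdot u$ acting on $v^\pi_\phi$, and $\P^\pi$ acting on the directional derivative of $v^\pi_\phi$. Third, I would bound each piece using the constants $C_r$ (controlling $\|r^\pi\|$ and its derivative), $C_p$ (controlling $\P^\pi$ and its derivative), $C_m$ (controlling the resolvent $\pbr{I-\Phi\P^\pi}^{-1}$), and $\kappa_r$ (the restricted Lipschitz constant of $v^\pi_\phi$ itself). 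The directional derivative of $v^\pi_\phi$ is exactly what the restricted Lipschitz bound from the preceding lemma controls, contributing the $\kappa_r$ terms. Fourth, I would collect the estimates: the reward-derivative term contributes $C_r$, the kernel-derivative-times-value term contributes $C_pC_m\kappa_r$, and the term carrying the derivative of $v^\pi_\phi$ through $(\P^\pi-I)$ contributes the $2(C^2_mC_p\kappa_r + C_mC_r)$ block, matching the claimed $L^\Pi_1 = 2\pbr{C_r + C_pC_m\kappa_r + 2\pbr{C^2_mC_p\kappa_r + C_mC_r}}$.

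I expect the main obstacle to be correctly handling the directional derivative of $v^\pi_\phi$ and confirming it feeds cleanly into the $\rho^\pi$ derivative without an uncontrolled projection term. Because $v^\pi_\phi$ lives in the range of $\Phi$ and satisfies the resolvent identity \eqref{proj:valfunc}, its derivative is $\frac{\partial v^\pi_\phi}{\partial\pi}\cdot u = \pbr{I-\Phi\P^\pi}^{-1}\Phi\pbr{\frac{\partial r^\pi}{\partial\pi}\cdot u + \pbr{\frac{\partial\P^\pi}{\partial\pi}\cdot u}v^\pi_\phi}$, and I would need the norm of the resolvent $\pbr{I-\Phi\P^\pi}^{-1}$ bounded by $C_m$ uniformly over $\Pi$, which is where geometric ergodicity (Assumption 1) enters to guarantee the spectral gap noted after the first lemma. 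The remaining care is purely bookkeeping: tracking which terms are degree-one versus degree-two in the resolvent $C_m$, and ensuring the contraction of the projection $\Phi$ (which has operator norm $1$) does not inflate constants. Once the directional bound $\abs{\innorm{\frac{\partial\rho^\pi}{\partial\pi}, u}}\leq L^\Pi_1\|u\|_2$ holds for the arbitrary feasible direction $u = \pi'-\pi$, the restricted Lipschitz claim follows immediately.
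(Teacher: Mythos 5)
Your proposal follows essentially the same route as the paper: differentiate the Bellman identity $\rho^\pi\mathbbm{1} = r^\pi + (\P^\pi - I)v^\pi_\phi$ along the feasible direction $u=\pi'-\pi$, substitute the resolvent formula for the directional derivative of $v^\pi_\phi$, and bound the three resulting blocks by $C_r$, $C_pC_m\kappa_r$, and the value-function Lipschitz bound $2(C_m^2C_p\kappa_r + C_mC_r)$, exactly as in the paper's Appendix proof. The only bookkeeping caveat is that the paper measures everything in the $L_\infty$ operator norm, where $\norm{\Phi}_\infty\leq 2$ rather than $1$ (and note $\kappa_r=\max_\pi\norm{\Phi r^\pi}_\infty$ is the reward variance, not the Lipschitz constant of $v^\pi_\phi$); these factors of $2$ are what produce the stated $L^\Pi_1$.
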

The restricted Lipschitzness of the average reward is utilized to prove its restricted smoothness. 
\begin{lemma}\label{lemma4}
    For any policy $\pi\in\Pi$, there exist constants $C_m,C_p,C_r,\kappa_r\in\R^+$ which are determined by the underlying MDP, such that the average reward $\rho^\pi$ is $L^\Pi_2$-smooth.
    \begin{equation}
        \Bigg\lvert\Big\langle\pi'-\pi,\frac{\partial^2\rho^{\pi}}{\partial\pi^2}(\pi'-\pi)\Big\rangle\Bigg\rvert\leq\frac{L^\Pi_2}{2}\norm{\pi'-\pi}^2_2 \qquad \forall \pi,\pi'\in\Pi,
    \end{equation}
    where $L^\Pi_2=4(C^2_pC^2_m\kappa_r+C_pC_mC_r + (C_p+1)(C^2_mC_p\kappa_r+C_mC_r)+4(C^3_mC^2_p\kappa_r+C^2_mC_pC_r))$.
\end{lemma}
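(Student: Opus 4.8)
The plan is to reduce the smoothness of $\rho^\pi$ to the smoothness of the projected value function $v^\pi_\phi$ established in Lemma 2, using the Bellman equation \eqref{bellmaneq} as the bridge. Since $v^\pi_\phi$ is a genuine solution of \eqref{bellmaneq}, the vector identity $\rho^\pi\mathbbm{1} = r^\pi + \pbr{\P^\pi - I}v^\pi_\phi$ holds componentwise; averaging over states (left-multiplying by $\frac{1}{|\S|}\mathbbm{1}^\top$) yields the scalar closed form
\begin{equation}
    \rho^\pi = \frac{1}{|\S|}\mathbbm{1}^\top\pbr{r^\pi + \pbr{\P^\pi - I}v^\pi_\phi}.
\end{equation}
Working with the \emph{unique} $v^\pi_\phi$ rather than an arbitrary Bellman solution is essential here: the additive-constant ambiguity of $v^\pi$ would render its derivatives ill-defined, whereas $v^\pi_\phi$ is a differentiable function of $\pi$ by Lemma 1 (it equals $\pbr{I-\Phi\P^\pi}^{-1}\Phi r^\pi$), and the projection annihilates exactly the $\rho^\pi\mathbbm{1}$ term since $\Phi\mathbbm{1}=0$.

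The restricted bound requested is precisely the second directional derivative of $\rho^\pi$ along $u := \pi'-\pi$. Writing $f(\alpha) := \rho^{\pi+\alpha u}$ and differentiating the closed form twice, I would exploit the crucial fact that both $r^\pi$ and $\P^\pi$ are affine in $\pi$, so their second directional derivatives vanish. After the product rule this collapses to
\begin{equation}
    f''(0) = \frac{1}{|\S|}\mathbbm{1}^\top\pbr{2\,\dot\P\,\dot v^\pi_\phi + \pbr{\P^\pi - I}\ddot v^\pi_\phi},
\end{equation}
where $\dot{(\cdot)}$ and $\ddot{(\cdot)}$ denote first and second directional derivatives along $u$. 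Thus only two genuinely new quantities survive: the first derivative $\dot v^\pi_\phi$ and the second derivative $\ddot v^\pi_\phi$ of the value function.

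What remains is a triangle-inequality estimate of these two terms. Using $\frac{1}{|\S|}\abs{\mathbbm{1}^\top x}\le \norm{x}_\infty$, the row-stochasticity of $\P^\pi$ (which gives $\norm{\P^\pi-I}_\infty\le 2$), and the elementwise identity $\dot\P(s'|s)=\sum_a u(a|s)\P(s'|s,a)$ controlled by $C_p\norm{u}_2$, I would bound $\abs{f''(0)}$ by $2C_p\norm{u}_2\,\norm{\dot v^\pi_\phi}_\infty + 2\,\norm{\ddot v^\pi_\phi}_\infty$. Here $\norm{\dot v^\pi_\phi}_\infty$ is controlled by the restricted Lipschitz constant of $v^\pi_\phi$ (the first-derivative estimates already assembled in proving Lemma 3, which is why the restricted Lipschitzness feeds directly into the smoothness), while $\norm{\ddot v^\pi_\phi}_\infty$ is controlled by the restricted smoothness constant of $v^\pi_\phi$ from Lemma 2. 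Collecting these and regrouping the products of $C_m,C_p,C_r,\kappa_r$ produces the stated constant $L^\Pi_2$.

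The main obstacle is obtaining the two value-function derivative bounds uniformly over all $\pi\in\Pi$ in a dimension-free way, i.e. expressing $\norm{\dot v^\pi_\phi}_\infty$ and $\norm{\ddot v^\pi_\phi}_\infty$ through the MDP constants rather than $|\S|,|\A|$. This hinges on the uniform boundedness of $\pbr{I-\Phi\P^\pi}^{-1}$ appearing in \eqref{proj:valfunc}, which is exactly where Assumption 1 enters: geometric ergodicity forces the nonunit eigenvalues of $\Phi\P^\pi$ to stay bounded away from $1$ uniformly in $\pi$, so the resolvent and its successive derivatives are uniformly controlled. That analysis is the substance of Lemma 2, which I take as given; once it is in hand, the present lemma is a mechanical differentiation-and-bounding argument, the only care being correct product-rule bookkeeping and tracking which $C_m,C_p,C_r,\kappa_r$ factors accrue from each differentiation of the resolvent.
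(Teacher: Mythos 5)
Your proposal follows essentially the same route as the paper: differentiate the Bellman identity $\rho^\pi\mathbbm{1}=r^\pi+(\P^\pi-I)v^\pi_\phi$ twice along $u=\pi'-\pi$, use the affinity of $r^\pi$ and $\P^\pi$ in $\pi$ to annihilate their second derivatives, and bound the surviving $\P^u\,\dot v^\pi_\phi$ and $(\P^\pi-I)\,\ddot v^\pi_\phi$ terms in the sup norm via the Lipschitz and smoothness estimates for $v^\pi_\phi$ and the constants $C_m,C_p,C_r,\kappa_r$. The only differences are cosmetic: you average with $\frac{1}{|\S|}\mathbbm{1}^\top$ where the paper reads the scalar off the vector identity directly, and your bookkeeping (using Lemmas 2--3 as black boxes rather than re-expanding the resolvent derivatives) yields a slightly smaller constant than the stated $L^\Pi_2$, which still implies the claimed bound.
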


Note that the restricted Lipschitz constant of the average reward is upper bounded by its general Lipschitz constant:
\begin{equation}
    \max_{\pi'\in\Pi:\norm{\pi'-\pi}_2\leq1}\Bigg\lvert\Big\langle\frac{\partial\rho^{\pi}}{\partial\pi},\pi'-\pi\Big\rangle\Bigg\rvert \leq \max_{u\in\R^{\St\times\A}:\norm{u}_2\leq1}\Bigg\lvert\Big\langle\frac{\partial\rho^{\pi}}{\partial\pi},u\Big\rangle\Bigg\rvert 
\end{equation}
By confining our analysis of the smoothness constants to the policy class, we introduce a dependency of our convergence bounds on MDP-specific constants, including $C_r,C_p,C_m,C_e$ and $\kappa_r$. These constants capture the complexity of the underlying MDP and are exclusive to the analysis presented in this paper, as there appears to be no such dependency observed in the global convergence bounds of \citep{agarwal2020theory}. A more detailed description of these constants can be found in Table~\ref{tab:my_label}, where $k_1$ and $k_2$ represent MDP-independent numeric constants. These constants, which rely on the characteristics of the MDP, suggest that the projected policy gradient may achieve faster convergence in MDPs with lower complexity as opposed to those with higher complexity. The range of these constants can be found in Appendix \ref{appendix_A}. We now proceed to analyze the convergence of projected policy gradient utilizing the smoothness of the average reward.  

\begin{table}[h]
\caption{Constants capturing the MDP Complexity}
\label{tab:my_label}
    \centering
\begin{tabular}{c|p{6cm}cc}
\toprule
   & Definition& Range &Remark\\
\midrule
$C_m$&$\max_{\pi\in\Pi}\norm{(I-\Phi \P^\pi)^{-1}}$&$\frac{2C_e|\S|}{1-\lambda}$& Lowest rate of mixing\\\\
$C_p$&$\max_{\pi,\pi'\in\Pi}\frac{\norm{\P^{\pi'}-\P^\pi}}{\norm{\pi'-\pi}_2}$& $[0,\sqrt{|\A|}]$&Diameter of transition kernel\\\\
$C_r$&$\max_{\pi,\pi'}\frac{\norm{r^{\pi'}-r^\pi}_\infty}{\norm{\pi'-\pi}_2}$& $[0,\sqrt{|\A|}]$& Diameter of reward function\\\\
$\kappa_r$&$ \max_{\pi}\norm{\Phi r^\pi }_\infty$& $[0,2)$& Variance of reward function\\\\
$\frac{L^\Pi_1}{2}$&$C_r +C_pC_m\kappa_r+ 2(C^2_mC_p\kappa_r+C_mC_r)$&$[0,k_1\sqrt{|\A|}C^2_m]$& Restricted Lipschitz constant\\\\
$\frac{L^\Pi_2}{4}$&$C^2_pC^2_m\kappa_r+C_pC_mC_r
+ (C_p+1)(C^2_mC_p\kappa_r+C_mC_r)+4(C^3_mC^2_p\kappa_r+C^2_mC_pC_r)$&$[0,k_2|\A|C^3_m]$&Restricted smoothness constant\\
\bottomrule
\end{tabular}
\begin{tabular}{c}
\end{tabular}
\end{table}

\subsubsection{Convergence of policy gradient}
Using the smoothness property of the average reward, it is possible to show that the improvement in the successive average reward iterates is bounded from below by the product of the smoothness constant and the difference in the policy iterates, as described in the Lemma below.
\begin{lemma}\label{lemma5}
Let $\rho^{\pi_k}$ be the average reward corresponding to the policy iterate $\pi_k$ obtained from \eqref{avg:pg}. Let $L^\Pi_2$ be as in Lemma \ref{lemma4}. Then,
\begin{equation}
    \rho^{\pi_{k+1}}-\rho^{\pi_k} \geq \frac{L^\Pi_2}{2}\norm{\pi_{k+1}-\pi_k}^2,\qquad \forall k\in\N.
\end{equation}
\end{lemma}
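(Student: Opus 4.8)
The plan is to combine the restricted smoothness of $\rho^\pi$ established in Lemma~\ref{lemma4} with the variational characterization of the Euclidean projection defining the update \eqref{avg:pg}, under the step-size choice $\eta = 1/L^\Pi_2$. Throughout, write $\nabla\rho^{\pi_k} := \frac{\partial\rho^\pi}{\partial\pi}\big|_{\pi=\pi_k}$ and $d_k := \pi_{k+1}-\pi_k$. The orthogonal projection returns a feasible point, so $\pi_{k+1}\in\Pi$; since $\pi_k\in\Pi$ as well and $\Pi$ is convex, $d_k$ is a genuine difference of two policies and the entire segment $\{\pi_k + t\,d_k : t\in[0,1]\}$ lies in $\Pi$. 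This is precisely what lets me invoke the restricted bounds of Lemma~\ref{lemma4} along the direction $d_k$.

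First I would derive an ascent inequality. Because $\rho^\pi$ is $L^\Pi_2$-smooth along policy differences, the quadratic lower-bound property of smooth functions recorded in the preliminaries applies on the segment from $\pi_k$ to $\pi_{k+1}$ and yields
$$\rho^{\pi_{k+1}} \;\geq\; \rho^{\pi_k} + \innorm{\nabla\rho^{\pi_k},\,d_k} - \frac{L^\Pi_2}{2}\norm{d_k}^2.$$
The care required here is that Lemma~\ref{lemma4} only controls the Hessian quadratic form along differences of policies, not in arbitrary directions; to obtain the displayed descent bound one expands $\rho$ to second order with remainder at an interior point $\xi = \pi_k + t\,d_k$ and uses that $\xi\in\Pi$ together with $d_k = \tfrac{1}{1-t}\br{\pi_{k+1}-\xi}$, a positive multiple of the admissible policy difference $\pi_{k+1}-\xi$, so that the restricted bound transfers to $d_k$ by homogeneity of the quadratic form.

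Second I would exploit the projection. The update is $\pi_{k+1} = \textbf{Proj}_{\Pi}\sbr{\pi_k + \eta\,\nabla\rho^{\pi_k}}$, and the obtuse-angle characterization of the orthogonal projection onto the convex set $\Pi$ gives $\innorm{\pi_k + \eta\,\nabla\rho^{\pi_k} - \pi_{k+1},\, z - \pi_{k+1}} \leq 0$ for all $z\in\Pi$. Choosing $z = \pi_k$ and rearranging yields $\innorm{\nabla\rho^{\pi_k},\,d_k} \geq \tfrac{1}{\eta}\norm{d_k}^2$. Substituting this into the ascent inequality and setting $\eta = 1/L^\Pi_2$ gives
$$\rho^{\pi_{k+1}} - \rho^{\pi_k} \;\geq\; \pbr{\frac{1}{\eta} - \frac{L^\Pi_2}{2}}\norm{d_k}^2 \;=\; \frac{L^\Pi_2}{2}\norm{\pi_{k+1}-\pi_k}^2,$$
which is the claim.

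The step I expect to be the main obstacle is the first one: passing from the restricted (directional) smoothness of Lemma~\ref{lemma4} to a bona fide quadratic lower bound along the update segment. Everything hinges on verifying that the intermediate points remain feasible and that the update direction stays a scalar multiple of an admissible policy difference, so that the restricted constant $L^\Pi_2$ — rather than an unrestricted, potentially much larger smoothness constant — governs the second-order remainder. The projection inequality and the final algebra are the routine bookkeeping of projected gradient ascent.
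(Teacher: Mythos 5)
Your proposal is correct and follows essentially the same route as the paper's proof: the quadratic lower bound from the restricted smoothness of $\rho^\pi$ combined with the projection inequality $\innorm{\eta\,\nabla\rho^{\pi_k},\pi_{k+1}-\pi_k}\geq\norm{\pi_{k+1}-\pi_k}^2$ under the implicit step size $\eta=1/L^\Pi_2$, which is exactly item 1 of the paper's convex-projection lemma. Your additional care in justifying that the restricted smoothness constant applies along the update segment (feasibility of the intermediate points and homogeneity of the quadratic form in the direction $\pi_{k+1}-\pi_k$) is a point the paper glosses over, but it does not change the argument.
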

Successively increasing iterates are not sufficient to guarantee finite time global convergence bounds. It is therefore necessary to bound the suboptimality associated with each iterate. We do so by levaraging the performance difference lemma stated below. 
\begin{lemma} \label{lemma6}
Let $\rho^*$ be the globally optimal average reward. Then for any $\pi\in\Pi$, the suboptimality of $\rho^\pi$ can be expressed as:
\begin{equation}
\label{eq:pdl}
    \rho^*-\rho^\pi = \sum_{s}d^{\pi^*}(s){Q}^{\pi}(s,a)[\pi^*(a|s)-\pi(a|s)].   
\end{equation}
\end{lemma}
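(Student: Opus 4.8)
The plan is to prove the more general performance difference identity
\[
\rho^{\pi'}-\rho^{\pi} \;=\; \sum_{s}d^{\pi'}(s)\sum_{a}\sbr{\pi'(a|s)-\pi(a|s)}\,Q^{\pi}(s,a)
\]
for arbitrary policies $\pi,\pi'\in\Pi$, and then specialize to $\pi'=\pi^*$ to recover the claimed expression for $\rho^*-\rho^\pi$. Everything rests on three facts already recorded in the preliminaries: the $Q$-function identity $Q^\pi(s,a)=r(s,a)-\rho^\pi+\sum_{s'}\P(s'|s,a)v^\pi(s')$ (obtained from \eqref{qfunction} by collapsing $\sum_{a'}\pi(a'|s')Q^\pi(s',a')=v^\pi(s')$), the averaging relation $v^\pi(s)=\sum_a\pi(a|s)Q^\pi(s,a)$, and the stationarity of the occupancy measure $d^{\pi'}\P^{\pi'}=d^{\pi'}$.

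First I would rewrite the left-hand side using $\rho^{\pi'}=\sum_s d^{\pi'}(s)r^{\pi'}(s)$ together with the fact that $d^{\pi'}$ is a probability vector, so that $\rho^\pi=\sum_s d^{\pi'}(s)\rho^\pi$; this yields $\rho^{\pi'}-\rho^\pi=\sum_s d^{\pi'}(s)\sbr{r^{\pi'}(s)-\rho^\pi}$. Next, for each state I would expand $r^{\pi'}(s)-\rho^\pi=\sum_a\pi'(a|s)\sbr{r(s,a)-\rho^\pi}$ and substitute $r(s,a)-\rho^\pi=Q^\pi(s,a)-\sum_{s'}\P(s'|s,a)v^\pi(s')$. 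Collapsing the transition term with $\P^{\pi'}(s'|s)=\sum_a\pi'(a|s)\P(s'|s,a)$ gives $r^{\pi'}(s)-\rho^\pi=\sum_a\pi'(a|s)Q^\pi(s,a)-\sum_{s'}\P^{\pi'}(s'|s)v^\pi(s')$.

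The crucial cancellation comes when summing over $s$ with weights $d^{\pi'}(s)$. The transition term becomes $\sum_{s'}\pbr{\sum_s d^{\pi'}(s)\P^{\pi'}(s'|s)}v^\pi(s')=\sum_{s'}d^{\pi'}(s')v^\pi(s')$ by stationarity, and then $v^\pi(s')=\sum_a\pi(a|s')Q^\pi(s',a)$ rewrites it purely in terms of $Q^\pi$. Combining this with the $\sum_a\pi'(a|s)Q^\pi(s,a)$ term produced above yields exactly $\sum_s d^{\pi'}(s)\sum_a\sbr{\pi'(a|s)-\pi(a|s)}Q^\pi(s,a)$, completing the general identity; setting $\pi'=\pi^*$ gives \eqref{eq:pdl}.

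I expect the main point to watch — more a subtlety than a genuine obstacle — to be the non-uniqueness of $v^\pi$ and $Q^\pi$, each defined only up to an additive constant, which in the average-reward setting is not pinned down by any discount factor. I would verify that the identity is invariant under $v^\pi\mapsto v^\pi+c\mathbbm{1}$: the two telescoping terms $\sum_s d^{\pi'}(s)v^\pi(s)$ and $\sum_s d^{\pi'}(s)\sum_{s'}\P^{\pi'}(s'|s)v^\pi(s')$ each shift by the same constant $c$ (since $d^{\pi'}$ is a distribution and $\P^{\pi'}$ is stochastic), so their difference is unchanged. Equivalently, the $c$-ambiguity in $Q^\pi$ drops out of the final formula because $\sum_a\sbr{\pi'(a|s)-\pi(a|s)}=0$ for every $s$. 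This confirms the identity is well defined despite the freedom in choosing the relative value function.
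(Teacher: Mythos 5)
Your proof is correct. The paper itself does not prove this lemma---it simply cites an external reference for the average-reward performance difference lemma---and your telescoping argument (write $\rho^{\pi'}-\rho^\pi=\sum_s d^{\pi'}(s)[r^{\pi'}(s)-\rho^\pi]$, substitute the Bellman identity for $Q^\pi$, and cancel the value-function terms via stationarity of $d^{\pi'}$) is exactly the standard derivation that reference supplies. Your closing observation that the identity is invariant under the additive-constant ambiguity in $v^\pi$ and $Q^\pi$, because $\sum_a[\pi'(a|s)-\pi(a|s)]=0$, is a worthwhile check that the statement is well posed in the average-reward setting.
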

\proof{The proof can be found in \citep{PDL_averageCase}}.

In the next lemma, we upper bound the right-hand side of \eqref{eq:pdl} in terms of the gradient of $\rho^\pi.$
\begin{lemma}\label{lemma7}
The suboptimality of any $\pi\in\Pi$ satisfies:
    \begin{equation}
        \rho^{*}-\rho^\pi \leq C_{PL}\max_{\pi'\in\Pi}\left\langle\pi'-\pi,\frac{\partial\rho^{\pi}}{\partial\pi}\right\rangle,\qquad \forall \pi\in\Pi,
    \end{equation}
where $C_{PL} = \max_{\substack{\pi\in\Pi\\s\in\S}}\frac{d^{\pi^*}(s)}{d^\pi(s)}$.
\end{lemma}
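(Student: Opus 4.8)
The plan is to combine the performance difference lemma (Lemma~\ref{lemma6}) with the tabular form of the policy gradient theorem and a change-of-measure argument. First I would invoke the average reward policy gradient theorem, which for the tabular parametrization ($\theta \equiv \pi$) gives the component $\frac{\partial \rho^\pi}{\partial \pi(a|s)} = d^\pi(s) Q^\pi(s,a)$. Substituting this into the inner product, the right-hand side objective becomes
\begin{equation}
\left\langle \pi' - \pi, \frac{\partial \rho^\pi}{\partial \pi} \right\rangle = \sum_{s \in \S} d^\pi(s) \sum_{a \in \A} \sbr{\pi'(a|s) - \pi(a|s)} Q^\pi(s,a).
\end{equation}
The key simplification is that the constraint $\pi' \in \Pi$ decouples across states, so the maximization can be carried out state by state. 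Since $\max_{\pi'(\cdot|s) \in \Delta\A} \sum_a \pi'(a|s) Q^\pi(s,a) = \max_a Q^\pi(s,a)$ and $\sum_a \pi(a|s) Q^\pi(s,a) = v^\pi(s)$, I obtain
\begin{equation}
\max_{\pi' \in \Pi} \left\langle \pi' - \pi, \frac{\partial \rho^\pi}{\partial \pi} \right\rangle = \sum_{s \in \S} d^\pi(s) \pbr{\max_a Q^\pi(s,a) - v^\pi(s)}.
\end{equation}
Crucially, each per-state term $\max_a Q^\pi(s,a) - v^\pi(s)$ is nonnegative, because $v^\pi(s)$ is a convex combination (under $\pi(\cdot|s)$) of the values $\{Q^\pi(s,a)\}_a$.

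Next I would bound the left-hand side starting from the performance difference lemma (reading its right-hand side as a double sum over $s$ and $a$). Rewriting $\sum_a Q^\pi(s,a)[\pi^*(a|s) - \pi(a|s)] = \sum_a \pi^*(a|s) Q^\pi(s,a) - v^\pi(s)$ and upper bounding $\sum_a \pi^*(a|s) Q^\pi(s,a) \le \max_a Q^\pi(s,a)$, Lemma~\ref{lemma6} yields
\begin{equation}
\rho^* - \rho^\pi \le \sum_{s \in \S} d^{\pi^*}(s) \pbr{\max_a Q^\pi(s,a) - v^\pi(s)}.
\end{equation}
Because every summand $\max_a Q^\pi(s,a) - v^\pi(s)$ is nonnegative, I can insert the ratio $d^{\pi^*}(s)/d^\pi(s)$ and bound it uniformly by $C_{PL}$ without any sign reversal:
\begin{equation}
\rho^* - \rho^\pi \le \pbr{\max_{s} \frac{d^{\pi^*}(s)}{d^\pi(s)}} \sum_{s \in \S} d^\pi(s)\pbr{\max_a Q^\pi(s,a) - v^\pi(s)} \le C_{PL} \max_{\pi' \in \Pi} \left\langle \pi' - \pi, \frac{\partial \rho^\pi}{\partial \pi} \right\rangle,
\end{equation}
which is exactly the claim, using $\max_s d^{\pi^*}(s)/d^\pi(s) \le C_{PL} = \max_{\pi,s} d^{\pi^*}(s)/d^\pi(s)$.

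The main obstacle, and the reason the maximization over $\pi'$ rather than the raw gradient appears on the right-hand side, is precisely the change of measure from $d^{\pi^*}$ to $d^\pi$. In the average reward setting there is no discount factor forcing a strictly positive occupancy measure, so the irreducibility and aperiodicity assumption is what guarantees $d^\pi(s) > 0$ for every $s$, making each ratio $d^{\pi^*}(s)/d^\pi(s)$ finite and $C_{PL}$ well defined. Equally important is the nonnegativity of the per-state gap $\max_a Q^\pi(s,a) - v^\pi(s)$: it is this that licenses pulling $C_{PL}$ out of the sum. Had I instead used the unmaximized inner product evaluated at $\pi' = \pi^*$, the per-state advantage $\sum_a \pi^*(a|s)Q^\pi(s,a) - v^\pi(s)$ could be negative for some states, and the uniform bound by $C_{PL}$ would no longer be valid; replacing $\pi^*$ by the greedy (state-wise $\arg\max$) policy is exactly what restores the pointwise sign and closes the argument.
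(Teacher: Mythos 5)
Your proposal is correct and follows essentially the same route as the paper's proof: start from the performance difference lemma, upper bound the per-state advantage by its greedy (state-wise maximized) version so that each summand is nonnegative, insert the ratio $d^{\pi^*}(s)/d^\pi(s)$ and pull out $C_{PL}$, then identify the remaining weighted sum with $\max_{\pi'\in\Pi}\langle \pi'-\pi,\partial\rho^\pi/\partial\pi\rangle$ via the policy gradient theorem. The only cosmetic difference is that you write the state-wise maximum explicitly as $\max_a Q^\pi(s,a)-v^\pi(s)$, which is exactly what the paper's $\max_{\pi'_s}$ step amounts to.
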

Note that $C_{PL}$ is a constant that is proportional to the size of the state space. We do not know if the appearance of such a constant is inevitable or not; however, it should be noted such a constant appears in prior works on discounted reward problems as well \citep{agarwal2020theory,PGConvRate}. 

It is possible to further upper bound the expression in Lemma \ref{lemma7} using the smoothness property of the average reward. 

\begin{lemma}\label{lemma8}
Let $\pi_k$ be the policy iterates generated by \eqref{avg:pg}. Then for all $\pi'\in\Pi$ it is true that,
    \begin{equation}
         \Big\langle{ \frac{\partial \rho_{\pi_{k+1}}}{\partial \pi_{k+1}}, \pi'-\pi_{k+1}} \Big\rangle \leq  4\sqrt{|\S|}L^\Pi_2\norm{\pi_{k+1}-\pi_k},
    \end{equation}
\end{lemma}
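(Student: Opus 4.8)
The plan is to run the standard projected-gradient-ascent argument, combining the first-order (variational) characterization of the Euclidean projection with the smoothness established in Lemma~\ref{lemma4}. Write $g_k := \frac{\partial\rho^\pi}{\partial\pi}\big|_{\pi=\pi_k}$ for the policy gradient, and take the step size $\eta = 1/L^\Pi_2$, which is the choice that reproduces the constants of Theorem~\ref{maintheorem}.

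First I would record the projection inequality. Since $\pi_{k+1}=\textbf{Proj}_\Pi[\pi_k+\eta g_k]$ and $\Pi$ is convex (a product of simplices), the projection obeys $\langle \pi_k+\eta g_k-\pi_{k+1},\,\pi'-\pi_{k+1}\rangle\le 0$ for every $\pi'\in\Pi$. Rearranging and dividing by $\eta$ gives
\[
\langle g_k,\pi'-\pi_{k+1}\rangle \le \tfrac{1}{\eta}\langle \pi_{k+1}-\pi_k,\pi'-\pi_{k+1}\rangle \le \tfrac{1}{\eta}\norm{\pi_{k+1}-\pi_k}\norm{\pi'-\pi_{k+1}},
\]
the last step by Cauchy--Schwarz. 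This controls the gradient inner product at $\pi_k$, whereas the target is the one at $\pi_{k+1}$, so I split
\[
\langle g_{k+1},\pi'-\pi_{k+1}\rangle = \langle g_k,\pi'-\pi_{k+1}\rangle + \langle g_{k+1}-g_k,\pi'-\pi_{k+1}\rangle.
\]

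The second, gradient-difference term is where smoothness enters: expressing $g_{k+1}-g_k$ as the integral of the Hessian of $\rho$ along the segment from $\pi_k$ to $\pi_{k+1}$ and applying Lemma~\ref{lemma4} yields $\langle g_{k+1}-g_k,\pi'-\pi_{k+1}\rangle \le L^\Pi_2\norm{\pi_{k+1}-\pi_k}\norm{\pi'-\pi_{k+1}}$. Combining the two terms with $\eta=1/L^\Pi_2$ gives $\langle g_{k+1},\pi'-\pi_{k+1}\rangle \le 2L^\Pi_2\norm{\pi_{k+1}-\pi_k}\norm{\pi'-\pi_{k+1}}$. It then remains to bound the diameter factor: each conditional $\pi(\cdot\,|s)$ lies in the simplex $\Delta\A$, whose $\ell_2$-diameter is $\sqrt2$, so summing over the $|\S|$ states gives $\norm{\pi'-\pi_{k+1}}\le\sqrt{2|\S|}$, and since $2\sqrt2\le 4$ the claim follows.

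The step I expect to be the main obstacle is the gradient-difference bound. Lemma~\ref{lemma4} is phrased as a bound on the Hessian quadratic form $\langle u,\nabla^2\rho\,u\rangle$ for a single direction $u$ that is a difference of two policies, while the term I must control, $\langle \pi'-\pi_{k+1},\,\nabla^2\rho\,(\pi_{k+1}-\pi_k)\rangle$, pairs two distinct directions. I would bridge this either by a polarization identity, rewriting the cross term through quadratic forms of policy differences such as $\pi'-\pi_k$, or---more directly---by invoking the equivalent Lipschitz-gradient form of $L$-smoothness from the definition preceding Lemma~\ref{lemma4}, which gives $\norm{g_{k+1}-g_k}\le L^\Pi_2\norm{\pi_{k+1}-\pi_k}$ followed by Cauchy--Schwarz. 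Some care is required to verify that the \emph{restricted} constant $L^\Pi_2$ suffices, which holds because every direction appearing in the argument, namely $\pi'-\pi_{k+1}$ and $\pi_{k+1}-\pi_k$, is itself a genuine difference of policies.
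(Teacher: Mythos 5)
Your proposal follows essentially the same route as the paper's proof: decompose $\langle g_{k+1},\pi'-\pi_{k+1}\rangle$ into the gradient-difference term (bounded via smoothness and Cauchy--Schwarz) plus $\langle g_k,\pi'-\pi_{k+1}\rangle$ (bounded via the variational characterization of the projection with $\eta=1/L^\Pi_2$), then close with the $O(\sqrt{|\S|})$ diameter bound on $\Pi$. The obstacle you flag---that Lemma~\ref{lemma4} bounds only the diagonal quadratic form while the cross term pairs two different policy-difference directions---is real but is glossed over in the paper's proof as well, which simply invokes ``smoothness'' to assert $\bigl\lVert g_{k+1}-g_k\bigr\rVert \le L^\Pi_2\lVert\pi_{k+1}-\pi_k\rVert$; your proposed fixes (polarization or the Lipschitz-gradient form) are reasonable ways to make that step rigorous.
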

Lemmas \ref{lemma5},\ref{lemma7} and \ref{lemma8} are combined to prove the result in Theorem \ref{maintheorem}.

\subsection{Extension to Discounted Reward MDPs}
Existing performance bounds in the context of discounted reward MDPs require an \textcolor{black}{iteration} complexity of $O\pbr{\frac{|\S||\A|}{(1-\gamma)^5\epsilon}}$  to achieve policies with suboptimality of $\epsilon$ \citep{PGConvRate}. Adapting a similar analysis as presented in this paper results in a iteration complexity of $O\pbr{\frac{|\S|L}{(1-\gamma)^5\epsilon}}$, where $0\leq L\leq|\A|.$ The constant $L$ captures the hardness of the MDP. Therefore, MDPs with lower complexity, i.e., lower values of $L$, converge faster than MDPs with higher complexity, thus improving on current complexity-independent bounds. More details can be found in the Appendix \ref{appendix_C}.


\section{Simulations}
Here, we present simulations corresponding to two MDP complexity measures. 
\vspace{-5mm}
\begin{figure}[H]
    \centering
    \subfigure[Convergence as a function of state space cardinality]{
    \includegraphics[width=0.48\textwidth]{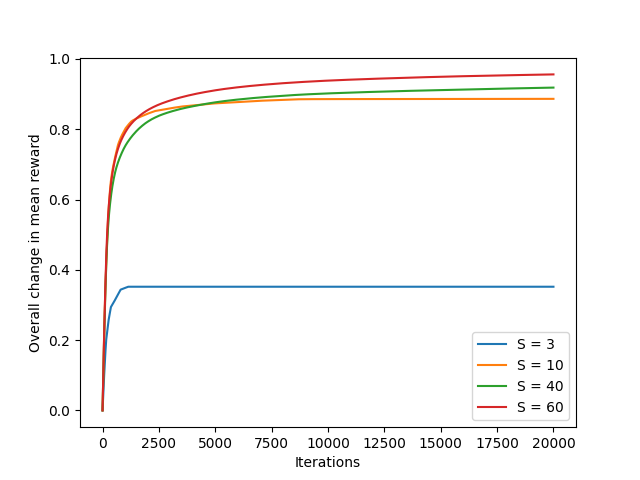}
    \label{fig1m}
    }
    \hfill
    \subfigure[Convergence as a function of reward diameter]{
    \includegraphics[width=0.48\textwidth]{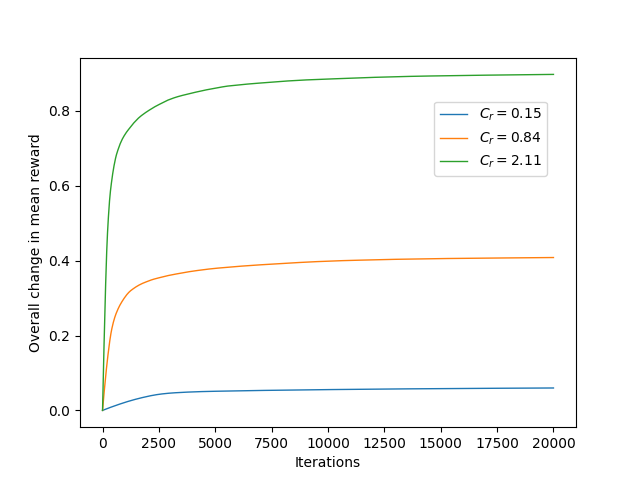}
    \label{fig2m}
    }
    \caption{Improvement in average reward as a function of MDP complexity}
\end{figure}

In Subfigure \ref{fig1m}, we simulated MDPs with $(|\S|,|\A|)=\{(3,3), (10,10), (40,40), (60,60)\}$, whose transition kernels are randomly generated. The reward matrix is generated from a uniform distribution over $[-1,1]$. Projected policy gradient was implemented for $20000$ iterations and the overall change in the average reward is plotted as a function of iteration number. As expected, the convergence rate is slower when $(|\S|,|\A|)$ are larger due to the fact that the reward smoothness constant is larger. This reduction stems from lower values of $C_M, C_r, C_p$, which are characteristic of MDPs with smaller state and action space cardinalities. A less obvious result is that, even for MDPs with a fixed size of state and action spaces, the rate of convergence can be considerably different as shown in
Subfigure \ref{fig2m}. For this simulation, we fix the state and action space cardinality at $(|\S|,|\A|)=(20,20).$ We randomly generate a transition kernel, which remains constant across different single-step reward functions corresponding to varying reward diameters. The observed convergence trend aligns with the theoretical bounds obtained, indicating that MDPs with lower values of $C_r$  tend to converge relatively faster. Similar results are presented in the Appendix \ref{appendix_D} for other measures of MDP complexity. This observation appears to be new; the performance bounds in prior works on discounted reward problems do not seem to capture the role of MDP complexity and are only a function of the size of state and action spaces.

\section*{Acknowledgements}
Research conducted by Y.M. and R.S. was supported in part by NSF Grants CNS 23-12714, CCF 22-07547, CNS 21-06801, and AFOSR Grant FA9550-24-1-0002. 

\bibliographystyle{plainnat}
\bibliography{main}

\newpage
\appendix
\addcontentsline{toc}{section}{Appendix} 
\part{Appendix} 
\parttoc 

\section{Smoothness of Average Reward}
\label{appendix_A}
\subsection{Proof of Lemma 1}
Consider the subspace orthogonal $E$ to the all ones vector $\mathbbm{1}\in\R^{|\S|}$ defined below:
\begin{equation}
    E = \text{span}\cbr{\theta\in\R^{|\S|}:\theta^\top\mathbbm{1}=0.}
\end{equation}
The orthogonal projection $v_\phi$ of a vector $v$ in the Euclidean norm onto the subspace $E$ is defined as:
\begin{equation}
    v_\phi = \argmin_{u\in\E}||v-u||_2
\end{equation}
It can be checked that the closed form expression for $v_\phi$ is given by:
\begin{equation}
    v_\phi = \pbr{I-\frac{\mathbbm{1}\mathbbm{1}^\top}{|\S|}}v
\end{equation}
where $I\in\R^{|\S|\times|\S|}$ is the identity matrix. 

Consider the projection of the vector $r^\pi+\P^\pi v^\pi - \rho^\pi\mathbbm{1}$ onto $E$ for any policy $\pi\in\Pi$. The above projection is identical to the projection of $r^\pi+\P^\pi v^\pi$ onto $E$, since $\rho^\pi\mathbbm{1}$ lies in the nullspace of $\Phi$. 
\begin{align}
    \Phi(r^\pi -\rho^\pi \mathbbm{1}+\P^\pi v)
    =& \pbr{r^\pi -\rho^\pi \mathbbm{1}+\P^\pi v} -\big\langle\mathbbm{1}, r^\pi -\rho^\pi \mathbbm{1}+\P^\pi v\big\rangle\frac{\mathbbm{1}}{|\S|} \\
    =& \pbr{r^\pi +\P^\pi v} -\big\langle\mathbbm{1}, r^\pi +\P^\pi v\big\rangle\frac{\mathbbm{1}}{|\S|}\\
    =& r^\pi-\langle r^\pi,\mathbbm{1}\rangle\frac{\mathbbm{1}}{|\S|} +\P^\pi v -\langle\mathbbm{1},\P^\pi v\rangle\frac{\mathbbm{1}}{|\S|}\\
    =& r^\pi-\langle r^\pi,\mathbbm{1}\rangle\frac{\mathbbm{1}}{|\S|} +\P^\pi v -\frac{\mathbbm{1}}{|\S|}(\mathbbm{1}^\top\P^\pi v)\\
    =& (I  -\frac{\mathbbm{1}\mathbbm{1}^\top}{|\S|})r^\pi +(I  -\frac{\mathbbm{1}\mathbbm{1}^\top}{|\S|})\P^\pi v\\
    =& \Phi\sbr{r^\pi +\P^\pi v}.
\end{align}
Consider the average reward Bellman equation corresponding to policy $\pi\in\Pi$:
\begin{equation}
\label{appen_acoe}
    \rho^\pi\mathbbm{1} + v^\pi = r^\pi + \P^\pi v^\pi
\end{equation}
Imposing an additional constraint ${v^\pi}^\top \mathbbm{1}=0$ yields a unique average reward value function denoted by $v^\pi_\phi$. Moreover, it is true that,
 \begin{align}
 \Phi v_\phi^\pi + \Phi \rho^\pi\mathbbm{1} =& \Phi r^\pi + \Phi  \P^\pi v_\phi^\pi \\
 \implies \Phi v_\phi^\pi =& \Phi r^\pi + \Phi  \P^\pi v_\phi^\pi,\\
\stackrel{(\text{a})}{\implies} v_\phi^\pi =& \Phi[ r^\pi +   \P^\pi v_\phi^\pi],
\end{align} 
where (a) is true because ${v^\pi}^\top \mathbbm{1}=0 \implies\Phi v^\pi = v^\pi$.
Thus the projected value function with an unique representation is given by:
\begin{equation}
\label{val_func_proj}
    v_\phi^\pi = \sbr{I-\Phi\P^\pi}^{-1}\Phi r^\pi, 
\end{equation}
and the existence of the inverse is proven in Subsection \ref{subs:ev}, Lemma~\ref{evlemma}. An alternate expression for the projected value function is given by: $v_\phi^\pi = \pbr{I+\mathbbm{1}\mathbbm{1}^\top D-\frac{\mathbbm{1}\mathbbm{1}^\top}{|\S|}}v_0^\pi$, where $D\in\R^{|\S|\times|\S|}$ is a diagonal matrix whose entries correspond to the stationary measure over the states associated with policy $\pi$. See \citep{tsitsiklis1999average} for more details. 

\subsection{Proof that Eigenvalues of \texorpdfstring{$\pbr{I-\Phi\P^\pi}$}{Lg} are Non-zero}\label{subs:ev}

In this subsection, we introduce the lemmas required to establish the proof of the eigenvalues of $\left(I-\frac{\mathbbm{1}\mathbbm{1}^\top}{|\S|}\right)\P^\pi$ being nonzero. We use the following notation: $\mathbbm{1}\in\R^n$ represents the all ones vector and $I\in\R^{n\times n}$ is the identity matrix.
\begin{lemma}\label{matrix_power}
    Let $A\in\R^{n\times n}$ be a stochastic matrix. It is true that
    \begin{equation}
        \pbr{\pbr{I  -\frac{\mathbbm{1}\mathbbm{1}^\top}{n}}A}^k = \pbr{I  -\frac{\mathbbm{1}\mathbbm{1}^\top}{n}}A^k .
    \end{equation}
\end{lemma}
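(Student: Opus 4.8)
The plan is to reduce the statement to a single algebraic identity and then finish with a one-line induction on $k$. Throughout, write $\Phi := I - \frac{\mathbbm{1}\mathbbm{1}^\top}{n}$ for the projector appearing on the left-hand side, so that the claim reads $(\Phi A)^k = \Phi A^k$. The only structural fact about $A$ I will use is that it is row-stochastic, i.e.\ $A\mathbbm{1} = \mathbbm{1}$; this is precisely what couples $A$ to the rank-one term inside $\Phi$.

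First I would record the key identity $\Phi A \Phi = \Phi A$. To see this, expand $\Phi A \Phi = \Phi A - \Phi A \frac{\mathbbm{1}\mathbbm{1}^\top}{n}$ and observe that $A\mathbbm{1} = \mathbbm{1}$ gives $A\frac{\mathbbm{1}\mathbbm{1}^\top}{n} = \frac{\mathbbm{1}\mathbbm{1}^\top}{n}$, whence $\Phi A \frac{\mathbbm{1}\mathbbm{1}^\top}{n} = \Phi \frac{\mathbbm{1}\mathbbm{1}^\top}{n} = \bigl(I-\frac{\mathbbm{1}\mathbbm{1}^\top}{n}\bigr)\frac{\mathbbm{1}\mathbbm{1}^\top}{n} = 0$, using $\mathbbm{1}^\top\mathbbm{1} = n$. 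The cancellation $\Phi\frac{\mathbbm{1}\mathbbm{1}^\top}{n} = 0$ is exactly the statement that $\Phi$ annihilates the $\mathbbm{1}$-direction, already noted in the main text ($\Phi\mathbbm{1}=0$).

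With this identity in hand, the result follows by induction on $k$. The base case $k=1$ is immediate. Assuming $(\Phi A)^k = \Phi A^k$, I would compute $(\Phi A)^{k+1} = (\Phi A)(\Phi A)^k = (\Phi A)(\Phi A^k) = (\Phi A \Phi)A^k = \Phi A \cdot A^k = \Phi A^{k+1}$, where the fourth equality invokes $\Phi A \Phi = \Phi A$ and the second uses the inductive hypothesis. This closes the induction.

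I do not anticipate a genuine obstacle, since $A$ being stochastic makes the product telescope. The only point meriting care — and the step I would highlight — is isolating the right intermediate identity $\Phi A \Phi = \Phi A$: expanding $(\Phi A)^k$ naively produces a sum of many mixed terms, whereas the factored identity collapses the induction in one line. An equivalent route avoiding induction is to note that $A^j\frac{\mathbbm{1}\mathbbm{1}^\top}{n} = \frac{\mathbbm{1}\mathbbm{1}^\top}{n}$ for every $j \geq 0$ and $\Phi\frac{\mathbbm{1}\mathbbm{1}^\top}{n} = 0$, and then expand $(\Phi A)^k = \bigl(A - \frac{\mathbbm{1}\mathbbm{1}^\top}{n}A\bigr)^k$ term by term, observing that every term carrying a factor $\frac{\mathbbm{1}\mathbbm{1}^\top}{n}$ after the leading $\Phi$ vanishes, leaving only $\Phi A^k$.
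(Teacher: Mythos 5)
Your proof is correct and follows essentially the same route as the paper's: a one-step induction on $k$ that exploits $A\mathbbm{1}=\mathbbm{1}$ together with $\Phi\,\frac{\mathbbm{1}\mathbbm{1}^\top}{n}=0$ to kill the cross-terms. The only (cosmetic) difference is that you package the cancellation as the identity $\Phi A\Phi=\Phi A$ and multiply on the left, whereas the paper expands $(\Phi A^k)(\Phi A)$ term by term and multiplies on the right, using $A^k\mathbbm{1}=\mathbbm{1}$.
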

\begin{proof}
    For any $k\in\N$, consider, 
    \begin{align}
      \pbr{I  -\frac{\mathbbm{1}\mathbbm{1}^\top}{n}}A^k \pbr{I  -\frac{\mathbbm{1}\mathbbm{1}^\top}{n}}A&=\pbr{A^k  -\frac{\mathbbm{1}\mathbbm{1}^\top}{n}A^k} \pbr{A  -\frac{\mathbbm{1}\mathbbm{1}^\top}{n}A} \\   &=A^{k+1}  -\frac{\mathbbm{1}\mathbbm{1}^\top}{n}A^{k+1}   -A^k\frac{\mathbbm{1}\mathbbm{1}^\top}{n}A + \frac{\mathbbm{1}\mathbbm{1}^\top}{n}A^{k}\frac{\mathbbm{1}\mathbbm{1}^\top}{n}A \\
      &\stackrel{\text{(a)}}{=}A^{k+1}  -\frac{\mathbbm{1}\mathbbm{1}^\top}{n}A^{k+1}   -\frac{\mathbbm{1}\mathbbm{1}^\top}{n}A + \frac{\mathbbm{1}\mathbbm{1}^\top}{n}\frac{\mathbbm{1}\mathbbm{1}^\top}{n}A, \\&\stackrel{\text{(b)}}{=}A^{k+1}  -\frac{\mathbbm{1}\mathbbm{1}^\top}{n}A^{k+1}   -\frac{\mathbbm{1}\mathbbm{1}^\top}{n}A + \frac{\mathbbm{1}\mathbbm{1}^\top}{n}A, \\&=A^{k+1}  -\frac{\mathbbm{1}\mathbbm{1}^\top}{n}A^{k+1}. \\
      &=\pbr{I  -\frac{\mathbbm{1}\mathbbm{1}^\top}{n}}A^{k+1}
    \end{align}
where (a) is true because $A^k\mathbbm{1}=\mathbbm{1}$ and (b) follows from the fact that $\frac{\mathbbm{1}^\top\mathbbm{1}}{n} =\mathbbm{1}$.
From mathematical induction it thus follows that, 
\begin{equation}
    \pbr{\pbr{I  -\frac{\mathbbm{1}\mathbbm{1}^\top}{n}}A}^k =\pbr{I  -\frac{\mathbbm{1}\mathbbm{1}^\top}{n}}A^k \qquad\forall k\in\N.
\end{equation}
\end{proof}

\begin{lemma}\label{zeromatrix}
    For any irreducible and aperiodic stochastic matrix $A\in\R^{n\times n}$, it is true that
    \begin{equation}
        \lim_{k\to\infty}\pbr{\pbr{I  -\frac{\mathbbm{1}\mathbbm{1}^\top}{n}}A}^k =0
    \end{equation}
\end{lemma}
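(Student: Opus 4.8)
The plan is to reduce the claim to the classical convergence theorem for finite Markov chains by first peeling off the matrix power using the identity already established in Lemma~\ref{matrix_power}. Writing $\Phi = I - \frac{\mathbbm{1}\mathbbm{1}^\top}{n}$ for the projection appearing in the statement, that lemma gives $\pbr{\Phi A}^k = \Phi A^k$ for every $k \in \N$. Hence it suffices to show that $\Phi A^k \to 0$ as $k \to \infty$, and the whole difficulty of iterating the (non-normal) matrix $\Phi A$ directly is sidestepped: the limit now only has to be understood for $A^k$, whose behaviour is standard.

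Next I would invoke the fundamental limit theorem for stochastic matrices. Since $A$ is irreducible and aperiodic, it admits a unique stationary distribution $d$, i.e.\ a probability row vector satisfying $d^\top A = d^\top$, and one has $A^k \to \mathbbm{1} d^\top$ entrywise as $k \to \infty$. This is precisely the fact underlying Assumption~1 specialized to $A$; indeed $\norm{A^k - \mathbbm{1} d^\top} \to 0$ geometrically, which is more than enough for the qualitative statement here.

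The key computation is then that $\Phi$ annihilates this limit. Because $\mathbbm{1}^\top \mathbbm{1} = n$, we have $\Phi \mathbbm{1} = \mathbbm{1} - \frac{\mathbbm{1}\pbr{\mathbbm{1}^\top \mathbbm{1}}}{n} = \mathbbm{1} - \mathbbm{1} = 0$, and therefore $\Phi\pbr{\mathbbm{1} d^\top} = \pbr{\Phi \mathbbm{1}} d^\top = 0$. Combining everything with continuity of matrix multiplication gives
\begin{equation}
\lim_{k\to\infty}\pbr{\Phi A}^k = \lim_{k\to\infty}\Phi A^k = \Phi\pbr{\lim_{k\to\infty}A^k} = \Phi\pbr{\mathbbm{1} d^\top} = 0,
\end{equation}
which is exactly the claim.

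I do not anticipate a serious obstacle: the proof is essentially a two-line consequence of Lemma~\ref{matrix_power} and the Markov chain convergence theorem. The only points demanding care are to apply Lemma~\ref{matrix_power} \emph{before} passing to the limit, so that the limit acts on $A^k$ rather than on $\Phi A$, and to keep the row/column conventions consistent so that the stationary vector enters as $\mathbbm{1} d^\top$ with $\Phi \mathbbm{1} = 0$ doing the work. If one wishes to avoid citing the limit theorem as a black box, an alternative is the bound $\norm{\Phi A^k} = \norm{\Phi\pbr{A^k - \mathbbm{1} d^\top}} \leq \norm{\Phi}\,\norm{A^k - \mathbbm{1} d^\top}$ together with geometric ergodicity, which additionally yields an explicit convergence rate.
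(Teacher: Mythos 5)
Your proposal is correct and follows essentially the same route as the paper's proof: apply Lemma~\ref{matrix_power} to reduce $\pbr{\Phi A}^k$ to $\Phi A^k$, invoke $A^k \to \mathbbm{1}d^\top$ for irreducible aperiodic chains, and observe that $\Phi\mathbbm{1} = 0$ kills the limit. No gaps; the quantitative variant you mention at the end is a nice optional strengthening but not needed here.
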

\begin{proof}
    From Lemma \ref{matrix_power} we have,
    \begin{equation}
        \pbr{\pbr{I  -\frac{\mathbbm{1}\mathbbm{1}^\top}{n}}A}^k = \pbr{I  -\frac{\mathbbm{1}\mathbbm{1}^\top}{n}}A^k \qquad \forall k\in\N
    \end{equation}
Since $A$ is irreducible and aperiodic, the following limit converges to the stationary distribution $d\in\R^n_+$ associated with $A$. 
\begin{equation}\label{stat}
    \lim_{k\to\infty}A^k = \mathbbm{1}d^\top
\end{equation}
Consider the following,
\begin{align}
\lim_{k\to\infty}\pbr{\pbr{I  -\frac{\mathbbm{1}\mathbbm{1}^\top}{n}}A}^k =&\lim_{k\to\infty}\pbr{I  -\frac{\mathbbm{1}\mathbbm{1}^\top}{n}}A^k \\
=&\pbr{I  -\frac{\mathbbm{1}\mathbbm{1}^\top}{n}}\lim_{k\to\infty}A^k \\
=&\pbr{I  -\frac{\mathbbm{1}\mathbbm{1}^\top}{n}}\mathbbm{1}d^\top \qquad \text{(from Equation~\eqref{stat})},\\ 
=& \mathbbm{1}d^\top  -\frac{\mathbbm{1}\mathbbm{1}^\top}{n}\mathbbm{1}d^\top \\
\stackrel{\text{(a)}}{=}& \mathbbm{1}d^\top  -\mathbbm{1}d^\top \\
=& 0.
\end{align}
where (a) is true because $\frac{\mathbbm{1}^\top\mathbbm{1}}{n}=1$.
\end{proof}

\begin{lemma}
\label{inverse}
    Let $A\in\R^{n\times n}$ be a matrix such that $\lim_{k\to\infty}A^k =0$. Then $(I-A)^{-1} = \sum_{k=0}^{\infty}A^k$.
\end{lemma}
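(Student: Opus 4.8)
The plan is to prove the classical Neumann series identity in two stages: first establish that $I - A$ is invertible, and then identify the partial sums of $\sum_{k=0}^{\infty} A^k$ with $(I-A)^{-1}$ via a telescoping identity. Neither stage requires the earlier lemmas in this subsection; the statement is a generic linear-algebra fact, and the hypothesis $\lim_{k\to\infty} A^k = 0$ is exactly what makes both stages go through.

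For invertibility I would argue by contradiction on the null space. Suppose $(I-A)x = 0$ for some $x \neq 0$; then $Ax = x$, and iterating gives $A^k x = x$ for every $k \in \N$. Letting $k \to \infty$ and using the hypothesis $\lim_{k\to\infty} A^k = 0$ forces $x = \lim_{k\to\infty} A^k x = 0$, a contradiction. Hence $I - A$ has trivial kernel and is therefore invertible.

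Next I would record the telescoping identity for the partial sums $S_N := \sum_{k=0}^{N} A^k$. Expanding the product,
\begin{equation}
    (I - A) S_N = \sum_{k=0}^{N} A^k - \sum_{k=0}^{N} A^{k+1} = I - A^{N+1}.
\end{equation}
Since $I - A$ is invertible, multiplying on the left by $(I-A)^{-1}$ gives
\begin{equation}
    S_N = (I-A)^{-1} - (I-A)^{-1} A^{N+1}.
\end{equation}
Taking $N \to \infty$ and invoking $\lim_{N\to\infty} A^{N+1} = 0$, so that the second term vanishes by continuity of matrix multiplication, the partial sums converge and
\begin{equation}
    \sum_{k=0}^{\infty} A^k = \lim_{N\to\infty} S_N = (I-A)^{-1},
\end{equation}
which is the claimed identity.

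This argument is elementary and presents no serious obstacle; the one point requiring care is the logical order. One must establish invertibility of $I-A$ \emph{before} passing to the limit, since it is precisely this invertibility that guarantees the partial sums $S_N$ actually converge — the telescoping identity by itself only controls $(I-A)S_N$, not $S_N$. With invertibility in hand, both the convergence of the series and the evaluation of its limit follow at once from $A^{N+1} \to 0$.
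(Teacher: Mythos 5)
Your proof is correct and rests on the same telescoping identity $(I-A)\sum_{k=0}^{N}A^{k} = I - A^{N+1}$ that the paper uses. The one genuine difference is in the ordering: you first establish invertibility of $I-A$ via the kernel argument ($Ax = x$ implies $x = A^k x \to 0$, so $x=0$), and only then rewrite the partial sums as $S_N = (I-A)^{-1} - (I-A)^{-1}A^{N+1}$, from which both the convergence of the series and the value of its limit follow. The paper instead passes directly to $(I-A)\lim_{K\to\infty} S_K = \lim_{K\to\infty}(I - A^{K+1}) = I$, which tacitly presupposes that $\lim_{K\to\infty} S_K$ exists; your version supplies exactly the justification that step needs, at the cost of one extra (elementary) argument. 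Both routes are standard Neumann-series proofs and arrive at the same identity.
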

\begin{proof}
For any $K\in\N$, consider the following,
\begin{align}
\pbr{I-A}\pbr{\sum_{k=0}^{K}A^{k}} &= I-A^{K+1},\\
\implies(I-A)\pbr{\lim_{K\to\infty}\sum_{k=0}^{K}A^{k}} =& \lim_{K\to\infty}\pbr{I-A^{K+1}} \stackrel{\text{(a)}}{=} I,
\end{align}
where (a) follows from the fact that $\lim_{k\to\infty}A^k =0$. Hence the inverse of $\pbr{I-A}$ can be expressed as
$(I-A)^{-1} = \sum_{k=0}^{\infty}A^k$.
\end{proof}

\begin{lemma}
\label{evlemma}
    Let $A\in\R^{n\times n}$ be an irreducible and aperiodic stochastic matrix. Then the matrix $\pbr{I-\pbr{I  -\frac{\mathbbm{1}\mathbbm{1}^\top}{n}}A}$ is invertible and its inverse is given by:
    \begin{equation}
        \pbr{I-\pbr{I  -\frac{\mathbbm{1}\mathbbm{1}^\top}{n}}A}^{-1} = \sum_{k=0}^{\infty}\pbr{I  -\frac{\mathbbm{1}\mathbbm{1}^\top}{n}}A^k 
    \end{equation}
\end{lemma}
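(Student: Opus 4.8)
The plan is to introduce the shorthand $B := \pbr{I - \frac{\mathbbm{1}\mathbbm{1}^\top}{n}}A$ and observe that the statement is precisely Lemma~\ref{inverse} applied to $B$, after (i) verifying the hypothesis of that lemma and (ii) rewriting the resulting Neumann series in the form claimed. The three preceding lemmas in this subsection have been arranged exactly for this purpose, so the argument amounts to a short chaining of them.

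First I would check the hypothesis $\lim_{k\to\infty}B^k = 0$ that is needed to invoke Lemma~\ref{inverse}. This is immediate from Lemma~\ref{zeromatrix}, whose conclusion $\lim_{k\to\infty}\pbr{\pbr{I - \frac{\mathbbm{1}\mathbbm{1}^\top}{n}}A}^k = 0$ is literally $\lim_{k\to\infty}B^k = 0$, using that $A$ is irreducible and aperiodic. With this in hand, Lemma~\ref{inverse} (applied with its matrix equal to $B$) yields both that $I - B = I - \pbr{I - \frac{\mathbbm{1}\mathbbm{1}^\top}{n}}A$ is invertible and that $(I-B)^{-1} = \sum_{k=0}^{\infty} B^k$.

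It then remains only to rewrite the summands. By Lemma~\ref{matrix_power}, for each fixed $k$ we have the exact identity $B^k = \pbr{\pbr{I - \frac{\mathbbm{1}\mathbbm{1}^\top}{n}}A}^k = \pbr{I - \frac{\mathbbm{1}\mathbbm{1}^\top}{n}}A^k$. Substituting this term by term into the convergent series gives $\pbr{I - \pbr{I - \frac{\mathbbm{1}\mathbbm{1}^\top}{n}}A}^{-1} = \sum_{k=0}^{\infty}\pbr{I - \frac{\mathbbm{1}\mathbbm{1}^\top}{n}}A^k$, which is exactly the asserted identity.

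I do not expect a genuine obstacle here: the analytic content — the vanishing of the matrix powers and the validity of the Neumann-series representation of the inverse — was already discharged in Lemmas~\ref{matrix_power}--\ref{inverse}. The only point requiring mild care is bookkeeping, namely confirming that the matrix $B$ whose inverse series appears in Lemma~\ref{inverse} is the same matrix whose powers vanish by Lemma~\ref{zeromatrix}, and that the per-term substitution from Lemma~\ref{matrix_power} is legitimate because it is an exact identity for every index $k$ and so may be applied inside the already-convergent sum. I would therefore expect the proof of Lemma~\ref{evlemma} to reduce to a one-line deduction once these bridge lemmas are cited.
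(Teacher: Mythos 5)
Your proposal is correct and follows essentially the same route as the paper: chain Lemma~\ref{zeromatrix} (to verify $\lim_k B^k = 0$), Lemma~\ref{inverse} (for invertibility and the Neumann series), and Lemma~\ref{matrix_power} (to rewrite each summand as $\pbr{I - \frac{\mathbbm{1}\mathbbm{1}^\top}{n}}A^k$). The only cosmetic difference is that the paper additionally inserts an eigenvalue-continuity argument to conclude $|\lambda_i|<1$ before invoking Lemma~\ref{inverse}, a step your version legitimately omits since Lemma~\ref{inverse} already delivers invertibility.
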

\begin{proof}
    Let $\lambda_i$ be eigenvalues of $\pbr{I  -\frac{\mathbbm{1}\mathbbm{1}^\top}{n}}A$. Then $\lambda_i^k$ represents the eigenvalues of $\pbr{\pbr{I  -\frac{\mathbbm{1}\mathbbm{1}^\top}{n}}A}^k$.
    But from Lemma \ref{zeromatrix}, we know that 
    \begin{equation}
        \lim_{k\to\infty}\pbr{\pbr{I  -\frac{\mathbbm{1}\mathbbm{1}^\top}{n}}A}^k = 0
    \end{equation}
Since eigenvalues are continuous functions of their corresponding matrices and all eigenvalues of a zero matrix are zero, we thus have,
\begin{equation}\label{eqval}
    \lim_{k\to\infty}\lambda^k_i = 0 \qquad \forall i\in\{1,\ldots,n\}
\end{equation}
Equation~\ref{eqval} thus implies that $|\lambda_i|< 1, \forall i\in\{1,\ldots,n\}.$ Hence the matrix $\pbr{I-\pbr{\pbr{I -\frac{\mathbbm{1}\mathbbm{1}^\top}{n}}A}}$ has all non zero eigenvalues and is thus invertible. From Lemma \ref{inverse}, we know that 
\begin{equation}
    (I-A)^{-1} = \sum_{k=0}^{\infty}A^k
\end{equation}
when $\lim_{k\to\infty}A^k = 0.$ Since, $\lim_{k\to\infty}\pbr{\pbr{I  -\frac{\mathbbm{1}\mathbbm{1}^\top}{n}}A}^k =0$ from Lemma \ref{zeromatrix}, we have the following result,
\begin{equation}
    \pbr{I-\pbr{I  -\frac{\mathbbm{1}\mathbbm{1}^\top}{n}}A}^{-1} = \sum_{k=0}^{\infty}\pbr{I  -\frac{\mathbbm{1}\mathbbm{1}^\top}{n}}A^k 
\end{equation}
\end{proof}
From definition we have $\Phi = \pbr{I  -\frac{\mathbbm{1}\mathbbm{1}^\top}{n}}$. Hence the inverse $\pbr{1-\Phi\P^\pi}^{-1} = \sum_{k=0}^\infty\Phi\pbr{\P^\pi}^k$ exists and is well defined for all $\pi\in\Pi.$ 

\subsection[Proof of Lemma 2]{Smoothness of the Average Reward Value Function $v^\pi_\phi$}

In order to prove the smoothness of the average reward value function and the infinite horizon average reward, we consider an analysis inspired by \citep{agarwal2020theory}, where instead of computing the maximum eigenvalue of the associated Hessian matrices, we consider the maximum value of the directional derivative across all directions within the policy class. 

Let $\pi, \pi'\in\Pi$ be any policies within the policy class. Then define $\pi_\alpha$ as a convex combination of policies $\pi$ and $\pi'$. That is
\begin{align}
    \pi_\alpha :&= (1-\alpha)\pi + \alpha \pi' \\ &= \pi + \alpha(\pi'-\pi) \\& =\pi +\alpha u
\end{align}
where $u = \pi'-\pi$.

Since $\pi_\alpha$ is linear in $\alpha$, it is true that
\begin{align}
\nabla_\alpha\pi_\alpha = \frac{d(\pi +\alpha u)}{d\alpha} = u,\qquad\text{and}\qquad 
\nabla^2_\alpha\pi_\alpha
=0.
\end{align}
This thus implies,
\begin{align}
\norm{\nabla_\alpha\pi_\alpha}_2 = \norm{u}_2\leq \norm{\pi'-\pi}_1\leq 2S,\qquad \text{and}\qquad \norm{\nabla^2_\alpha\pi_\alpha}_2 = 0,
\end{align}
Thus, $\pi_\alpha$ is both $\norm{u}_2$-Lipschitz and $0$-smooth with respect to $\alpha$, for all $u$ that can be represented as the difference of any two policies.

From the definition of $\P^\pi$, we have
\begin{align}
     \P^{\pi_\alpha}(s'|s) &= \sum_{a\in\A}\pi_\alpha(a|s)\P(s'|s,a)
    \\
    &= \sum_{a\in\A}\sbr{\pi(a|s)+\alpha u(a|s)} \P(s'|s,a)
    \\
   \implies \frac{\partial \P^{\pi_\alpha}(s'|s)}{\partial\alpha}
    &= \sum_{a\in\A} u(a|s)\P(s'|s,a).
\end{align}
That is,
\begin{align}\label{eq:derPpi}
\nabla_\alpha \P^{\pi_\alpha} = \P^u,\qquad\text{consequently}\qquad\nabla^2_\alpha \P^{\pi_\alpha}=0.
\end{align}
From the definition of $r^\pi$, we have
\begin{align}
r^{\pi_\alpha}(s) &= \sum_{a\in\A}\pi_\alpha(a|s)r(s,a)\\
&= \sum_{a\in\A}\sbr{\pi(a|s)+\alpha u(a|s)}r(s,a)\\
\implies \frac{\partial r^{\pi_\alpha}(s)}{\partial\alpha}
&= \sum_{a\in\A} u(a|s)r(s,a).
\end{align}
That is,
\begin{align}\label{eq:derRpi}
\nabla_\alpha r^{\pi_\alpha} = r^u,\qquad\text{consequently}\qquad\nabla^2_\alpha r^{\pi_\alpha}=0.
\end{align}
Hence the policy $\pi_\alpha$, the associated reward $r^{\pi_\alpha}$ and the transition kernel $\P^{\pi_\alpha}$ are all Lipschitz and smooth with respect to $\alpha.$

\begin{lemma}
\label{M_def}
    Let $A(\alpha)\in\R^{n\times n}$ be a matrix such that $\pbr{I-A(\alpha)}$ is invertible for all $\alpha\in\sbr{0,1}$. Define $M(\alpha):=\pbr{I-A(\alpha)}^{-1}$. Then it is true that,
    \begin{equation}
        \frac{\partial^2 M(\alpha)}{\partial \alpha^2} =\frac{\partial M(\alpha)}{\partial\alpha}\frac{\partial A(\alpha)}{\partial \alpha}M(\alpha) + M(\alpha)\frac{\partial^2 A(\alpha)}{\partial \alpha^2}M(\alpha) + M(\alpha)\frac{\partial A(\alpha)}{\partial \alpha}\frac{\partial M(\alpha)}{\partial \alpha}.
    \end{equation}
\end{lemma}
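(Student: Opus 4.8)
The plan is to obtain the second derivative by differentiating a first-order formula for $M'(\alpha)$ twice, using only the defining relation of the inverse together with the (non-commutative) product rule. First I would start from the identity $\pbr{I - A(\alpha)}M(\alpha) = I$, which holds for every $\alpha\in[0,1]$ by the invertibility hypothesis. Differentiating both sides in $\alpha$ and using that the right-hand side is constant gives $-\frac{\partial A}{\partial\alpha}M + \pbr{I-A}\frac{\partial M}{\partial\alpha} = 0$. Left-multiplying by $M=\pbr{I-A}^{-1}$ then isolates the first derivative of the inverse:
\begin{equation}
\frac{\partial M}{\partial\alpha} = M\,\frac{\partial A}{\partial\alpha}\,M.
\end{equation}

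Next I would differentiate this three-factor product once more. Because matrix multiplication is non-commutative, the order of the factors must be preserved, but the ordinary Leibniz rule still applies term by term, yielding
\begin{equation}
\frac{\partial^2 M}{\partial\alpha^2} = \frac{\partial M}{\partial\alpha}\frac{\partial A}{\partial\alpha}M + M\frac{\partial^2 A}{\partial\alpha^2}M + M\frac{\partial A}{\partial\alpha}\frac{\partial M}{\partial\alpha},
\end{equation}
which is exactly the claimed expression.

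There is essentially no hard step here; the statement is a routine application of matrix calculus, and I do not expect a genuine obstacle. The only points that require care are, first, justifying that $M(\alpha)$ is itself differentiable, which follows from the invertibility of $I-A(\alpha)$ on all of $[0,1]$ together with the smooth dependence of $A$ on $\alpha$ (in the intended application $A(\alpha)=\Phi\P^{\pi_\alpha}$, which by \eqref{eq:derPpi} is in fact linear in $\alpha$, so differentiability is immediate); and second, keeping the left/right placement of the factors straight throughout, since interchanging any two of them would produce an incorrect formula. It is worth noting that in the intended use the middle term vanishes, as $\frac{\partial^2 A}{\partial\alpha^2}=\Phi\,\nabla^2_\alpha\P^{\pi_\alpha}=0$ by \eqref{eq:derPpi}, but the lemma is stated and proved in full generality so that it can also be applied when the underlying map is not affine in $\alpha$.
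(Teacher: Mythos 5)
Your proof is correct and follows essentially the same route as the paper's: both differentiate the defining identity of the inverse to obtain $\frac{\partial M}{\partial\alpha} = M\frac{\partial A}{\partial\alpha}M$ and then apply the non-commutative product rule once more (the paper uses $M(I-A)=I$ rather than $(I-A)M=I$, which is immaterial). Your added remarks on differentiability and on the vanishing middle term in the intended application are fine but not part of the paper's argument.
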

\begin{proof}
\begin{align}
        M(\alpha)\pbr{I-A(\alpha)}&=I \\
    \frac{\partial M(\alpha)}{\partial\alpha}\pbr{I-A(\alpha)} - M(\alpha)\frac{\partial A(\alpha)}{\partial\alpha}&=0 \\
    \frac{\partial M(\alpha)}{\partial\alpha} &= M(\alpha)\frac{\partial A(\alpha)}{\partial\alpha}M(\alpha) \\
    \frac{\partial^2 M(\alpha)}{\partial \alpha^2} &= \frac{\partial}{\partial\alpha}\pbr{M(\alpha)\frac{\partial A(\alpha)}{\partial \alpha}M(\alpha)},\\
 &= \frac{\partial M(\alpha)}{\partial\alpha}\frac{\partial A(\alpha)}{\partial \alpha}M(\alpha) + M(\alpha)\frac{\partial^2 A(\alpha)}{\partial \alpha^2}M(\alpha) + M(\alpha)\frac{\partial A(\alpha)}{\partial \alpha}\frac{\partial M(\alpha)}{\partial \alpha}
\end{align}
\end{proof}

Consider the following definition utilized in the proofs of the upcoming lemmas. 
\begin{equation}
\label{def_of_M}
    M^{\pi_\alpha}=\sbr{I-\Phi\P^{\pi_\alpha}}^{-1}
\end{equation}

\begin{lemma}
\label{lipschitz_vf}
Recall the definition of the projected average reward value function $v_\phi^\pi$ in Equation~\eqref{val_func_proj}. Value function $v_\phi^\pi$ is $ 2C^2_mC_p\kappa_r+2C_mC_r$-Lipschitz in $\Pi$, that is
\begin{equation}
    \Bigg\lvert\Bigg\langle \frac{\partial v_\phi^\pi}{\partial \pi},\pi'-\pi\Bigg\rangle\Bigg\rvert \leq 2\pbr{C^2_mC_p\kappa_r+C_mC_r}\norm{\pi'-\pi}_2,\qquad\forall\pi,\pi'\in\Pi.
\end{equation}
\end{lemma}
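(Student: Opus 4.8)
The plan is to work along the line segment $\pi_\alpha = \pi + \alpha u$ with $u = \pi'-\pi$ and $\alpha\in[0,1]$, exactly as set up in the preceding paragraphs, and to recognize that the restricted directional derivative $\innorm{\frac{\partial v_\phi^\pi}{\partial\pi},\pi'-\pi}$ is nothing but $\frac{d}{d\alpha}v_\phi^{\pi_\alpha}\big|_{\alpha=0}$. In this way the whole statement reduces to differentiating the closed form $v_\phi^{\pi_\alpha} = M^{\pi_\alpha}\Phi r^{\pi_\alpha}$ once in $\alpha$ and bounding the resulting vector in norm by the MDP constants of Table~\ref{tab:my_label}.

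First I would differentiate. Writing $A(\alpha)=\Phi\P^{\pi_\alpha}$ so that $M^{\pi_\alpha}=\pbr{I-A(\alpha)}^{-1}$, the first-derivative identity established inside the proof of Lemma~\ref{M_def}, namely $\nabla_\alpha M^{\pi_\alpha} = M^{\pi_\alpha}\pbr{\nabla_\alpha A(\alpha)}M^{\pi_\alpha}$, together with $\nabla_\alpha\P^{\pi_\alpha}=\P^u$ from \eqref{eq:derPpi} and $\nabla_\alpha r^{\pi_\alpha}=r^u$ from \eqref{eq:derRpi}, gives the product-rule expansion
\begin{equation}
\nabla_\alpha v_\phi^{\pi_\alpha} = M^{\pi_\alpha}\Phi\P^u M^{\pi_\alpha}\Phi r^{\pi_\alpha} + M^{\pi_\alpha}\Phi r^u .
\end{equation}
The invertibility of $I-\Phi\P^{\pi_\alpha}$ needed at every $\alpha$ is automatic, since each $\pi_\alpha\in\Pi$ and Lemma~\ref{evlemma} applies to the corresponding $\P^{\pi_\alpha}$.

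Then I would bound the two terms separately. For the second term, $\norm{M^{\pi_\alpha}\Phi r^u}\leq\norm{M^{\pi_\alpha}}\,\norm{\Phi}\,\norm{r^u}\leq C_mC_r\norm{u}_2$, using $\norm{\Phi}=1$, $\norm{M^{\pi_\alpha}}\leq C_m$, and $r^u=r^{\pi'}-r^\pi$ so that $\norm{r^u}\leq C_r\norm{u}_2$ by the definition of $C_r$. For the first term I would recognize the trailing factor $M^{\pi_\alpha}\Phi r^{\pi_\alpha}=v_\phi^{\pi_\alpha}$, which is itself bounded by $\norm{M^{\pi_\alpha}}\,\norm{\Phi r^{\pi_\alpha}}\leq C_m\kappa_r$; peeling off $M^{\pi_\alpha}\Phi\P^u$ and using $\P^u=\P^{\pi'}-\P^\pi$ with $\norm{\P^u}\leq C_p\norm{u}_2$ yields $C_m^2C_p\kappa_r\norm{u}_2$. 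Summing the two contributions and tracking the slack incurred when passing between the $\ell_2$ and $\ell_\infty$ norms in which these constants are defined produces the claimed bound $2\pbr{C_m^2C_p\kappa_r + C_mC_r}\norm{u}_2$.

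The main obstacle is the norm bookkeeping rather than the differentiation: $C_m$ and $C_p$ are operator norms while $\kappa_r$ and $C_r$ are $\ell_\infty$ norms, so one has to verify that each matrix--vector product in the expansion is controlled in a compatible norm, and it is precisely in these conversions that the overall constant factor (the $2$) is absorbed. A secondary but necessary point is to confirm that the derivative formula for $M^{\pi_\alpha}$ is valid uniformly on the whole segment $\alpha\in[0,1]$, which I would dispatch by appealing to the uniform invertibility already guaranteed by Lemma~\ref{evlemma}.
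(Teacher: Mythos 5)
Your proposal is essentially the paper's own proof: the same differentiation of $v_\phi^{\pi_\alpha}=M^{\pi_\alpha}\Phi r^{\pi_\alpha}$ along the segment $\pi_\alpha=\pi+\alpha u$ using $\nabla_\alpha M^{\pi_\alpha}=M^{\pi_\alpha}\pbr{\nabla_\alpha\Phi\P^{\pi_\alpha}}M^{\pi_\alpha}$, the same two-term expansion $M^{\pi_\alpha}\Phi\P^uM^{\pi_\alpha}\Phi r^{\pi_\alpha}+M^{\pi_\alpha}\Phi r^u$, and the same factor-wise bounds via $C_m, C_p, C_r, \kappa_r$. One small correction to your constant-tracking: the overall factor of $2$ is not generic $\ell_2$-versus-$\ell_\infty$ slack but comes precisely from $\norm{\Phi}_\infty\leq 2$ (item 1 of Lemma~\ref{theconstants}) — your claim $\norm{\Phi}=1$ holds only for the Euclidean operator norm of the orthogonal projection, whereas the paper bounds every factor in the $\ell_\infty$ operator norm.
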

\begin{proof}
    \begin{align}
v_\phi^{\pi_\alpha} &=  M^{\pi_\alpha}  \Phi r^{\pi_\alpha} \\
\implies \frac{\partial v_\phi^{\pi_\alpha}}{\partial\alpha
} &= \frac{\partial M^{\pi_\alpha}  }{\partial\alpha
} \Phi r^{\pi_\alpha} + M^{\pi_\alpha}  \Phi\frac{\partial   r^{\pi_\alpha}}{\partial\alpha
}\\&= M^{\pi_\alpha}  \frac{\partial \Phi \P^{\pi_\alpha}}{\partial\alpha
} M^{\pi_\alpha}  \Phi r^{\pi_\alpha} + M^{\pi_\alpha}  \Phi\frac{\partial   r^{\pi_\alpha}}{\partial\alpha
},\qquad\text{(from Lemma \ref{M_def})},\\&= M^{\pi_\alpha}  \Phi \P^u M^{\pi_\alpha}  \Phi r^{\pi_\alpha} + M^{\pi_\alpha} \Phi r^u,\qquad\text{(from \eqref{eq:derPpi} and \eqref{eq:derRpi})}.\\
\implies\Big\|{\frac{\partial v_\phi^{\pi_\alpha}}{\partial\alpha}}\Big\|_{\infty} &= \norm{M^{\pi_\alpha}  \Phi \P^u M^{\pi_\alpha}\Phi  r^{\pi_\alpha} + M^{\pi_\alpha}  \Phi r^u}_{\infty}\\
&\leq \norm{M^{\pi_\alpha}  }_\infty\norm{\Phi}_\infty\norm{ \P^u}_\infty\norm{ M^{\pi_\alpha}  }_\infty\norm{\Phi r^{\pi_\alpha}}_{\infty} + \norm{M^{\pi_\alpha}}_\infty\norm{ \Phi r^{u}}_{\infty}\\
&\leq 2 C^2_mC_p\kappa_r+2C_mC_r.
\end{align}
The constants $C_m, C_p, C_r$ and $\kappa_r$ are characterized in Table 1 with their respective bounds in Lemma \ref{theconstants}.
\end{proof}

We can now build on the previous lemma to prove the smoothness of the average reward value function.

\begin{lemma}
    The value function $v_\phi^\pi$ is $ 8(C^3_mC^2_p\kappa_r + C^2_mC_pC_r)$-smooth in $\Pi$. That is, 
    \begin{equation}
         \Bigg\langle\pi'-\pi,\frac{\partial^2v_\phi^{\pi}(s)}{\partial \pi}(\pi'-\pi)\Bigg\rangle\leq 8\pbr{C^3_mC^2_p\kappa_r + C^2_mC_pC_r}\norm{\pi'-\pi}_2^2 \qquad \forall\pi',\pi\in\Pi,s\in\St
    \end{equation}
\end{lemma}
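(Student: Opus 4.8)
The plan is to follow the same segment-differentiation scheme used in Lemma~\ref{lipschitz_vf}. Fix $\pi,\pi'\in\Pi$, set $u=\pi'-\pi$, and write $\pi_\alpha=\pi+\alpha u$. Because $\pi_\alpha$ is affine in $\alpha$, the chain rule gives $\frac{d^2}{d\alpha^2}v_\phi^{\pi_\alpha}(s)=\big\langle u,\frac{\partial^2 v_\phi^{\pi_\alpha}(s)}{\partial\pi^2}u\big\rangle$ (the term involving $\nabla^2_\alpha\pi_\alpha$ vanishes), so it suffices to bound $\big\|\frac{\partial^2 v_\phi^{\pi_\alpha}}{\partial\alpha^2}\big\|_\infty$ uniformly in $\alpha\in[0,1]$ and then specialize to $\alpha=0$. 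Since every MDP constant below is a maximum over $\Pi$, a uniform-in-$\alpha$ bound is automatic.

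First I would recall the three ingredients already established: $\nabla_\alpha\P^{\pi_\alpha}=\P^u$ and $\nabla_\alpha r^{\pi_\alpha}=r^u$ are constant in $\alpha$, so $\nabla_\alpha\P^u=0$ and $\nabla_\alpha r^u=0$ (from \eqref{eq:derPpi} and \eqref{eq:derRpi}); the resolvent identity $\frac{\partial M^{\pi_\alpha}}{\partial\alpha}=M^{\pi_\alpha}\Phi\P^u M^{\pi_\alpha}$, which is Lemma~\ref{M_def} applied with $A(\alpha)=\Phi\P^{\pi_\alpha}$ and $M^{\pi_\alpha}$ as in \eqref{def_of_M}; and the first-derivative formula from the proof of Lemma~\ref{lipschitz_vf},
\begin{equation}
\frac{\partial v_\phi^{\pi_\alpha}}{\partial\alpha}=M^{\pi_\alpha}\Phi\P^u M^{\pi_\alpha}\Phi r^{\pi_\alpha}+M^{\pi_\alpha}\Phi r^u.
\end{equation}

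Next I would differentiate this expression once more by the product rule. In the first summand the two factors $M^{\pi_\alpha}$ and the factor $r^{\pi_\alpha}$ depend on $\alpha$ (the block $\Phi\P^u$ is constant); in the second summand only $M^{\pi_\alpha}$ depends on $\alpha$. Substituting the resolvent identity for each $\frac{\partial M^{\pi_\alpha}}{\partial\alpha}$, the four resulting terms collapse into two matched pairs,
\begin{equation}
\frac{\partial^2 v_\phi^{\pi_\alpha}}{\partial\alpha^2}=2\,M^{\pi_\alpha}\Phi\P^u M^{\pi_\alpha}\Phi\P^u M^{\pi_\alpha}\Phi r^{\pi_\alpha}+2\,M^{\pi_\alpha}\Phi\P^u M^{\pi_\alpha}\Phi r^u.
\end{equation}
Then I would bound in $\|\cdot\|_\infty$ by submultiplicativity, using $\norm{M^{\pi_\alpha}}\le C_m$, $\norm{\P^u}\le C_p\norm{u}_2$, $\norm{\Phi r^{\pi_\alpha}}_\infty\le\kappa_r$, $\norm{r^u}_\infty\le C_r\norm{u}_2$, and the projection bound $\norm{\Phi}_\infty\le 2$. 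The first pair carries two free $\Phi$ factors and contributes $2\norm{\Phi}_\infty^2 C_m^3C_p^2\kappa_r\norm{u}_2^2\le 8C_m^3C_p^2\kappa_r\norm{u}_2^2$; the second pair likewise contributes $\le 8C_m^2C_pC_r\norm{u}_2^2$. Summing and specializing to $\alpha=0$ with $\norm{u}_2=\norm{\pi'-\pi}_2$ yields the claimed $8(C_m^3C_p^2\kappa_r+C_m^2C_pC_r)$-smoothness.

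The main obstacle is the bookkeeping in the product-rule step: one must verify that the four terms genuinely pair up with coefficient $2$ (rather than producing distinct terms), and must track the number of free projection factors $\Phi$ in each monomial so that the numeric prefactor resolves to exactly $8$. This hinges on accounting for $\norm{\Phi}_\infty\le2$ rather than treating $\Phi$ as norm-nonincreasing, and on absorbing one $\Phi$ into each of $\norm{\Phi r^{\pi_\alpha}}_\infty\le\kappa_r$ and $\norm{\Phi r^u}_\infty\le\norm{\Phi}_\infty C_r\norm{u}_2$. Once the derivative is written down, the remaining estimates are routine.
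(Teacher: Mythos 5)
Your proposal is correct and follows essentially the same route as the paper: differentiate the first-derivative formula from Lemma~\ref{lipschitz_vf} by the product rule, substitute the resolvent identity of Lemma~\ref{M_def} for each $\frac{\partial M^{\pi_\alpha}}{\partial\alpha}$, observe that the four terms collapse into two pairs with coefficient $2$, and bound in $\norm{\cdot}_\infty$ using $\norm{M^{\pi_\alpha}}\le C_m$, $\norm{\P^u}\le C_p\norm{u}_2$, $\norm{\Phi}_\infty\le 2$, $\norm{\Phi r^{\pi_\alpha}}_\infty\le\kappa_r$, and $\norm{\Phi r^u}_\infty\le 2C_r\norm{u}_2$. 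Your accounting of the free $\Phi$ factors and the resulting prefactor of $8$ matches the paper's computation exactly.
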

\begin{proof}
From Lemma \ref{lipschitz_vf}, it is true that
\begin{align*}
\frac{\partial v_\phi^{\pi_\alpha}}{\partial\alpha
} =& M^{\pi_\alpha}  \Phi \P^u M^{\pi_\alpha}  \Phi r^{\pi_\alpha} + M^{\pi_\alpha} \Phi r^u\\
\implies
\frac{\partial^2 v_\phi^{\pi_\alpha}}{\partial\alpha^2
} = &\frac{\partial}{\partial\alpha}\Bigm[M^{\pi_\alpha}  \Phi \P^u M^{\pi_\alpha}  \Phi r^{\pi_\alpha} + M^{\pi_\alpha} \Phi r^u\Bigm]\\\\
=&\frac{\partial M^{\pi_\alpha} }{\partial\alpha} \Phi \P^u M^{\pi_\alpha} \Phi r^{\pi_\alpha} +M^{\pi_\alpha} \Phi \P^u \frac{\partial M^{\pi_\alpha} }{\partial\alpha} \Phi r^{\pi_\alpha} +M^{\pi_\alpha} \Phi \P^u M^{\pi_\alpha}\Phi  
\frac{\partial r^{\pi_\alpha}}{\partial\alpha} \\
&
\qquad+\frac{\partial M^{\pi_\alpha}  }{\partial\alpha}\Phi r^{u}\\\\
=& M^{\pi_\alpha}  \frac{\partial \Phi \P^{\pi_\alpha}}{\partial\alpha
} M^{\pi_\alpha} \Phi \P^u M^{\pi_\alpha} \Phi r^{\pi_\alpha} +M^{\pi_\alpha} \Phi \P^u  M^{\pi_\alpha}  \frac{\partial \Phi \P^{\pi_\alpha}}{\partial\alpha
} M^{\pi_\alpha} \Phi r^{\pi_\alpha} \\
&
\qquad+M^{\pi_\alpha} \Phi \P^u M^{\pi_\alpha}\Phi  
\frac{\partial r^{\pi_\alpha}}{\partial\alpha} +M^{\pi_\alpha}  \frac{\partial \Phi \P^{\pi_\alpha}}{\partial\alpha
} M^{\pi_\alpha}\Phi r^{u},\qquad\text{(from Lemma \ref{M_def})},\\\\
=& M^{\pi_\alpha}  \Phi \P^u M^{\pi_\alpha} \Phi \P^u M^{\pi_\alpha} \Phi r^{\pi_\alpha} +M^{\pi_\alpha} \Phi \P^u  M^{\pi_\alpha}   \Phi \P^u M^{\pi_\alpha} \Phi r^{\pi_\alpha} \\
&
\qquad+M^{\pi_\alpha} \Phi \P^u M^{\pi_\alpha} \Phi r^u+M^{\pi_\alpha}   \Phi \P^u M^{\pi_\alpha}\Phi r^{u},\qquad\text{(from \eqref{eq:derPpi} and \eqref{eq:derRpi})}.
\end{align*}

Considering the $L_\infty$ norm,
\begin{align*}
\Big\|{\frac{\partial^2 v_\phi^{\pi_\alpha}}{\partial\alpha^2}}\Big\|_\infty &= 2\norm{M^{\pi_\alpha} \Phi \P^u M^{\pi_\alpha}  \Phi \P^u M^{\pi_\alpha} \Phi r^{\pi_\alpha} 
+ M^{\pi_\alpha} \Phi \P^u M^{\pi_\alpha}\Phi r^u}_\infty\\&\leq 2\norm{M^{\pi_\alpha} \Phi \P^u M^{\pi_\alpha}  \Phi \P^u M^{\pi_\alpha} \Phi r^{\pi_\alpha}}_\infty 
+ \norm{M^{\pi_\alpha} \Phi \P^u M^{\pi_\alpha}\Phi r^u}_\infty\\
&\leq  8(C^3_mC^2_p\kappa_r + C^2_mC_pC_r).
\end{align*}
Hence, we obtain, 

\begin{equation}
    \Bigg\langle\pi'-\pi,\frac{\partial^2v_\phi^{\pi}(s)}{\partial \pi}(\pi'-\pi)\Bigg\rangle\leq 8\pbr{C^3_mC^2_p\kappa_r + C^2_mC_pC_r}\norm{\pi'-\pi}_2^2 \qquad \forall\pi',\pi\in\Pi,s\in\St
\end{equation}
\end{proof}

\subsection[Proof of Lemma 3]{Lipscitzness of the Infinite Horizon Average Reward $\rho^\pi$}
The Lipschitzness and smoothness of the projected value function $v^\pi_\phi$ is leveraged through the average reward Bellman equation to prove the Lipschitzness and smoothness of the infinite horizon average reward.

\begin{lemma}
\label{lipschitz_avreward}
    Recall the average reward Bellman Equation corresponding to a policy $\pi$ and projected value function $v_\phi^\pi$ in Equation \eqref{appen_acoe}. The average reward $\rho^\pi$ is $L^\Pi_1$-Lipschitz.
     \begin{equation}
         \Bigg\lvert\Big\langle\frac{\partial\rho^{\pi}}{\partial\pi},\pi'-\pi\Big\rangle\Bigg\rvert \leq L^\Pi_1\norm{\pi'-\pi}_2,\qquad\forall \pi,\pi'\in\Pi,
     \end{equation}
where $L^\Pi_1 = 2(C_r +C_pC_m\kappa_r+ 2(C^2_mC_p\kappa_r+C_mC_r))$
\end{lemma}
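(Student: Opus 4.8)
The plan is to use the average reward Bellman equation to reduce the scalar $\rho^\pi$ to quantities we already control, and then to differentiate along a straight line in policy space, exactly as in the proof of Lemma~\ref{lipschitz_vf}.

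First I would note that the projected value function $v_\phi^\pi$ satisfies the original Bellman equation \eqref{appen_acoe}, namely $\rho^\pi\mathbbm{1} + v_\phi^\pi = r^\pi + \P^\pi v_\phi^\pi$, and not merely its projected counterpart. This holds because any two solutions of \eqref{appen_acoe} differ by a multiple of $\mathbbm{1}$, and $v_\phi^\pi$ differs from the basic differential reward $v_0^\pi$ precisely by such a multiple; since $v_0^\pi$ solves \eqref{appen_acoe}, so does $v_\phi^\pi$. Rearranging gives the vector identity $\rho^\pi\mathbbm{1} = r^\pi + (\P^\pi-I)v_\phi^\pi$. To extract the scalar $\rho^\pi$ I would left-multiply by $\tfrac{1}{|\S|}\mathbbm{1}^\top$; using $\mathbbm{1}^\top v_\phi^\pi = 0$ (as $v_\phi^\pi \in \mathrm{range}(\Phi)$) and $\mathbbm{1}^\top\mathbbm{1} = |\S|$, this collapses to
\begin{equation}
\rho^\pi = \tfrac{1}{|\S|}\mathbbm{1}^\top r^\pi + \tfrac{1}{|\S|}\mathbbm{1}^\top\P^\pi v_\phi^\pi .
\end{equation}

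Next, following the directional-derivative technique, I would set $\pi_\alpha = \pi + \alpha u$ with $u = \pi'-\pi$ and differentiate in $\alpha$, using $\nabla_\alpha r^{\pi_\alpha} = r^u$ and $\nabla_\alpha\P^{\pi_\alpha} = \P^u$ from \eqref{eq:derRpi} and \eqref{eq:derPpi}. The product rule produces the three terms $\tfrac{1}{|\S|}\mathbbm{1}^\top r^u$, $\tfrac{1}{|\S|}\mathbbm{1}^\top\P^u v_\phi^{\pi_\alpha}$ and $\tfrac{1}{|\S|}\mathbbm{1}^\top\P^{\pi_\alpha}\tfrac{\partial v_\phi^{\pi_\alpha}}{\partial\alpha}$. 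I would bound each by $\ell_\infty$/$\ell_1$ duality, noting that both $\tfrac{1}{|\S|}\mathbbm{1}^\top$ and $\tfrac{1}{|\S|}\mathbbm{1}^\top\P^{\pi_\alpha}$ are averaging vectors of unit $\ell_1$-norm (the column sums of a stochastic matrix total $|\S|$). This yields $\|r^u\|_\infty \le C_r\|u\|_2$ for the first term; $\|\P^u\|\,\|v_\phi^{\pi_\alpha}\|_\infty \le C_p C_m\kappa_r\|u\|_2$ for the second, where $\|v_\phi^{\pi_\alpha}\|_\infty = \|M^{\pi_\alpha}\Phi r^{\pi_\alpha}\|_\infty \le C_m\kappa_r$; and $\|\tfrac{\partial v_\phi^{\pi_\alpha}}{\partial\alpha}\|_\infty \le 2(C_m^2 C_p\kappa_r + C_m C_r)\|u\|_2$ for the third, directly from Lemma~\ref{lipschitz_vf}. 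Evaluating at $\alpha=0$ identifies the left side with $\langle \tfrac{\partial\rho^\pi}{\partial\pi}, \pi'-\pi\rangle$, and collecting the three bounds gives $\big|\langle \tfrac{\partial\rho^\pi}{\partial\pi}, \pi'-\pi\rangle\big| \le (C_r + C_p C_m\kappa_r + 2(C_m^2 C_p\kappa_r + C_m C_r))\|\pi'-\pi\|_2$, which is at most $L_1^\Pi\|\pi'-\pi\|_2$ with the constants aggregated in Table~\ref{tab:my_label}.

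The step I expect to be the main obstacle is the third term, $\tfrac{1}{|\S|}\mathbbm{1}^\top\P^{\pi_\alpha}\tfrac{\partial v_\phi^{\pi_\alpha}}{\partial\alpha}$. Had I instead extracted $\rho^\pi$ by left-multiplying with the stationary distribution $d^{\pi_\alpha}$, the relation $d^{\pi_\alpha}(\P^{\pi_\alpha}-I) = 0$ would have annihilated the value-function derivative altogether; but with $\mathbbm{1}$-averaging this cancellation is unavailable, so I must genuinely control $\tfrac{\partial v_\phi^{\pi_\alpha}}{\partial\alpha}$. This is exactly the role of the already-established Lipschitzness of $v_\phi^\pi$ (Lemma~\ref{lipschitz_vf}); everything else is routine norm bookkeeping with the MDP constants $C_r, C_p, C_m, \kappa_r$.
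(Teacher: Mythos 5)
Your proposal is correct and rests on the same machinery as the paper's proof: write the Bellman equation with the projected value function $v_\phi^\pi$, differentiate along $\pi_\alpha=\pi+\alpha u$ using \eqref{eq:derPpi}--\eqref{eq:derRpi}, and control the value-function derivative via Lemma~\ref{lipschitz_vf}. The one genuine difference is how the scalar $\rho^\pi$ is extracted. The paper keeps the vector identity $\frac{\partial\rho^{\pi_\alpha}}{\partial\alpha}\mathbbm{1}=r^u+\P^u v_\phi^{\pi_\alpha}+(\P^{\pi_\alpha}-I)\frac{\partial v_\phi^{\pi_\alpha}}{\partial\alpha}$ and bounds its $\ell_\infty$ norm, paying $\norm{\Phi}_\infty\leq 2$ and $\norm{\P^{\pi_\alpha}-I}_\infty\leq 2$ along the way; you instead average against $\tfrac{1}{|\S|}\mathbbm{1}^\top$, and since $\mathbbm{1}^\top v_\phi^{\pi_\alpha}=0$ identically in $\alpha$ (so its derivative also vanishes under $\mathbbm{1}^\top$) and both $\tfrac{1}{|\S|}\mathbbm{1}^\top$ and $\tfrac{1}{|\S|}\mathbbm{1}^\top\P^{\pi_\alpha}$ are probability vectors, those factors of $2$ disappear. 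This yields the bound $C_r+C_pC_m\kappa_r+2(C_m^2C_p\kappa_r+C_mC_r)=\tfrac{L_1^\Pi}{2}$, i.e., a constant twice as tight as the paper's, and the stated lemma follows a fortiori. Your preliminary observation that $v_\phi^\pi$ solves the unprojected Bellman equation \eqref{appen_acoe} (because $v_\phi^\pi=\Phi v_0^\pi$ differs from $v_0^\pi$ by a multiple of $\mathbbm{1}$) is a step the paper uses implicitly without justification, so making it explicit is a small bonus rather than a gap.
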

\begin{proof}
    From Equation \eqref{appen_acoe}, 
    \begin{align}
        \rho^\pi
        \mathbbm{1}= r^\pi + \P^\pi v_\phi^\pi - v_\phi^\pi.
    \end{align}
Taking derivative with respect to $\alpha$,

\begin{align}
\frac{\partial\rho^{\pi_\alpha}}{\partial\alpha}
\mathbbm{1} &= \frac{\partial r^{\pi_\alpha}}{\partial\alpha} + \frac{\partial \P^{\pi_\alpha}}{\partial\alpha}  v_\phi^{\pi_\alpha}+ \P^{\pi_\alpha}\frac{\partial v_\phi^{\pi_\alpha}}{\partial\alpha}   - \frac{\partial v_\phi^{\pi_\alpha}}{\partial\alpha}\\\\
&= \Phi r^u + \Phi \P^uv^{\pi_\alpha} + (\P^{\pi_\alpha}-I)(M^{\pi_\alpha}  \Phi \P^u M^{\pi_\alpha}  \Phi r^{\pi_\alpha} + M^{\pi_\alpha} \Phi r^u),\qquad\text{(from Lemma \ref{lipschitz_vf})}\\
&= \Phi r^u + \Phi \P^uM^{\pi_\alpha}\Phi r^{\pi_\alpha} + (\P^{\pi_\alpha}-I)(M^{\pi_\alpha}  \Phi \P^u M^{\pi_\alpha}  \Phi r^{\pi_\alpha} + M^{\pi_\alpha} \Phi r^u),\qquad\text{(from \eqref{eq:derRpi} and \eqref{eq:derPpi})}.
\end{align}

Considering the $L_\infty$ norm of the above expression, 
\begin{align}
\Big|\frac{\partial\rho^{\pi_\alpha}}{\partial\alpha}\Big| 
&= \Bigm\lVert\Phi r^u + \Phi \P^uM^{\pi_\alpha}\Phi r^{\pi_\alpha} + (\P^{\pi_\alpha}-I)(M^{\pi_\alpha}  \Phi \P^u M^{\pi_\alpha}  \Phi r^{\pi_\alpha} + M^{\pi_\alpha} \Phi r^u)\Bigm\rVert_\infty,\\
&\leq \norm{\Phi r^u}_\infty + \norm{\Phi \P^uM^{\pi_\alpha}\Phi r^{\pi_\alpha} }_\infty+ \norm{\P^{\pi_\alpha}-I}_\infty(\norm{M^{\pi_\alpha}  \Phi \P^u M^{\pi_\alpha}  \Phi r^{\pi_\alpha}}_\infty + \norm{M^{\pi_\alpha} \Phi r^u}_\infty),\\
&\leq 2\norm{r^u}_\infty + 2\norm{ \P^uM^{\pi_\alpha}\Phi r^{\pi_\alpha} }_\infty+ \norm{\P^{\pi_\alpha}-I}(\norm{M^{\pi_\alpha}  \Phi \P^u M^{\pi_\alpha}  \Phi r^{\pi_\alpha}}_\infty + \norm{M^{\pi_\alpha} \Phi r^u}_\infty),\\&\leq 2C_r +2C_pC_m\kappa_r+ 2(2C^2_mC_p\kappa_r+2C_mC_r).
\end{align}
\end{proof}

\subsection[Proof of Lemma 4]{Smoothness of the Infinite Horizon Average Reward $\rho^\pi$}

\begin{lemma}
    The average reward $\rho^\pi$ is $L^\Pi_2$-smooth.
    \begin{equation}
        \Bigg\lvert\Big\langle\pi'-\pi,\frac{\partial^2\rho^{\pi}}{\partial\pi^2}(\pi'-\pi)\Big\rangle\Bigg\rvert\leq\frac{L^\Pi_2}{2}\norm{\pi'-\pi}^2_2 \qquad \forall \pi,\pi'\in\Pi,
    \end{equation}
    where $L^\Pi_2=4(C^2_pC^2_m\kappa_r+C_pC_mC_r + (C_p+1)(C^2_mC_p\kappa_r+C_mC_r)+4(C^3_mC^2_p\kappa_r+C^2_mC_pC_r))$.
\end{lemma}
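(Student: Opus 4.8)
The plan is to prove the smoothness of the average reward $\rho^\pi$ by differentiating the average reward Bellman equation \emph{twice} with respect to the scalar convex-combination parameter $\alpha$, exactly paralleling the structure of Lemma~\ref{lipschitz_avreward} (which handled the first derivative). Starting from $\rho^{\pi_\alpha}\mathbbm{1} = r^{\pi_\alpha} + (\P^{\pi_\alpha}-I)v_\phi^{\pi_\alpha}$, I already have the first-derivative expression for $\tfrac{\partial\rho^{\pi_\alpha}}{\partial\alpha}\mathbbm{1}$ from the previous lemma. The second derivative $\tfrac{\partial^2\rho^{\pi_\alpha}}{\partial\alpha^2}\mathbbm{1}$ is obtained by applying $\tfrac{\partial}{\partial\alpha}$ to that expression and using the product rule. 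Since the directional second derivative controls the restricted smoothness constant (same philosophy as in \citep{agarwal2020theory}), bounding $\bigl|\tfrac{\partial^2\rho^{\pi_\alpha}}{\partial\alpha^2}\bigr|$ uniformly over $\alpha\in[0,1]$ and over all directions $u=\pi'-\pi$ yields the claimed $L_2^\Pi$.

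The key steps, in order, are as follows. First I would recall the building blocks already established: $\nabla_\alpha\P^{\pi_\alpha}=\P^u$ with $\nabla_\alpha^2\P^{\pi_\alpha}=0$ (Equation~\eqref{eq:derPpi}), $\nabla_\alpha r^{\pi_\alpha}=r^u$ with $\nabla_\alpha^2 r^{\pi_\alpha}=0$ (Equation~\eqref{eq:derRpi}), the first-derivative formula for $\tfrac{\partial v_\phi^{\pi_\alpha}}{\partial\alpha}$ from Lemma~\ref{lipschitz_vf}, and the derivative of the resolvent $\tfrac{\partial M^{\pi_\alpha}}{\partial\alpha}=M^{\pi_\alpha}\Phi\P^u M^{\pi_\alpha}$ from Lemma~\ref{M_def}. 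Second, I would differentiate $\tfrac{\partial\rho^{\pi_\alpha}}{\partial\alpha}\mathbbm{1} = \Phi r^u + \Phi\P^u v_\phi^{\pi_\alpha} + (\P^{\pi_\alpha}-I)\tfrac{\partial v_\phi^{\pi_\alpha}}{\partial\alpha}$ term by term. The derivative of $\Phi\P^u v_\phi^{\pi_\alpha}$ produces $\Phi\P^u\tfrac{\partial v_\phi^{\pi_\alpha}}{\partial\alpha}$ (since $\P^u$ is constant in $\alpha$); the derivative of $(\P^{\pi_\alpha}-I)\tfrac{\partial v_\phi^{\pi_\alpha}}{\partial\alpha}$ splits via the product rule into $\P^u\tfrac{\partial v_\phi^{\pi_\alpha}}{\partial\alpha} + (\P^{\pi_\alpha}-I)\tfrac{\partial^2 v_\phi^{\pi_\alpha}}{\partial\alpha^2}$. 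Third, I would substitute the explicit first- and second-derivative expressions for $v_\phi^{\pi_\alpha}$ (the latter being the four-term expansion derived in the smoothness-of-value-function lemma), collect all resulting terms, and take $L_\infty$ norms using submultiplicativity together with the definitions $\norm{M^{\pi_\alpha}}_\infty\le C_m$, $\norm{\P^u}_\infty\le C_p\norm{u}_2$, $\norm{\Phi r^{\pi_\alpha}}_\infty\le\kappa_r$, $\norm{r^u}_\infty\le C_r\norm{u}_2$, $\norm{\Phi}_\infty\le 1$, and $\norm{\P^{\pi_\alpha}-I}_\infty\le C_p+1$. Finally, grouping the bounded terms reproduces the four clusters $C_p^2C_m^2\kappa_r + C_pC_mC_r$, $(C_p+1)(C_m^2C_p\kappa_r+C_mC_r)$, and $4(C_m^3C_p^2\kappa_r+C_m^2C_pC_r)$, giving the stated $L_2^\Pi$.

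The main obstacle I anticipate is the bookkeeping in the second step: the second derivative of the product $(\P^{\pi_\alpha}-I)v_\phi^{\pi_\alpha}$ spawns a proliferation of terms — each factor $M^{\pi_\alpha}$ that is differentiated injects an extra $\Phi\P^u M^{\pi_\alpha}$ factor, and the four-term second derivative of $v_\phi^{\pi_\alpha}$ compounds with the $(\P^{\pi_\alpha}-I)$ prefactor — so the challenge is organizing these into exactly the right clusters matching the target constant rather than over- or under-counting factors of $C_m$. The matching to the precise coefficient structure (why a factor of $4$ appears on the $(C_m^3C_p^2\kappa_r+C_m^2C_pC_r)$ group and why the $(C_p+1)$ prefactor attaches to the middle group) is where care is required; the individual norm bounds themselves are routine once the terms are correctly enumerated. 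I expect no conceptual difficulty beyond this combinatorial accounting, since the invertibility of $I-\Phi\P^{\pi_\alpha}$ (hence boundedness of every $M^{\pi_\alpha}$ uniformly over $\alpha$ and $\pi$) is already guaranteed by Lemma~\ref{evlemma} and the definition of $C_m$ in Table~\ref{tab:my_label}.
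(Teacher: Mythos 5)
Your proposal is correct and follows the paper's proof essentially verbatim: differentiate the first-derivative identity for $\frac{\partial\rho^{\pi_\alpha}}{\partial\alpha}\mathbbm{1}$ once more in $\alpha$, expand via the product rule together with Lemma~\ref{M_def} and Equations~\eqref{eq:derPpi}--\eqref{eq:derRpi}, and bound each resulting term in the $L_\infty$ operator norm using $C_m,C_p,C_r,\kappa_r$. The one slip is the claim $\norm{\Phi}_\infty\leq 1$: in the $L_\infty$ operator norm the paper's Lemma~\ref{theconstants} shows $\norm{\Phi}_\infty\leq 2$ (the bound $1$ holds only for the Euclidean operator norm of an orthogonal projection), and these factors of $2$ are precisely the source of the outer $4$ and the inner $4$ multiplying $\pbr{C_m^3C_p^2\kappa_r+C_m^2C_pC_r}$ in $L_2^\Pi$, so with $\norm{\Phi}_\infty\leq 1$ your bookkeeping would not reproduce the stated constant.
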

\begin{proof}
    From Lemma \ref{lipschitz_avreward}, we have
\begin{align}
\frac{\partial\rho^{\pi_\alpha}}{\partial\alpha}
\mathbbm{1} 
&= \Phi r^u + \Phi \P^uM^{\pi_\alpha}\Phi r^{\pi_\alpha} + (\P^{\pi_\alpha}-I)(M^{\pi_\alpha}  \Phi \P^u M^{\pi_\alpha}  \Phi r^{\pi_\alpha} + M^{\pi_\alpha} \Phi r^u).
\end{align}Taking the derivative again, and  repeatedly invoking Equations \eqref{eq:derPpi},\eqref{eq:derRpi} and Lemma \ref{M_def}, it follows that, 
\begin{align}
\frac{\partial^2\rho^{\pi_\alpha}}{\partial\alpha^2}
\mathbbm{1} =& 0+ \frac{\partial}{\partial\alpha}(\Phi \P^uM^{\pi_\alpha}\Phi r^{\pi_\alpha})+\frac{\partial}{\partial\alpha}\bigm((\P^{\pi_\alpha}-I)(M^{\pi_\alpha}  \Phi \P^u M^{\pi_\alpha}  \Phi r^{\pi_\alpha} + M^{\pi_\alpha} \Phi r^u)\bigm)\\
=& \Phi \P^uM^{\pi_\alpha}\Phi \P^uM^{\pi_\alpha}\Phi r^{\pi_\alpha} 
+ \Phi \P^uM^{\pi_\alpha}\Phi r^{u}+ (\P^u)(M^{\pi_\alpha}  \Phi \P^u M^{\pi_\alpha}  \Phi r^{\pi_\alpha} + M^{\pi_\alpha} \Phi r^u)\\
&\qquad+(\P^{\pi_\alpha}-I)\bigm(M^{\pi_\alpha}\Phi \P^uM^{\pi_\alpha}  \Phi \P^u M^{\pi_\alpha}  \Phi r^{\pi_\alpha} +M^{\pi_\alpha}  \Phi \P^u M^{\pi_\alpha}\Phi \P^uM^{\pi_\alpha}  \Phi r^{\pi_\alpha} \\
&\qquad+M^{\pi_\alpha}  \Phi \P^u M^{\pi_\alpha}  \Phi r^{u} + M^{\pi_\alpha}  \Phi \P^u M^{\pi_\alpha} \Phi r^u\bigm),\\
=& \Phi \P^uM^{\pi_\alpha}\Phi \P^uM^{\pi_\alpha}\Phi r^{\pi_\alpha} 
+ \Phi \P^uM^{\pi_\alpha}\Phi r^{u}+ (\P^u)(M^{\pi_\alpha}  \Phi \P^u M^{\pi_\alpha}  \Phi r^{\pi_\alpha} + M^{\pi_\alpha} \Phi r^u)\\
&\qquad+2(\P^{\pi_\alpha}-I)\bigm(M^{\pi_\alpha}\Phi \P^uM^{\pi_\alpha}  \Phi \P^u M^{\pi_\alpha}  \Phi r^{\pi_\alpha} +M^{\pi_\alpha}  \Phi \P^u M^{\pi_\alpha}  \Phi r^{u} \bigm).
\end{align}

Considering the $L_\infty$ norm of the above expression, 
\begin{align*}
\Big|{\frac{\partial^2\rho^{\pi_\alpha}}{\partial\alpha^2}}\Big|  \leq& \norm{\Phi \P^uM^{\pi_\alpha}\Phi \P^uM^{\pi_\alpha}\Phi r^{\pi_\alpha}}_\infty
+ \norm{\Phi \P^uM^{\pi_\alpha}\Phi r^{u}}_\infty + \norm{(\P^u)(M^{\pi_\alpha}  \Phi \P^u M^{\pi_\alpha}  \Phi r^{\pi_\alpha} \\
&+ M^{\pi_\alpha} \Phi r^u)}_\infty+2\norm{(\P^{\pi_\alpha}-I)\bigm(M^{\pi_\alpha}\Phi \P^uM^{\pi_\alpha}  \Phi \P^u M^{\pi_\alpha}  \Phi r^{\pi_\alpha} +M^{\pi_\alpha}  \Phi \P^u M^{\pi_\alpha}  \Phi r^{u} \bigm)}_\infty\\
\leq& 4(C^2_pC^2_m\kappa_r+C_pC_mC_r
+ (C_p+1)(C^2_mC_p\kappa_r+C_mC_r)+4(C^3_mC^2_p\kappa_r+C^2_mC_pC_r)).
\end{align*}
\end{proof}

\textbf{Remark:} The smoothness and Lipschitz constant analysis of both the average reward value functions and the infinite horizon average reward are constrained to all directions $u$, such that every $u=\pi-\pi'$ can be expressed as a difference of any two policies $\pi,\pi'\in\Pi$. Hence the smoothness and Lipschitz constants derived are restricted to the directions that can be expressed as this difference and hence are referred to as restricted smoothness/Lipschitzness. 

\subsection{Table of constants capturing MDP complexity}
We restate the table of constants and their description here for the sake of convenience. 
\begin{table}[H]
\caption{Constants capturing the MDP Complexity}
\label{tab:append_my_label}
    \centering
\begin{tabular}{c|p{6cm}cc}
\toprule
   & Definition& Range &Remark\\
\midrule
$C_m$&$\max_{\pi\in\Pi}\norm{(I-\Phi P^\pi)^{-1}}$&$\frac{2C_e|\S|}{1-\lambda}$& Lowest rate of mixing\\\\
$C_p$&$\max_{\pi,\pi'\in\Pi}\frac{\norm{P^{\pi'}-P^\pi}}{\norm{\pi'-\pi}_2}$& $[0,\sqrt{A}]$&Diameter of transition kernel\\\\
$C_r$&$\max_{\pi,\pi'}\frac{\norm{r^{\pi'}-r^\pi}_\infty}{\norm{\pi'-\pi}_2}$& $[0,\sqrt{A}]$& Diameter of reward function\\\\
$\kappa_r$&$ \max_{\pi}\norm{\Phi r^\pi }_\infty$& $[0,2]$& Variance of reward function\\\\
$\frac{L^\Pi_1}{2}$&$C_r +C_pC_m\kappa_r+ 2(C^2_mC_p\kappa_r+C_mC_r)$&$[0,k_1\sqrt{A}C^2_m]$& Restricted Lipschitz constant\\\\
$\frac{L^\Pi_2}{4}$&$C^2_pC^2_m\kappa_r+C_pC_mC_r
+ (C_p+1)(C^2_mC_p\kappa_r+C_mC_r)+4(C^3_mC^2_p\kappa_r+C^2_mC_pC_r)$&$[0,k_2AC^3_m]$&Restricted smoothness constant\\
\bottomrule
\end{tabular}
\begin{tabular}{c}
\end{tabular}
\end{table}

\begin{lemma} 
\label{theconstants}
The constants $C_p,C_r,C_m,\kappa_r$ in Table \ref{tab:append_my_label} and other operator norms are bounded as below:
\begin{enumerate} 
\item $\norm{\Phi}:= \max_{\norm{v}_\infty\leq 1}\norm{\Phi v}_\infty \leq 2$.
\item  $\norm{P^\pi} =\max_{\norm{v}_\infty\leq 1}\norm{P^\pi v}_\infty \leq \max_{\norm{v}_\infty\leq 1}\norm{v}_\infty = 1$.
\item  $\kappa_r = \max_{\pi}\norm{\Phi r^\pi}_\infty \leq 2 $
\item $C_m \leq \frac{2C_eS}{1-
\lambda}$
\item $C_p = \max_{u = \frac{\pi'-\pi}{\norm{\pi'-\pi}_2},\pi',\pi\in\Pi}\max_{\norm{v}_\infty\leq 1}\norm{P^u v}_\infty \leq \sqrt{A}.$
\item 
$C_r = \max_{u = \frac{\pi'-\pi}{\norm{\pi'-\pi}_2},\pi',\pi\in\Pi}\norm{R^u }_\infty \leq \sqrt{A}.$
\end{enumerate}
\end{lemma}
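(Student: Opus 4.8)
The plan is to treat the six bounds as essentially independent estimates, all of which follow from elementary operator-norm manipulations together with the structural facts already established: $\Phi = I - \mathbbm{1}\mathbbm{1}^\top/|\S|$, the row-stochasticity of $\P^\pi$, the reward bound $|r(s,a)|\le 1$, and the Neumann series of Lemma~\ref{evlemma}. Throughout, $\norm{\cdot}$ denotes the operator norm induced by the vector $\infty$-norm, i.e. the maximum absolute row sum.

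For items 1--3 I would argue directly. For item 1, I note that $\Phi$ has diagonal entries $1-1/|\S|$ and off-diagonal entries $-1/|\S|$, so every absolute row sum equals $(1-1/|\S|)+(|\S|-1)/|\S| = 2(1-1/|\S|)\le 2$; equivalently, $\Phi v$ subtracts the mean $\bar v$ of $v$ from each coordinate, and $|v(s)-\bar v|\le\norm{v}_\infty+|\bar v|\le 2$. Item 2 is immediate since $\P^\pi$ is row-stochastic, so $|(\P^\pi v)(s)|\le\sum_{s'}\P^\pi(s'|s)|v(s')|\le\norm{v}_\infty$, giving unit row sums. Item 3 then follows from submultiplicativity and item 1: $\kappa_r=\max_\pi\norm{\Phi r^\pi}_\infty\le\norm{\Phi}\,\max_\pi\norm{r^\pi}_\infty\le 2$, using that $r^\pi(s)=\sum_a\pi(a|s)r(s,a)$ is a convex combination of values in $[-1,1]$, hence $\norm{r^\pi}_\infty\le 1$.

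Item 4 is the one requiring the most care and is where I expect the bulk of the work. Starting from the Neumann representation $(I-\Phi\P^\pi)^{-1}=\sum_{k=0}^\infty\Phi(\P^\pi)^k$ of Lemma~\ref{evlemma}, I would apply the triangle inequality and then exploit $\Phi\mathbbm{1}=0$ to write $\Phi(\P^\pi)^k=\Phi\bigl((\P^\pi)^k-\mathbbm{1}(d^\pi)^\top\bigr)$, which lets the geometric-ergodicity bound of Assumption~1 enter. Bounding $\norm{\Phi}\le 2$ by item 1 and converting the ergodicity estimate into the operator $\infty$-norm (this conversion from the entrywise decay bound to the row-sum norm is exactly what introduces the factor $|\S|$) gives $\norm{\Phi(\P^\pi)^k}\le 2|\S|C_e\lambda^k$, and summing the resulting geometric series yields $C_m\le 2|\S|C_e/(1-\lambda)$. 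The subtle point to get right is the precise norm in which the ergodicity assumption is stated and the consequent appearance of the $|\S|$ factor.

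Finally, items 5 and 6 share one argument. Writing $u=(\pi'-\pi)/\norm{\pi'-\pi}_2$ so that $\norm{u}_2=1$, I fix a state $s$ and bound the $s$-th coordinate: for item 5, $(\P^u v)(s)=\sum_a u(a|s)\sum_{s'}\P(s'|s,a)v(s')$, and since each inner sum is at most $\norm{v}_\infty\le 1$ in absolute value, $|(\P^u v)(s)|\le\norm{u(\cdot|s)}_1$; for item 6 the same step gives $|r^u(s)|\le\norm{u(\cdot|s)}_1$. Cauchy--Schwarz then yields $\norm{u(\cdot|s)}_1\le\sqrt{|\A|}\,\norm{u(\cdot|s)}_2\le\sqrt{|\A|}\,\norm{u}_2=\sqrt{|\A|}$, because the per-state slice has $\ell_2$-norm at most that of the whole vector. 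Taking the maximum over $s$ and over admissible directions $u$ gives both $C_p\le\sqrt{|\A|}$ and $C_r\le\sqrt{|\A|}$.
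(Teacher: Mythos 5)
Your proposal is correct and follows essentially the same route as the paper: direct row-sum/mean-subtraction bounds for $\Phi$, $\P^\pi$, and $\kappa_r$; the Neumann series plus insertion of $\mathbbm{1}(d^\pi)^\top$ via $\Phi\mathbbm{1}=0$ and the geometric-ergodicity assumption (with the same $|\S|$ factor from the norm conversion) for $C_m$; and the per-state H\"older/Cauchy--Schwarz comparison of $\ell_1$ and $\ell_2$ norms for $C_p$ and $C_r$. The minor reorderings (e.g.\ bounding $\norm{u(\cdot|s)}_2\le\norm{u}_2$ rather than first shrinking the denominator to $\norm{\pi'_s-\pi_s}_2$) are equivalent.
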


\begin{proof}
    \begin{enumerate}
        \item Consider the projection matrix $\Phi$,
        \begin{align}
            \norm{\Phi}_\infty &=\max_{\norm{v}_\infty\leq 1}\norm{\Phi v}_\infty \leq \max_{{\norm{v}}_\infty} \\ &= \max_{s\in\S}\left|{v(s)-\frac{\sum_{s\in\S}v(s)}{S}}\right|\\ & \leq \max_{s\in\S}\left|{v(s)}\right|+\left|{\frac{\sum_{s\in\S}v(s)}{S}}\right| \\ &\leq 2\norm{v}_\infty =2
        \end{align}
        \item The operator norm of $\P^\pi$ is bounded as below:
        \begin{align}
            \norm{P^\pi} &=\max_{\norm{v}_\infty\leq 1}\norm{P^\pi v}_\infty \\ &\leq \max_{\norm{v}_\infty \leq 1}\norm{v}_\infty \\ & \leq 1. 
        \end{align}
        Equality is attained by the vector $v= \mathbbm{1}.$

        \item $\kappa_r$ is bounded as below:
        \begin{align}
            \kappa_r &= \max_{\pi\in\Pi}\norm{\Phi r^\pi}_\infty \\
            &= \max_{\pi\in\Pi}\left\|{r^\pi -\frac{\sum_{s\in\S}r^\pi(s)}{|\S|}\mathbbm{1}}\right\|_\infty \\ &\leq \max_{\pi\in\Pi}\|r^\pi\|_\infty + \left\|{\frac{\sum_{s\in\S}r^\pi(s)}{|\S|}\mathbbm{1}}\right\|_\infty\\
            & \leq 2
        \end{align}
        $\kappa_r$, in some sense, captures the variance of the single step reward function across the class of policies. Greater the variation of the $r$ across different actions, greater the value of $\kappa_r.$
        \item  $C_m$ is the maximum of the operator norm of the matrix $\pbr{I-\Phi\P^\pi}^{-1}$ across all policies $\pi\in\Pi$. It is determined as follows:
        \begin{align}
    (I-\Phi \P^\pi)^{-1} &= \sum_{k=0}^{\infty}(\Phi \P^\pi)^k  = \sum_{k=0}^{\infty}\Phi \pbr{\P^\pi}^k ,\qquad \text{(from Lemma  \ref{evlemma})}, \\
&\stackrel{\text{(a)}}{=}\sum_{k=0}^{\infty}\Phi (\pbr{\P^\pi}^k-\mathbbm{1}\pbr{d^\pi}^\top) ,\qquad \text{(as $\Phi\mathbbm{1}\pbr{d^\pi}^\top = 0 $)} 
\end{align}
Let $v\in\R^{|\S|}$ such that $||v||_\infty\leq 1.$ Then,
\begin{align}
\implies  \norm{(I-\Phi \P^\pi)^{-1}v}_\infty   &\leq  \sum_{k=0}^{\infty} \norm{\Phi(\pbr{\P^\pi}^k-\mathbbm{1}\pbr{d^\pi}^\top)v}_\infty\\
&\leq  \sum_{k=0}^{\infty} \norm{\Phi}_\infty\norm{(\pbr{\P^\pi}^k-\mathbbm{1}\pbr{d^\pi}^\top)v}_\infty\\
&\leq  \sum_{k=0}^{\infty} \norm{\Phi}_\infty |\S|\norm{(\pbr{\P^\pi}^k-\mathbbm{1}\pbr{d^\pi}^\top)}_\infty\norm{v}_\infty\\
&\stackrel{\text{(b)}}{\leq}  \sum_{k=0}^{\infty} 2|\S|C_e\lambda^k\norm{v}_\infty,\\
&= \frac{2C_e|\S|}{1-\lambda}
\end{align}
where $d^\pi$ represents the stationary measure associated with the transition kernel $\P^\pi$, (a) follows from the fact that the projection matrix $\Phi$ projects vectors onto a subspace orthogonal to the subspace spanned by the all ones vector $\mathbbm{1}$ and (b) is a consequence of the irreducibility and aperiodicity assumption of the Markov chain induced under all policies. More precisely, 
for any irreducible and aperiodic stochastic matrix $A$, it is true that:
\begin{equation}
    \norm{A^n-\mathbbm{1}d^\top}_\infty \leq C_e\lambda^n,
\end{equation}
for some constants $\lambda \in [0,1), C_e<\infty $, where $d$ is stationary distribution of $A$. $\lambda$ is the coefficient of mixing and captures the rate of geometric mixing of the Markov Chain. Hence, higher the value of $\lambda,$ lower the rate of mixing. 

\item $C_p$ represents the diameter of the transition kernel as a function of the policy class and can be bound as below. 
\begin{align} 
C_p :=& \max_{u = \frac{\pi'-\pi}{\norm{\pi'-\pi}_2},\pi',\pi\in\Pi}\max_{\norm{v}_\infty\leq 1}\norm{\P^u v}_\infty,
\\=&\max_{\pi',\pi\in\Pi,\norm{v}_\infty\leq 1}\frac{\norm{ (\P^{(\pi'-\pi)}v}_\infty}{\norm{\pi'-\pi}_2}, 
\\=&\max_{\pi',\pi\in\Pi,\norm{v}_\infty\leq 1}\max_{s\in\S}\frac{\abs{ (\P^{\pi'}v)(s)- (\P^{\pi}v)(s)}}{\norm{\pi'-\pi}_2}
\\\leq&\max_{\pi',\pi\in\Pi,\norm{v}_\infty\leq 1}\max_{s\in\S}\frac{\abs{ (\P^{\pi'}v)(s)- (\P^{\pi}v)(s)}}{\norm{\pi'_s-\pi_s}_2},\qquad\text{(since  $\norm{\pi'_s-\pi_s}_2\leq \norm{\pi'-\pi}_2$)},
\\=&\max_{\pi',\pi\in\Pi,\norm{v}_\infty= 1}\max_{s\in\S}\frac{\abs{ (\pi'_s)^\top\P_sv-(\pi_s)^\top\P_sv}}{\norm{\pi'_s-\pi_s}_2},\qquad\text{(where $\P_s(a,s') = P(s'|s,a), \pi_s(a):=\pi(s,a)$)},
\\=&\max_{\pi',\pi\in\Pi,\norm{v}_\infty\leq 1}\max_{s\in\S}\frac{\abs{ (\pi'_s-\pi_s)^\top\P_sv}}{\norm{\pi'_s-\pi_s}_2},\\\leq&\max_{\pi',\pi\in\Pi,\norm{v}_\infty\leq 1}\max_{s\in\S}\frac{\norm{\pi'_s-\pi_s}_1\norm{\P_sv}_\infty}{\norm{\pi'_s-\pi_s}_2},\qquad\text{(from Holder's inequality)}\\=&\max_{\pi',\pi\in\Pi}\max_{s\in\S}\frac{\norm{\pi'_s-\pi_s}_1}{\norm{\pi'_s-\pi_s}_2},
\\\leq &\sqrt{|\A|}.
\end{align}

\item $C_r$ represents the diameter of the single step reward function as a function of the policy class and can be bound as below.
\begin{align}
C_r :=& \max_{u = \frac{\pi'-\pi}{\norm{\pi'-\pi}_2},\pi',\pi\in\Pi}\norm{r^u }_\infty
\\=&\max_{\pi',\pi\in\Pi}\frac{\norm{ r^{\pi'}-r^{\pi}}_\infty}{\norm{\pi'-\pi}_2} 
\\=&\max_{\pi',\pi\in\Pi}\max_{s\in\S}\frac{\abs{ r^{\pi'}(s)- r^{\pi}(s)}}{\norm{\pi'-\pi}_2}
\\\leq&\max_{\pi',\pi\in\Pi}\max_{s\in\S}\frac{\abs{ r^{\pi'}(s)- r^{\pi}(s)}}{\norm{\pi'_s-\pi_s}_2},\qquad\text{(as $\norm{\pi'_s-\pi_s}_2\leq \norm{\pi'-\pi}_2$)},
\\=&\max_{\pi',\pi\in\Pi}\max_{s\in\S}\frac{\abs{ (\pi'_s)^\top r_s-(\pi_s)^\top r_s}}{\norm{\pi'_s-\pi_s}_2},\qquad\text{(where $r_s(a) = r(s,a),\pi_s(a)=\pi(s,a)$)},
\\=&\max_{\pi',\pi\in\Pi}\max_{s\in\S}\frac{\abs{ (\pi'_s-\pi_s)^\top r_s}}{\norm{\pi'_s-\pi_s}_2},\\\leq&\max_{\pi',\pi\in\Pi}\max_{s\in\S}\frac{\norm{\pi'_s-\pi_s}_1\norm{r_s}_\infty}{\norm{\pi'_s-\pi_s}_2},\qquad\text{(from Holder's inequality)}\\=&\max_{\pi',\pi\in\Pi}\max_{s\in\S}\frac{\norm{\pi'_s-\pi_s}_1}{\norm{\pi'_s-\pi_s}_2},\\\leq&\sqrt{|\A|}.
\end{align}
    \end{enumerate}
Since the directional derivatives considered are all within the policy class, the analysis gives rise to constants such as $C_p$ and $C_r$, which are functions of the underlying policy class. These constants capture the MDP complexity by the virtue of their definition and are an artifact of this proof technique. 
\end{proof}

\section{Convergence of average reward projected policy gradient}

\begin{lemma}
\label{conv:proj}
    For any convex set $\X\subseteq \R^d$, any point $a\in\mathcal{X}$, and any update direction $u\in\R^d$, let
$b = \textbf{Proj}_{\mathcal{X}}(a+u)$ be the projection of $a+u$ onto $\mathcal{X}$. It is true that
\begin{enumerate}
    \item $\langle u, b-a\rangle \geq \norm{b-a}_2^2.$
    \item $\innorm{c-b,u-(b-a)}\leq 0,\qquad \forall c\in\X.$
\end{enumerate}
\end{lemma}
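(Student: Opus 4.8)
The plan is to reduce both claims to the standard first-order optimality (variational inequality) characterization of the Euclidean projection onto a convex set. Writing $y = a+u$, the defining property of $b = \textbf{Proj}_{\X}(y)$ is that $b$ minimizes $\norm{y-x}_2^2$ over $x \in \X$. Since $\X$ is convex, for any $c \in \X$ the point $b + t(c-b)$ lies in $\X$ for all $t \in [0,1]$, so the scalar function $g(t) = \norm{y - b - t(c-b)}_2^2$ is minimized at $t=0$. Expanding $g(t) = \norm{y-b}_2^2 - 2t\,\innorm{y-b,c-b} + t^2\norm{c-b}_2^2$ and requiring $g'(0) \geq 0$ gives $\innorm{y-b, c-b} \leq 0$ for all $c \in \X$. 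This single inequality is the crux; everything else is algebra.

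For part 2, I would simply substitute $y = a+u$ into the variational inequality and observe that $y - b = (a+u) - b = u - (b-a)$. Hence $\innorm{u-(b-a), c-b} \leq 0$, that is, $\innorm{c-b, u-(b-a)} \leq 0$ for all $c \in \X$, which is exactly the second claim.

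For part 1, I would specialize part 2 to the feasible point $c = a \in \X$, obtaining $\innorm{a-b, u-(b-a)} \leq 0$. Using $a - b = -(b-a)$ and expanding yields $-\innorm{b-a,u} + \norm{b-a}_2^2 \leq 0$, which rearranges to $\norm{b-a}_2^2 \leq \innorm{u, b-a}$, the first claim. The only genuine content is the variational inequality established by the one-dimensional minimization argument above; both stated inequalities then follow by direct substitution and expansion, so I expect no real obstacle beyond stating that characterization carefully and keeping the sign conventions in the identity $y - b = u-(b-a)$ straight.
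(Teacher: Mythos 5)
Your proof is correct and follows essentially the same route as the paper: the paper's geometric observation that the angle between $c-b$ and $(a+u)-b$ exceeds $90$ degrees is precisely the variational inequality $\innorm{y-b,\,c-b}\leq 0$ that you derive, and both parts then follow by the same substitutions ($y-b = u-(b-a)$ for part 2, and $c=a$ for part 1). The only difference is that the paper defers the formal argument to cited references, whereas you supply the one-dimensional minimization derivation of the variational inequality explicitly, which makes your write-up self-contained.
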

\begin{proof}
    The formal proof can be found in \citep{beck2014introduction,kumar2023towards}.

    However, the proof follows trivially from the geometrical representation of projection (see Figure \ref{fig:CPL}), and the fact that the hyperplane separates a convex set from a point not in the set. 
\begin{figure}[H]
\centering
        \includegraphics[scale=0.15]{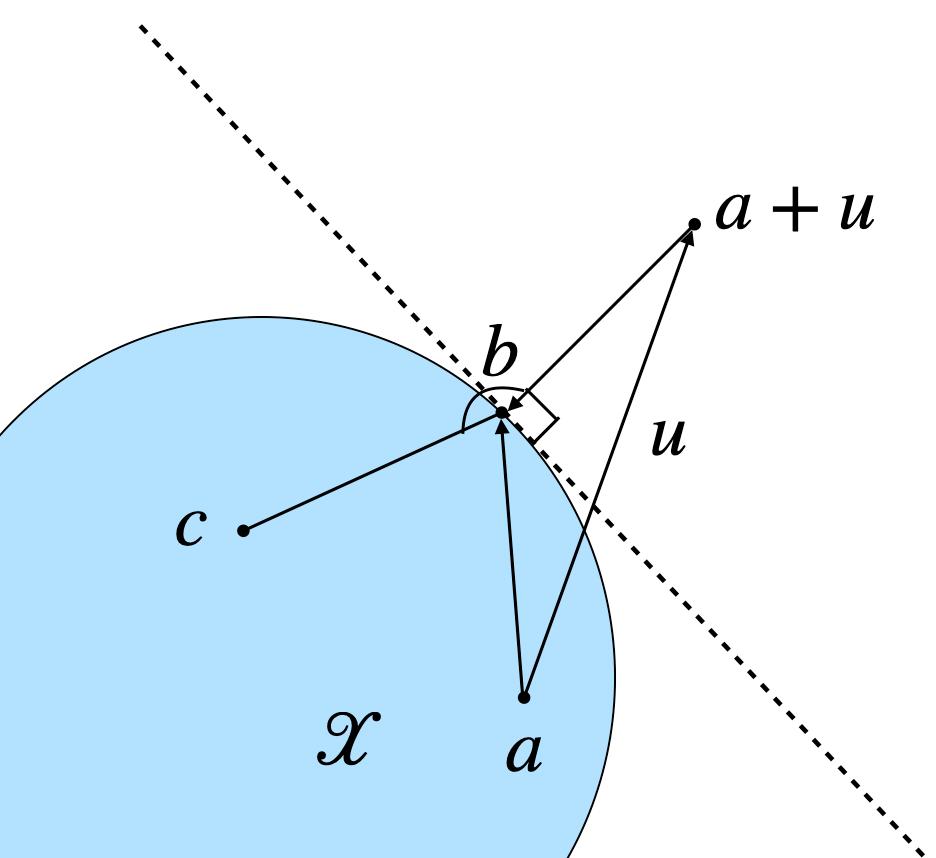}
        \caption{Convex Projection}
        \label{fig:CPL}
    \end{figure}

Intuitively, the proof of the lemma can be interpreted as below.
\begin{enumerate}
    \item Since the angle between vectors $(a-b)$ and $((a+u) - b)$ is greater than $90$ degrees, it is true that $\langle a - b, (a+u) - b\rangle \leq 0$,  which then directly implies $\norm{b-a}_2^2\leq \langle u, b-a\rangle.$
    \item The angle between vectors $(c - b)$  and $((a+u)-b)$ is greater than $90$ degrees $\forall c\in\X$, therefore $\innorm{c-b,u-(b-a)}\leq 0.$
    \end{enumerate}
\end{proof}

\subsection{Proof of Lemma 5}
\begin{lemma}\label{rs:suffIncrLemma:proof} The average reward iterates $\rho^{\pi_k}$ generated from projected policy gradient satisfy the following,
\begin{align*}
     \rho^{\pi_{k+1}} - \rho^{\pi_k}   \geq \frac{L^\Pi_2}{2}\norm*{\pi_{k+1}-\pi_k}^2, \qquad \forall k\geq 0.
\end{align*}
where $L_2^\Pi$ is the restricted smoothness constant associated with average reward $\rho^\pi.$
\end{lemma}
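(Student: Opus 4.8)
The plan is to combine the restricted smoothness of the average reward (Lemma~\ref{lemma4}) with the variational characterization of the Euclidean projection (Lemma~\ref{conv:proj}). The smoothness yields a quadratic \emph{lower} bound on the one-step improvement $\rho^{\pi_{k+1}}-\rho^{\pi_k}$ in terms of the gradient correlation $\Big\langle \frac{\partial\rho^{\pi}}{\partial\pi}\big|_{\pi_k},\,\pi_{k+1}-\pi_k\Big\rangle$ and the squared step $\norm{\pi_{k+1}-\pi_k}^2$, while the projection inequality lower-bounds that same correlation by $\tfrac{1}{\eta}\norm{\pi_{k+1}-\pi_k}^2$. Choosing the step size $\eta=1/L_2^\Pi$ then makes the two contributions combine into exactly $\tfrac{L_2^\Pi}{2}\norm{\pi_{k+1}-\pi_k}^2$.

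First I would write the ascent form of the descent lemma. Since $\pi_k,\pi_{k+1}\in\Pi$, the increment $\pi_{k+1}-\pi_k$ is a genuine policy-difference direction, so the $L_2^\Pi$-smoothness of $\rho^\pi$ applies along the segment joining $\pi_k$ to $\pi_{k+1}$, giving
\begin{equation}
\rho^{\pi_{k+1}}\ \geq\ \rho^{\pi_k}+\Big\langle \tfrac{\partial\rho^{\pi}}{\partial\pi}\big|_{\pi_k},\,\pi_{k+1}-\pi_k\Big\rangle-\frac{L_2^\Pi}{2}\norm{\pi_{k+1}-\pi_k}^2.
\end{equation}
Next I would apply Lemma~\ref{conv:proj} with $a=\pi_k$, $u=\eta\,\tfrac{\partial\rho^{\pi}}{\partial\pi}\big|_{\pi_k}$, and $b=\pi_{k+1}=\textbf{Proj}_{\Pi}(a+u)$ coming from the update \eqref{avg:pg}. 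Part~1 of that lemma gives $\langle u,b-a\rangle\geq\norm{b-a}^2$, that is,
\begin{equation}
\Big\langle \tfrac{\partial\rho^{\pi}}{\partial\pi}\big|_{\pi_k},\,\pi_{k+1}-\pi_k\Big\rangle\ \geq\ \frac{1}{\eta}\norm{\pi_{k+1}-\pi_k}^2.
\end{equation}
Substituting this into the smoothness bound yields
\begin{equation}
\rho^{\pi_{k+1}}-\rho^{\pi_k}\ \geq\ \Big(\frac{1}{\eta}-\frac{L_2^\Pi}{2}\Big)\norm{\pi_{k+1}-\pi_k}^2,
\end{equation}
and setting $\eta=1/L_2^\Pi$ gives the claimed inequality.

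The main obstacle is invoking the restricted smoothness correctly. Lemma~\ref{lemma4} only bounds the Hessian quadratic form along directions expressible as a difference of two policies, so the quadratic (descent-lemma) form of smoothness must be justified specifically for the increment $\pi_{k+1}-\pi_k$ rather than for arbitrary perturbations. I would establish it through Taylor's theorem with mean-value remainder along $\pi_\xi=\pi_k+\xi(\pi_{k+1}-\pi_k)$, $\xi\in[0,1]$: every intermediate $\pi_\xi$ lies in $\Pi$, and the fixed direction $\pi_{k+1}-\pi_k$ is a positive scalar multiple of the admissible difference $\pi_{k+1}-\pi_\xi$, so the restricted Hessian bound controls the remainder and the quadratic form scales quadratically in that scalar. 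The accompanying bookkeeping concern is tracking the factors of two so that the final coefficient is precisely $L_2^\Pi/2$, and confirming that the implied step size $\eta=1/L_2^\Pi$ is the same one used in the subsequent convergence lemmas (Lemmas~\ref{lemma7} and~\ref{lemma8}) that feed into Theorem~\ref{maintheorem}.
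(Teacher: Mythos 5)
Your proposal is correct and follows essentially the same route as the paper: the ascent form of the descent lemma combined with item 1 of Lemma~\ref{conv:proj} applied to the projected update with step size $\eta=1/L^\Pi_2$, yielding $\bigl(\tfrac{1}{\eta}-\tfrac{L^\Pi_2}{2}\bigr)\norm{\pi_{k+1}-\pi_k}^2=\tfrac{L^\Pi_2}{2}\norm{\pi_{k+1}-\pi_k}^2$. Your additional care in justifying the quadratic bound along the segment $\pi_k+\xi(\pi_{k+1}-\pi_k)$ using only the \emph{restricted} smoothness is a detail the paper leaves implicit, but it does not change the argument.
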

\begin{proof}
From the restricted smoothness of the robust return, we have \begin{align*}
        \rho^{\pi_{k+1}} &\geq \rho^{\pi_{k}} + \left\langle{ \frac{d\rho^\pi}{d\pi}\Bigg|_{\pi=\pi_k},\pi_{k+1}-\pi_k}\right\rangle - \frac{L^\Pi_2}{2}\norm{\pi_{k+1}-\pi_k}^2,\\&= \rho^{\pi_{k}} + L^\Pi_2\left\langle{ \frac{1}{L^\Pi_2}\frac{d\rho^\pi}{d\pi}\Bigg|_{\pi=\pi_k},\pi_{k+1}-\pi_k}\right\rangle - \frac{L^\Pi_2}{2}\norm{\pi_{k+1}-\pi_k}^2,\\&\geq \rho^{\pi_{k}} + L^\Pi_2\norm{\pi_{k+1}-\pi_k}^2 - \frac{L^\Pi_2}{2}\norm{\pi_{k+1}-\pi_k}^2.
\end{align*}
The last inequality follows from the projected gradient ascent policy update rule and item 1 of Lemma \ref{conv:proj}. Note that the proof only relies on the convexity of the projection set $\Pi$ and the smoothness of the objective function. 
\end{proof}

\subsection{Proof of Lemma 7}
\begin{lemma} \label{rs:GDL}
The suboptimality of a policy $\pi$ can be bounded from above as:
\begin{equation}
    \rho^{*}-\rho^\pi \leq C_{PL}\max_{\pi'\in\Pi}\left\langle\pi'-\pi,\frac{\partial\rho^{\pi}}{\partial\pi}\right\rangle,\qquad \forall \pi\in\Pi,
\end{equation}
    where $C_{PL} = \max_{\pi,s}\frac{\partial^{\pi^*}(s)}{\partial^\pi(s)}$ and $\rho^*$ is the optimal average reward.
\end{lemma}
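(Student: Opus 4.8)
The plan is to combine the performance difference lemma (Lemma~\ref{lemma6}) with the tabular form of the policy gradient theorem, reducing everything to a single change-of-measure step whose validity rests on a nonnegativity observation. First I would instantiate the average-reward policy gradient theorem in the tabular parametrization $\theta=\pi$, where $\tfrac{\partial \pi(a|s)}{\partial\pi(a'|s')}=\mathbbm{1}[s=s',a=a']$, to obtain the closed form $\tfrac{\partial\rho^\pi}{\partial\pi(a|s)}=d^\pi(s)Q^\pi(s,a)$. Substituting this into the inner product gives
$$\max_{\pi'\in\Pi}\Big\langle\pi'-\pi,\frac{\partial\rho^{\pi}}{\partial\pi}\Big\rangle=\max_{\pi'\in\Pi}\sum_{s\in\S}d^\pi(s)\sum_{a\in\A}\pbr{\pi'(a|s)-\pi(a|s)}Q^\pi(s,a).$$

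Next, because the constraint set $\Pi$ factorizes across states (each $\pi'(\cdot|s)$ ranges independently over the simplex $\Delta\A$) and because $d^\pi(s)\ge 0$, the maximization commutes with the state sum, so that the right-hand side equals $\sum_{s}d^\pi(s)\,G^\pi(s)$, where $G^\pi(s):=\max_{p\in\Delta\A}\sum_{a}\pbr{p(a)-\pi(a|s)}Q^\pi(s,a)$. The two structural facts I would record here are that $G^\pi(s)\ge 0$ for every $s$, obtained from the feasible choice $p=\pi(\cdot|s)$, and that $G^\pi(s)\ge\sum_{a}\pbr{\pi^*(a|s)-\pi(a|s)}Q^\pi(s,a)$, since $\pi^*(\cdot|s)\in\Delta\A$ is feasible as well.

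Then I would start from the performance difference lemma, $\rho^*-\rho^\pi=\sum_{s}d^{\pi^*}(s)\sum_{a}Q^\pi(s,a)\sbr{\pi^*(a|s)-\pi(a|s)}$, bound the inner sum at each $s$ by $G^\pi(s)$ to get $\rho^*-\rho^\pi\le\sum_{s}d^{\pi^*}(s)G^\pi(s)$, and finally perform the change of measure $d^{\pi^*}(s)=\tfrac{d^{\pi^*}(s)}{d^\pi(s)}d^\pi(s)$, bounding the ratio by $C_{PL}=\max_{\pi,s}\tfrac{d^{\pi^*}(s)}{d^\pi(s)}$. This yields
$$\rho^*-\rho^\pi\le\sum_{s}\frac{d^{\pi^*}(s)}{d^\pi(s)}\,d^\pi(s)\,G^\pi(s)\le C_{PL}\sum_{s}d^\pi(s)G^\pi(s)=C_{PL}\max_{\pi'\in\Pi}\Big\langle\pi'-\pi,\frac{\partial\rho^{\pi}}{\partial\pi}\Big\rangle,$$
which is the claimed bound.

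The main obstacle — and the only genuinely delicate point — is justifying that the mismatch ratio $d^{\pi^*}/d^\pi$ can be replaced by its maximum $C_{PL}$ inside the sum. This is exactly where the nonnegativity of $G^\pi(s)$ is essential: if the per-state terms could change sign, substituting the uniform bound $C_{PL}\,d^\pi(s)$ for $d^{\pi^*}(s)$ would not preserve the inequality. So establishing $G^\pi(s)\ge 0$ via the simplex-feasibility of $\pi(\cdot|s)$, together with the irreducibility assumption guaranteeing $d^\pi(s)>0$ (so the ratio is well defined and finite), is the crux of the argument; the remaining steps are routine substitutions.
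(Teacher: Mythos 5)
Your proposal is correct and follows essentially the same route as the paper's proof: both start from the performance difference lemma, exploit the per-state factorization of $\Pi$ to reduce the maximization to independent simplex problems, use the nonnegativity of each per-state maximum (via feasibility of $\pi(\cdot|s)$ itself) to justify replacing $d^{\pi^*}(s)/d^\pi(s)$ by $C_{PL}$, and then recombine via the tabular policy gradient theorem. Your explicit identification of the nonnegativity of $G^\pi(s)$ as the crux matches the step the paper highlights with its underbrace annotation.
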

\begin{proof}
Average Reward Performance Difference Lemma states that
\begin{align}
\rho^*-\rho^\pi =& \sum_{s\in\S}d^{\pi^*}(s)Q^{\pi}(s,a)[\pi^*(a|s)-\pi(a|s)]\\
\leq& \max_{\pi'}\sum_{s\in\S}d^{\pi^*}(s)Q^{\pi}(s,a)[\pi'(a|s)-\pi(a|s)]\\
=& \sum_{s\in\S}\frac{d^{\pi^*}(s)}{d^\pi(s)}d^\pi(s)\max_{\pi'_s}Q^{\pi}(s,a)[\pi'(a|s)-\pi(a|s)]\\
=& \sum_{s\in\S}\frac{d^{\pi^*}(s)}{d^\pi(s)}\underbrace{d^\pi(s)\max_{\pi'_s}Q^{\pi}(s,a)[\pi'(a|s)-\pi(a|s)]}_{\geq 0 , \quad \text{($=0$ when $\pi'_s = \pi_s$)}}\\
\leq & \sum_{s\in\S}\bigm(\max_{\pi,s}\frac{d^{\pi^*}(s)}{d^\pi(s)}\bigm)d^\pi(s)\max_{\pi'_s}Q^{\pi}(s,a)[\pi'(a|s)-\pi(a|s)]\\
= & C_{PL}\max_{\pi'}\sum_{s\in\S}d^\pi(s)Q^{\pi}(s,a)[\pi'(a|s)-\pi(a|s)]\\
\stackrel{\text{(a)}}{=} & C_{PL}\max_{\pi'}\innorm{\frac{d\rho^\pi}{d\pi}, \pi'-\pi}, 
\end{align}
where (a) follows from the average reward policy gradient theorem. 
\end{proof}

\subsection{Proof of Lemma 8}
\begin{lemma}
\label{lemma_B4}
Let $\pi_{k+1}$ represent the policy iterates obtained through projected policy gradient. For any policy $\pi'\in\Pi$, it is true that,
        \begin{equation}
         \Big\langle{ \frac{\partial \rho^{\pi_{k+1}}}{\partial \pi_{k+1}}, \pi'-\pi_{k+1}} \Big\rangle \leq  4\sqrt{|\S|}L^\Pi_2\norm{\pi_{k+1}-\pi_k}_2,
    \end{equation}
\end{lemma}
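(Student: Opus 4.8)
The plan is to pivot through the gradient evaluated at the previous iterate $\pi_k$ and split the quantity of interest into one piece controlled by the first-order optimality of the projection and one piece controlled by smoothness. Concretely, I would write
\[
\innorm{\frac{\partial \rho^{\pi_{k+1}}}{\partial \pi_{k+1}},\,\pi'-\pi_{k+1}}
=\innorm{\frac{\partial \rho^{\pi_{k+1}}}{\partial \pi_{k+1}}-\frac{\partial \rho^{\pi_{k}}}{\partial \pi_{k}},\,\pi'-\pi_{k+1}}
+\innorm{\frac{\partial \rho^{\pi_{k}}}{\partial \pi_{k}},\,\pi'-\pi_{k+1}},
\]
and bound the two summands separately. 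Two elementary facts would be used throughout. First, since $\Pi$ is a product of $|\S|$ simplices, any two policies satisfy $\norm{\pi'-\pi_{k+1}}_2\le 2\sqrt{|\S|}$: each per-state block is a difference of probability vectors, so its $\ell_2$-norm is at most its $\ell_1$-norm, which is at most $2$, and there are $|\S|$ blocks. Second, I take the step size to be $\eta=1/L^\Pi_2$, the value for which Lemma \ref{rs:suffIncrLemma:proof} is proved.

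For the second summand I would invoke item 2 of Lemma \ref{conv:proj} with $\X=\Pi$, $a=\pi_k$, $u=\tfrac{1}{L^\Pi_2}\tfrac{\partial \rho^{\pi_k}}{\partial \pi_k}$, $b=\pi_{k+1}=\textbf{Proj}_\Pi(a+u)$, and $c=\pi'$. This yields $\innorm{\pi'-\pi_{k+1},\,\tfrac{1}{L^\Pi_2}\tfrac{\partial \rho^{\pi_k}}{\partial \pi_k}-(\pi_{k+1}-\pi_k)}\le 0$, and rearranging gives $\innorm{\tfrac{\partial \rho^{\pi_k}}{\partial \pi_k},\pi'-\pi_{k+1}}\le L^\Pi_2\innorm{\pi'-\pi_{k+1},\pi_{k+1}-\pi_k}$. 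Applying Cauchy--Schwarz and the diameter bound then produces $\innorm{\tfrac{\partial \rho^{\pi_k}}{\partial \pi_k},\pi'-\pi_{k+1}}\le 2\sqrt{|\S|}\,L^\Pi_2\norm{\pi_{k+1}-\pi_k}_2$.

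For the first summand I would use Cauchy--Schwarz together with the smoothness of $\rho^\pi$ from Lemma \ref{lemma4} in its gradient-Lipschitz form, $\norm{\tfrac{\partial \rho^{\pi_{k+1}}}{\partial \pi_{k+1}}-\tfrac{\partial \rho^{\pi_k}}{\partial \pi_k}}_2\le L^\Pi_2\norm{\pi_{k+1}-\pi_k}_2$, and again the diameter bound, to get $\innorm{\tfrac{\partial \rho^{\pi_{k+1}}}{\partial \pi_{k+1}}-\tfrac{\partial \rho^{\pi_k}}{\partial \pi_k},\pi'-\pi_{k+1}}\le 2\sqrt{|\S|}\,L^\Pi_2\norm{\pi_{k+1}-\pi_k}_2$. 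Summing the two bounds gives exactly $4\sqrt{|\S|}\,L^\Pi_2\norm{\pi_{k+1}-\pi_k}_2$, and since neither bound retains any dependence on the particular $\pi'$, the inequality holds uniformly over $\pi'\in\Pi$.

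The main obstacle is precisely the gradient-Lipschitz step in the first summand. What Lemma \ref{lemma4} supplies is a \emph{restricted} smoothness bound on the Hessian quadratic form $\innorm{w,\tfrac{\partial^2\rho^\pi}{\partial\pi^2}w}$ only along directions $w$ that are differences of two policies, whereas Cauchy--Schwarz asks for a genuine operator-norm control of the gradient difference. To justify it I would write $\tfrac{\partial \rho^{\pi_{k+1}}}{\partial \pi_{k+1}}-\tfrac{\partial \rho^{\pi_k}}{\partial \pi_k}=\int_0^1 \tfrac{\partial^2\rho^{\pi_t}}{\partial\pi^2}(\pi_{k+1}-\pi_k)\,dt$ along the segment $\pi_t=\pi_k+t(\pi_{k+1}-\pi_k)$, observe that both $\pi'-\pi_{k+1}$ and $\pi_{k+1}-\pi_k$ are policy differences, and reduce the cross term $\innorm{\pi'-\pi_{k+1},\tfrac{\partial^2\rho^{\pi_t}}{\partial\pi^2}(\pi_{k+1}-\pi_k)}$ to diagonal quadratic forms by polarization. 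Care is needed here because polarization introduces the direction $\pi'-2\pi_{k+1}+\pi_k$, which is not itself a policy difference and must be handled to stay in the regime where the restricted bound applies; an alternative is to carry the explicit per-coordinate structure $\tfrac{\partial\rho^\pi}{\partial\pi(a|s)}=d^\pi(s)Q^\pi(s,a)$ directly. Everything else follows immediately from the projection geometry.
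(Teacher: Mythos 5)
Your proof is essentially identical to the paper's: the same pivot through the gradient at $\pi_k$, the same use of item 2 of Lemma \ref{conv:proj} to kill the projection term, the same Cauchy--Schwarz-plus-smoothness bound on the gradient-difference term, and the same diameter bound $\norm{\pi'-\pi_{k+1}}_2\le 2\sqrt{|\S|}$, recombining to the identical constant $4\sqrt{|\S|}L^\Pi_2$. The subtlety you flag at the end --- that Lemma \ref{lemma4} supplies only a \emph{restricted} smoothness bound along policy differences, whereas the Cauchy--Schwarz step needs an operator-norm bound on the gradient difference --- is a genuine issue, but it is present in the paper's own proof as well, which simply asserts the operator-norm bound at step (a) citing ``smoothness of average reward''; your proposed repair via the integral representation of the gradient difference makes you, if anything, more careful than the source.
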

\begin{proof}
        For all $x,y\in C$, we have:
    \begin{align}
        \left\langle \frac{\partial \rho^{\pi_{k+1}}}{\partial\pi_{k+1}}, \pi'-\pi_{k+1}\right\rangle &=    \left\langle \frac{\partial \rho^{\pi_{k+1}}}{\partial\pi_{k+1}} - \frac{\partial\rho^{\pi_k}}{\partial \pi_k} +\frac{\partial\rho^{\pi_k}}{\partial \pi_k}, \pi'- \pi_{k+1}\right\rangle \\
        &=   \left\langle \frac{\partial \rho^{\pi_{k+1}}}{\partial\pi_{k+1}} - \frac{\partial \rho^{\pi_{k}}}{\partial\pi_{k}}, \pi'-\pi_{k+1}\right\rangle + \left\langle  \frac{\partial \rho^{\pi_{k}}}{\partial\pi_{k}} , \pi'- \pi_{k+1}\right\rangle \\
         &\leq  \left\| \frac{\partial \rho^{\pi_{k+1}}}{\partial\pi_{k+1}} - \frac{\partial \rho^{\pi_{k}}}{\partial\pi_{k}} \right\| \left\| \pi'-\pi_{k+1}\right\| + \left\langle  \frac{\partial \rho^{\pi_{k}}}{\partial\pi_{k}} , \pi'- \pi_{k+1}\right\rangle \\
         &\stackrel{\text{(a)}}{\leq} L^\Pi_2 \norm{ \pi_{k+1}-\pi_k } \norm{ \pi'-\pi_{k+1}} + \left\langle  \frac{\partial \rho^{\pi_{k}}}{\partial\pi_{k}} , \pi'- \pi_{k+1}\right\rangle, \label{eq: tbc}
    \end{align}
    where (a) uses smoothness of average reward. Thus, we may continue the chain of inequalities as
    \begin{align*}
      \eqref{eq: tbc}  \! &= L^\Pi_2 \norm{ \pi_{k+1}-\pi_k } \norm{ \pi'-\pi_{k+1}}+ \left\langle  \frac{\partial \rho^{\pi_{k}}}{\partial\pi_{k}} \!-\!L^\Pi_2(\pi_{k+1}-\pi_k), \pi'- \pi_{k+1}\right\rangle \!+\! L^\Pi_2 \left\langle\pi_{k+1}-\pi_k, \pi'- \pi_{k+1}\right\rangle \\
        &\leq   2L^\Pi_2 \norm{ \pi_{k+1}-\pi_k } \norm{ \pi'-\pi_{k+1}}+ \left\langle \frac{\partial \rho^{\pi_{k}}}{\partial\pi_{k}} -L^\Pi_2(\pi_{k+1}-\pi_k), \pi'- \pi_{k+1}\right\rangle\\      &\leq   2L^\Pi_2 \norm{ \pi_{k+1}-\pi_k } \norm{ \pi'-\pi_{k+1}}+ L^\Pi_2\underbrace{\left\langle\frac{1}{L^\Pi_2} \frac{\partial \rho^{\pi_{k}}}{\partial\pi_{k}} -(\pi_{k+1}-\pi_k), \pi'- \pi_{k+1}\right\rangle}_{\leq 0,\qquad (\text{From item 2 of Lemma \ref{conv:proj}})} \\&\leq   2L^\Pi_2 \norm{ \pi_{k+1}-\pi_k } \norm{ \pi'-\pi_{k+1}}\\
        &\leq 2L^\Pi_2 \norm{ \pi_{k+1}-\pi_k }\mathbf{diam}(\Pi). 
    \end{align*}
    The diameter of the policy class $\Pi$, can be upper bounded as 
    \begin{equation}
        \mathbf{diam}(\Pi)^2 = \max_{\pi,\pi} \sum_{s}\norm{\pi'_s-\pi_s}^2_2\leq  \max_{\pi',\pi} \sum_{s}\norm{\pi'_s-\pi_s}_1^2 \leq  4S.
    \end{equation}
This yields the result.
\end{proof}

\begin{lemma}
\label{suboptimality_recursion}
    The scaled sub-optimality  $a_k := \frac{\rho^*_\Rc -\rho_k}{32 L_2^\Pi |\S| C_{PL}^2}$ follows the recursion
    \begin{equation}
        a^2_{k+1}+a_{k+1}-a_k \leq 0. 
    \end{equation}
\end{lemma}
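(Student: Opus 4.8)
The plan is to chain together the three preceding lemmas---the sufficient-increase bound (Lemma~\ref{rs:suffIncrLemma:proof}), the gradient-domination bound (Lemma~\ref{rs:GDL}), and the gradient-to-step bound (Lemma~\ref{lemma_B4})---so as to eliminate the step length $\norm{\pi_{k+1}-\pi_k}$ and obtain a self-contained relation between consecutive suboptimalities. Write $\delta_k := \rho^*_\Rc - \rho_k$ for the unscaled suboptimality, so that $a_k = \delta_k/(32 L_2^\Pi |\S| C_{PL}^2)$.

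First I would bound the suboptimality at iterate $k+1$. Applying Lemma~\ref{rs:GDL} at $\pi = \pi_{k+1}$ gives $\delta_{k+1} \leq C_{PL}\max_{\pi'\in\Pi}\innorm{\pi'-\pi_{k+1}, \partial\rho^{\pi_{k+1}}/\partial\pi_{k+1}}$, and Lemma~\ref{lemma_B4} bounds the maximized inner product by $4\sqrt{|\S|}\,L_2^\Pi\norm{\pi_{k+1}-\pi_k}$. Composing the two yields the linear-in-step estimate
\[
\delta_{k+1} \leq 4\sqrt{|\S|}\,L_2^\Pi\,C_{PL}\,\norm{\pi_{k+1}-\pi_k},
\]
which after squaring lower-bounds the squared step length by $\norm{\pi_{k+1}-\pi_k}^2 \geq \delta_{k+1}^2/(16|\S|(L_2^\Pi)^2 C_{PL}^2)$.

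Next I would invoke the sufficient-increase Lemma~\ref{rs:suffIncrLemma:proof}, namely $\rho_{k+1}-\rho_k \geq \tfrac{L_2^\Pi}{2}\norm{\pi_{k+1}-\pi_k}^2$, and note that the left-hand side equals $\delta_k - \delta_{k+1}$. Substituting the step-length lower bound from the previous paragraph gives $\delta_k - \delta_{k+1} \geq \delta_{k+1}^2/(32|\S|L_2^\Pi C_{PL}^2)$. Dividing through by the normalizing constant $32 L_2^\Pi|\S|C_{PL}^2$ converts this directly into $a_k - a_{k+1} \geq a_{k+1}^2$, i.e.\ the claimed recursion $a_{k+1}^2 + a_{k+1} - a_k \leq 0$; the constant in the definition of $a_k$ is chosen precisely so that the factor $16$ from squaring and the factor $1/2$ from the increase lemma combine to make the two normalizations coincide.

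There is no deep obstacle here; the work is entirely in the bookkeeping of constants, and the one point that requires care is ensuring the powers of $L_2^\Pi$, $|\S|$, and $C_{PL}$ line up so that the single scaling $a_k = \delta_k/(32 L_2^\Pi|\S|C_{PL}^2)$ simultaneously normalizes both the linear term (from composing Lemmas~\ref{rs:GDL} and~\ref{lemma_B4}) and the quadratic term (from Lemma~\ref{rs:suffIncrLemma:proof}). I would also double-check the direction of each inequality, since the argument alternates between a lower bound on the increment $\rho_{k+1}-\rho_k$ and an upper bound on the gap $\delta_{k+1}$, and a sign slip would invert the recursion.
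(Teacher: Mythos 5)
Your proposal is correct and follows essentially the same route as the paper: both chain Lemma~\ref{rs:GDL}, Lemma~\ref{lemma_B4}, and Lemma~\ref{rs:suffIncrLemma:proof} to eliminate $\norm{\pi_{k+1}-\pi_k}$, and the constants ($16$ from squaring times $2$ from the sufficient-increase bound giving the $32$ in the normalization) work out exactly as you describe. The only cosmetic difference is the order of substitution --- the paper plugs the step-length bound into the composed gradient bound and then squares, whereas you isolate $\norm{\pi_{k+1}-\pi_k}^2$ first --- which is algebraically equivalent.
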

\begin{proof}
From Lemma \ref{rs:GDL}, we know that,
\begin{equation}
    \rho^{*}-\rho^{\pi_{k+1}} \leq C_{PL}\left\langle\pi'-{\pi_{k+1}},\frac{\partial\rho^{{\pi_{k+1}}}}{\partial{\pi_{k+1}}}\right\rangle,\qquad \forall \pi'\in\Pi,
\end{equation}
From Lemma \ref{lemma_B4}, we know that,
\begin{equation}
    \Big\langle{ \frac{\partial \rho^{\pi_{k+1}}}{\partial \pi_{k+1}}, \pi'-\pi_{k+1}} \Big\rangle \leq  4\sqrt{|\S|}L^\Pi_2\norm{\pi_{k+1}-\pi_k}_2.
\end{equation}
From Lemma \ref{rs:suffIncrLemma:proof}, we know that, 
\begin{equation}
    \norm*{\pi_{k+1}-\pi_k}_2 \leq \sqrt{\frac{2\pbr{\rho^{\pi_{k+1}} - \rho^{\pi_k}}}{L_2^\Pi}}, \qquad \forall k\geq 0.
\end{equation}
Combining the above equations yields,
\begin{equation}
    \rho^{*}-\rho^{\pi_{k+1}} \leq \sqrt{32C_{PL}^2 L_2^\Pi |\S| \pbr{\rho^{\pi_{k+1}} - \rho^{\pi_k}}}
\end{equation}
This thus yields,
\begin{equation}
   \pbr{\frac{\rho^{*}-\rho^{\pi_{k+1}}}{32C_{PL}^2 L_2^\Pi |\S|}}^2 + \pbr{\frac{\rho^{*}-\rho^{\pi_{k+1}}}{32C_{PL}^2 L_2^\Pi |\S|}} - \pbr{\frac{\rho^{*}-\rho^{\pi_{k}}}{32C_{PL}^2 L_2^\Pi |\S|}} \leq 0.
\end{equation}
\end{proof}
A more detailed interpretation of this Lemma can be found in \citep{kumar2023towards}.

\subsection{Proof of Theorem 1}
We restate the theorem for the sake of convenience. 
\begin{theorem}
Let $\rho^{\pi_k}$ represent the average reward iterates obtained through projected policy gradient. These iterates converge to the optimal average reward according to
    \begin{equation}
        \rho^* -\rho_{\pi_{k+1}} \leq \max\left(\frac{128S L_2^\Pi C_{\textsc{PL}}^2}{k},2^{-\frac{k}{2}}(\rho^* -\rho_0)\right).
    \end{equation}
\end{theorem}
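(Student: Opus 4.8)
The plan is to reduce Theorem~\ref{maintheorem} to a single scalar recursion and then analyze that recursion by a two-phase (geometric/harmonic) argument. First I would invoke Lemma~\ref{suboptimality_recursion}: chaining the performance-difference bound of Lemma~\ref{rs:GDL}, the gradient estimate of Lemma~\ref{lemma_B4}, and the ascent lemma~\ref{rs:suffIncrLemma:proof}, and then squaring, shows that the normalized suboptimality
$$a_k := \frac{\rho^*-\rho^{\pi_k}}{32\,L_2^\Pi\,|\S|\,C_{PL}^2}\ \ge 0 \qquad \text{satisfies}\qquad a_{k+1}^2+a_{k+1}-a_k\le 0.$$
The whole theorem then follows from the purely analytic claim that any nonnegative sequence obeying this inequality satisfies $a_k\le\max\pbr{\tfrac4k,\,2^{-k/2}a_0}$; substituting the definition of $a_k$ at index $k+1$ and relaxing $4/(k+1)\le 4/k$ and $2^{-(k+1)/2}\le 2^{-k/2}$ recovers the stated bound, with the constant $128=4\cdot 32$ coming from the normalization.

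To analyze the recursion I would first note that $a_{k+1}^2\ge 0$ forces $a_{k+1}\le a_k$, so the sequence is non-increasing and there is a last index $m$ with $a_m\ge 1$ (the cases $a_0<1$ and ``$a_k\ge1$ for all $k$'' are degenerate and handled directly). For the \emph{geometric phase} $j\le m$, the bound $a_j\ge 1$ gives $a_j^2\ge a_j$, hence $a_{j-1}\ge a_j^2+a_j\ge 2a_j$; iterating yields $a_j\le 2^{-j}a_0$. For the \emph{harmonic phase} $j\ge m+1$ (where $a_j<1$), the key manipulation is
$$\frac{1}{a_{j}}-\frac{1}{a_{j-1}}=\frac{a_{j-1}-a_j}{a_{j-1}a_j}\ge\frac{a_j^2}{(a_j+a_j^2)a_j}=\frac{1}{1+a_j}\ge\frac12,$$
using $a_{j-1}\ge a_j+a_j^2$ and $a_j<1$. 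Telescoping from the first harmonic index gives $\tfrac{1}{a_k}\ge 1+\tfrac{k-m-1}{2}$, i.e. $a_k\le\tfrac{2}{k-m+1}$.

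Stitching the two phases uses the crucial fact that $m$ is only logarithmically large: since $a_m\ge1$ and $a_m\le 2^{-m}a_0$, we get $2^{m}\le a_0$. I would then split on $k$: for $k\le m$ the geometric bound gives $a_k\le 2^{-k}a_0\le 2^{-k/2}a_0$; for $k\ge 2m$ we have $k-m\ge k/2$, so $a_k\le\tfrac{2}{k-m+1}\le\tfrac4k$; and for the intermediate range $m<k<2m$, monotonicity plus the geometric bound at $m$ gives $a_k\le a_m\le 2^{-m}a_0\le 2^{-k/2}a_0$ (since $m>k/2$). Together these three cases give $a_k\le\max(4/k,\,2^{-k/2}a_0)$.

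I expect the main obstacle to be the harmonic-phase estimate and its joining to the geometric phase, not the algebra of the reduction. The delicate point is selecting the right lower bound on $1/a_j-1/a_{j-1}$: the naive estimate $\ge a_j/a_{j-1}$ cannot be bounded below by a constant, and one must instead substitute $a_{j-1}\ge a_j+a_j^2$ and exploit $a_j<1$ to extract the clean constant $1/2$. The second subtlety is that the $\tfrac{2}{k-m+1}$ estimate degrades once $m$ is comparable to $k$, which is precisely why the logarithmic control $2^m\le a_0$ and the intermediate-range case (absorbed into the $2^{-k/2}a_0$ term) are needed to produce a bound with a clean $1/k$ leading term valid for every $k$.
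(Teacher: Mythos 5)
Your proposal is correct, and the reduction step is exactly the paper's: you chain Lemmas \ref{rs:GDL}, \ref{lemma_B4} and \ref{rs:suffIncrLemma:proof} into Lemma \ref{suboptimality_recursion} to obtain the scalar recursion $a_{k+1}^2+a_{k+1}-a_k\le 0$ for the normalized suboptimality, with the constant $128=4\cdot 32$ emerging from the normalization just as in the appendix. Where you genuinely diverge is in how you solve that recursion. The paper divides by $a_{k+1}a_k$, telescopes $\frac{1}{a_{K+1}}-\frac{1}{a_0}\ge\sum_{k=0}^{K}\frac{a_{k+1}}{a_k}$, and then argues by a counting dichotomy: either at least half of the ratios $a_{k+1}/a_k$ exceed $\tfrac12$ (forcing the sum, hence $1/a_{K+1}$, to grow linearly and giving the $4/K$ term) or at least half are below $\tfrac12$ (so the product $a_{K+1}/a_0$ is at most $2^{-K/2}$). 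Your argument instead partitions time deterministically at the last index $m$ with $a_m\ge 1$: a geometric phase where $a_{j-1}\ge a_j^2+a_j\ge 2a_j$ gives $a_j\le 2^{-j}a_0$, and a harmonic phase where $\frac{1}{a_j}-\frac{1}{a_{j-1}}\ge\frac{1}{1+a_j}\ge\frac12$ (your lower bound is valid since $x\mapsto\frac{1}{a_j}-\frac{1}{x}$ is increasing and $a_{j-1}\ge a_j+a_j^2$) gives $a_k\le\frac{2}{k-m+1}$; the control $2^m\le a_0$ and the three-way split on $k$ then stitch the phases into the same $\max\pbr{4/k,\,2^{-k/2}a_0}$ bound. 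Both routes yield identical constants. The paper's dichotomy is shorter and needs no crossover index or case analysis on the range of $k$, but it obscures when each regime is active; your version makes the dynamics explicit (geometric decay until $a_k$ drops below $1$, which happens within $\log_2 a_0$ steps, then $O(1/k)$ decay) at the cost of the extra stitching cases and the degenerate cases ($a_0<1$, or $a_k\ge1$ for all $k$, or $a_k=0$), which you correctly note can be handled directly.
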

\begin{proof}
    From Lemma \ref{suboptimality_recursion}, we have the following sub-optimality recursion,
    \begin{align*}
      &a^2_{k+1} + a_{k+1} - a_k \leq 0\\
    \implies & \frac{a_{k+1}}{a_k} + \frac{1}{a_k}-\frac{1}{a_{k+1}}\leq 0,\qquad\text{(dividing by $a_{k+1}a_k$)}\\
    \implies &\sum_{k=0}^{K} \Bigm[\frac{a_{k+1}}{a_k} + \frac{1}{a_k}-\frac{1}{a_{k+1}}\Bigm]\leq 0,\qquad\text{}\\
    \implies &\sum_{k=0}^{K}\frac{a_{k+1}}{a_k} + \frac{1}{a_0}-\frac{1}{a_{K+1}}\leq 0,\qquad\text{(telescoping sum)}\\
    \implies &  \frac{1}{a_{K+1}}-\frac{1}{a_0}\geq \sum_{k=0}^{K}\frac{a_{k+1}}{a_k},\qquad\text{(re-arranging)}.
    \end{align*}
Now, begin by case by case.\\
\textbf{Case 1:}
There exist $K/2$ terms such that 
$\frac{a_{k+1}}{a_k}\geq \frac{1}{2}$, that is $\{k\mid \frac{a_{k+1}}{a_k}\geq \frac{1}{2}\}$. This implies
\begin{align*}
    &\sum_{k=0}^{K}\frac{a_{k+1}}{a_k} \geq \frac{K}{4}\\
\implies &\frac{1}{a_{K+1}}-\frac{1}{a_0} \geq \frac{K}{4}\\
\implies &\frac{1}{a_{K+1}}\geq \frac{1}{a_0} + \frac{K}{4}\\
\implies &a_{K+1}\leq \frac{1}{\frac{1}{a_0} + \frac{K}{4}}\leq \frac{4}{ K}.
\end{align*}
\textbf{Case 2:}
There exist $K/2$ terms such that 
$\frac{a_{k+1}}{a_k}\leq \frac{1}{2}$, that is $\abs{\mathcal{K}}\geq \frac{K}{2}$ where $ \mathcal{K}:=\{k\mid \frac{a_{k+1}}{a_k}\leq \frac{1}{2}\}$. By definition, we have 
\begin{align*}
\frac{a_{K+1}}{a_0} &= \prod_{k=0}^{K}\frac{a_{k+1}}{a_k},\\
&= \prod_{k\in\mathcal{K}}\frac{a_{k+1}}{a_k}\prod_{k\notin\mathcal{K}}\frac{a_{k+1}}{a_k},
&\qquad \text{(by our assumption $\abs{\mathcal{K}}\geq \frac{K}{2}$)},\\
&\leq \left(\frac{1}{2}\right)^{\frac{K}{2}}\prod_{k\notin\mathcal{K}}\frac{a_{k+1}}{a_k},
&\qquad \text{(from Lemma \ref{rs:suffIncrLemma:proof} $a_{k+1}\leq a_k$)},\\
&\leq \left(\frac{1}{2}\right)^{\frac{K}{2}}.
\end{align*}
From both cases, it can be concluded that either
$a_{k+1} \leq \frac{4}{K} $ or $a_{k+1} \leq a_02^{-\frac{K}{2}}$. Hence, we obtain
   \[ a_{k+1} \leq \max\left\{\frac{4}{k},a_02^{-\frac{K}{2}}\right\}.\]
Appropriately substituting for $a_k$, we get the desired result. Note that the constants $\frac{4}{K}, 2^{-\frac{K}{2}}$ can be further optimized by appropriately choosing constants such as $\frac{1}{2},\frac{K}{2}$, in the proof.  The solution to the above sub-optimality recursion is taken from \citep{PGConvRate,kumar2023towards}, and presented here for the sake of completeness.
\end{proof}

\section{Extension to Discounted Reward MDPs}
\label{appendix_C}
Our techniques extends to discounted reward MDPs which has state-of-the-art iteration complexity of $O(\frac{SA}{(1-\gamma)^5\epsilon})$ \citep{PGConvRate} but showcases no dependence on the hardness of the MDP. Our approach improves on this bound, yielding $O(\frac{SL^\Pi_2}{\epsilon})$ iteration complexity, where $L^\Pi_2=C^2_p\widehat{C}^2_m\kappa_r+C_p\widehat{C}_mC_r
+ (C_p+1)(\widehat{C}^2_mC_p\kappa_r+\widehat{C}_mC_r)+4(\widehat{C}^3_mC^2_p\kappa_r+\widehat{C}^2_mC_pC_r)$,where
$\widehat{C}_m := \norm{(I-\gamma \P^\pi)^{-1}}$. It is straightforward to see that $\widehat{C}_m \leq \frac{1}{1-\gamma}$.
Hence, the iteration complexity improves to an $O(\frac{L^\Pi_2S}{(1-\gamma)^5\epsilon})$, since constants such as $\kappa_r \leq 2$ and $C_p, C_r \leq \sqrt{|\A|}$. Further, the approach considered in this paper provides faster convergence rates for MDPs with low complexity, i.e., MDPs that have low values of $C_p$ or $C_r$.
The exact performance bounds can be obtained from an approach similar to the one outlined in \citep{kumar2023towards}, where $L^\Pi_2$ represents the restricted smoothness constant of the discounted return $\rho^\pi_\gamma$. This constant can be derived through a process analogous to the one described in this paper.

For instance, consider a trivial MDP for which $C_p =0$ or $\kappa_r=0$ (implies $C_r=0$), i.e., an MDP where the transition kernel is independent of the action enacted. For this trivial MDP every policy is an optimal policy. The state of the art convergence guarantees \citep{PGConvRate}, still requires $O(SA\epsilon^{-1})$ iterations for $\epsilon$ close optimal policy. Whereas, the performance bounds presented in this paper predict $O(1\epsilon^{-1})$ iterations for convergence.

\section{Simulation Details}
\label{appendix_D}
We consider MDPs of size 20, i.e., $\pbr{|\S|,|\A|}=\cbr{20,20}$. The rewards corresponding to these transition kernels are randomly generated and are held constant. Three sets of transition kernels are randomly generated, each corresponding to a different value of $C_p$. We run the policy gradient algorithm considered in this paper for each MDP setting and plot the overall change in average reward as a function of iterations. 
\begin{figure}[H]
    \centering
    \includegraphics[scale=0.55]{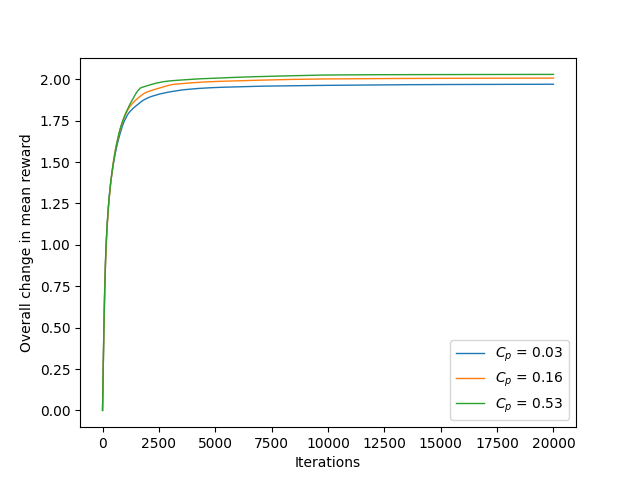}
    \caption{Convergence as a function of $C_p$}
    \label{figure:conv_Cp}
\end{figure}
Figure \ref{figure:conv_Cp} indicates that policy gradient in MDPs corresponding to lower values of $C_p$ converges relatively faster than for MDPs corresponding to higher values of $C_p$. Hence, the performance bounds obtained in Theorem 1 are in some sense, more representative of the empirical convergence trend of the policy gradient algorithm.

In Table \ref{tab:append_my_label}, we notice that the upper bound for $C_p$ and $C_r$ are of order $\sqrt{|\A|}$. Here, we resent additional details regarding the calculation of $C_r$ and $C_p$. Recall their definitions from Table \ref{tab:append_my_label}. Since there seems to be no closed formula for $C_r, C_p$, their value cannot be well approximated by using simple black-box optimizer. Thus, we perform a Bayesian search across our policy space in order to find $\pi, \pi'$ that maximize the expressions under consideration. Furthermore, in order to reduce our search space, we limit ourselves to policies which are constant across states, meaning $\pi_s(a) = \pi_{s'}(a)$ for all states $s,s'$. These policies can be utilized to realize higher values for $C_p, C_r$. Note that while using such searching methods does not guarantee us convergence to the maximal value, it does allow us to obtain a good approximation which allows us to compare the scaling of these coefficients with $(\S,\A)$.
It is possible to construct specific MDPs, where $C_p$ and $C_r$ can realize a value of $\sqrt{|\A|}.$ However, these specific MDPs rarely occur in practice. In this section, we examine the scaling of $C_r, C_p$ when compared to $\sqrt{A}$ across varying state and action space cardinalities. We consider some practical settings such as deterministic MDPs and sparse rewards. 

Figure \ref{variation_Cp} represents the variation of $C_p$ as a function of state and action spaces. Specifically, a comparison is made between two types of probability kernels: one generated from a uniform distribution and another generated by random permutations of the identity matrix (ensuring the MDP remains irreducible). For each instance, $C_p$ is calculated for five randomly generated MDPs. It is observed that for both deterministic and sparse MDPs, there appears to be no scaling of $C_p$ with $\sqrt{A}$.

\begin{figure}[H]
    \centering
    \subfigure[$C_p$ as a function of uniform kernels]{
    \includegraphics[width=0.48\textwidth]{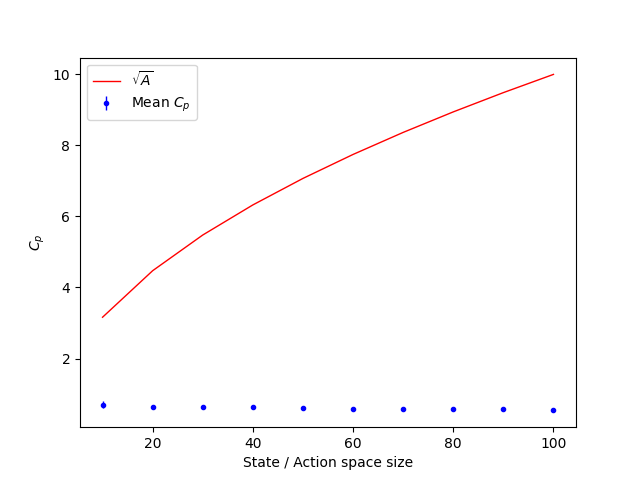}
    \label{fig1}
    }
    \hfill
    \subfigure[$C_p$ as a function of sparse kernels]{
    \includegraphics[width=0.48\textwidth]{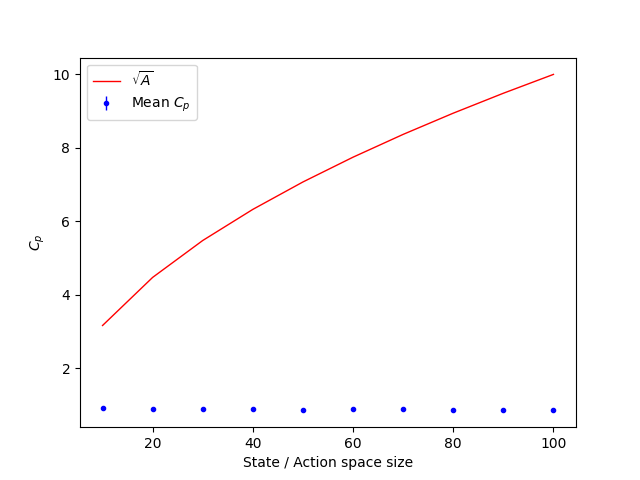}
    \label{fig2}
    }
    \caption{Variation of $C_p$ with state and action space cardinality.}
    \label{variation_Cp}
\end{figure}

An additional experiment is conducted to analyze randomly generated and sparse rewards, investigating the scaling behavior of $C_r$ with respect to $A$ in this context. The results of this experiment are depicted in Figure \ref{fig:exp_CR}. It is observed that in both scenarios, $C_r$ does not exhibit scaling with $A$.

Similar to $C_p$, the upper bound on $C_r$ is tight and can be achieved when the reward matrix is dense with just two possible values. However, real-world reward matrices are often random or sparse. From both experiments, it is inferred that in many practical settings, such as MDPs with sparse rewards and deterministic transitions, the scaling with $A$ is less pronounced than in the worst-case scenario. The code for these experiments can be accessed at \url{https://anonymous.4open.science/r/avpg_convergence-3F03}.

\begin{figure}[H]
    \centering
    \subfigure[Convergence rate versus state space cardinality]{
    \includegraphics[width=0.48\textwidth]{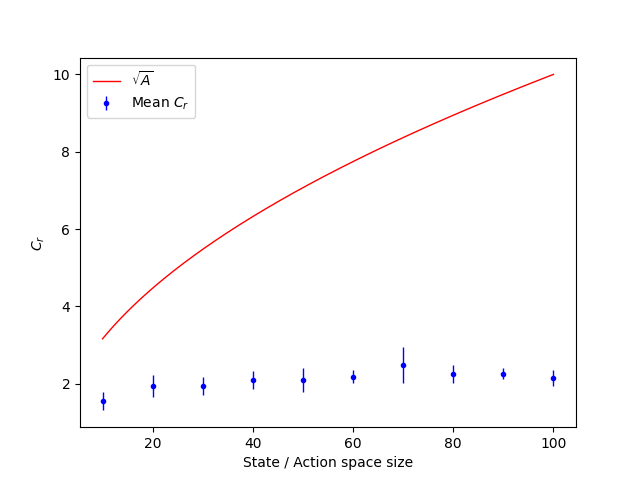}
    \label{fig1e}
    }
    \hfill
    \subfigure[Convergence rate versus diameter of reward]{
    \includegraphics[width=0.48\textwidth]{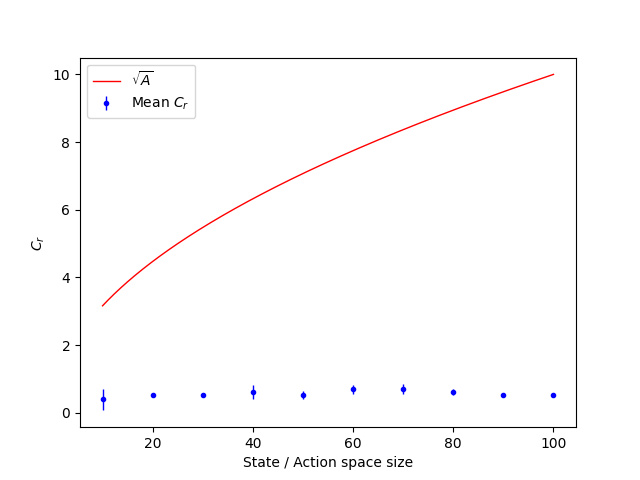}
    \label{fig2f}
    }
    \caption{Improvement in average reward as a function of MDP complexity}
    \label{fig:exp_CR}
\end{figure}

\end{document}